\documentclass{article}

\usepackage[preprint]{neurips_2026}

\usepackage[utf8]{inputenc} 
\usepackage[T1]{fontenc}    
\usepackage{hyperref}       
\usepackage{url}            
\usepackage{booktabs}       
\usepackage{amsfonts}       
\usepackage{nicefrac}       
\usepackage{microtype}      
\usepackage{xcolor}         
\usepackage{subcaption}
\usepackage{graphicx}
\usepackage{enumitem}

\usepackage{amsmath}
\usepackage{amssymb}
\usepackage{amsthm}
\newtheorem{theorem}{Theorem}
\newtheorem{lemma}{Lemma}
\newtheorem{definition}{Definition}
\newtheorem{remark}{Remark}
\newtheorem{assumption}{Assumption}
\title{\fontsize{16.5pt}{19.8pt}\selectfont Structural Correspondence and Universal Approximation in Diagonal plus Low-Rank Neural Networks}

\author{Ying Chen, Aoxi Li, Jihun Kim, and Javad Lavaei\\Department of Industrial Engineering \& Operations Research\\University of California, Berkeley, CA 94720\\
\texttt{\{ying-chen, aoxi, jihun.kim, lavaei\}@berkeley.edu}
}

\begin{document}

\maketitle

\begin{abstract}
The massive computational costs of scaling modern deep learning architectures have driven the widespread use of parameter-efficient low-rank structures, such as LoRA and low-rank factorization. However, theoretical guarantees for their expressive power are less explored, often relying on restrictive priors like a pretrained base matrix, ReLU activations or non-verifiable singularity conditions. We first investigate the limits of neural networks constrained strictly to low-rank manifolds without pretrained dense priors. We demonstrate a theoretical paradox: while purely rank-1 layers can exactly interpolate arbitrary scalar datasets, they collapse for function approximations. To overcome this bottleneck without surrendering parameter efficiency, we introduce a unified \textit{Structural Correspondence} framework. We prove that augmenting low-rank layers with only a minimal sparse diagonal component, say a Diagonal plus Low-Rank (DLoR) structure, is sufficient to reach Universal Approximation. We show that any full-rank transformation can be exactly reconstructed using these DLoR components by trading off network width (additive decomposition) or depth (multiplicative decomposition). By tracking asymptotic Taylor remainders, we prove that DLoR neural networks fully restore the Universal Approximation Theorem for general activation functions. Finally, we establish that multiplicative depth provides superior parameter-to-expressivity scaling compared to additive width. Our results show that dense matrices and specific activation functions are not topological prerequisites for universal expressivity.
\end{abstract}

\vspace{-0.5em}
\section{Introduction}\label{sec:intro}\vspace{-0.5em}
\begin{figure}[t]
    \centering
    \vspace{-0.4em}
    \begin{subfigure}[b]{0.35\textwidth}
        \centering
        \includegraphics[width=0.765\linewidth]{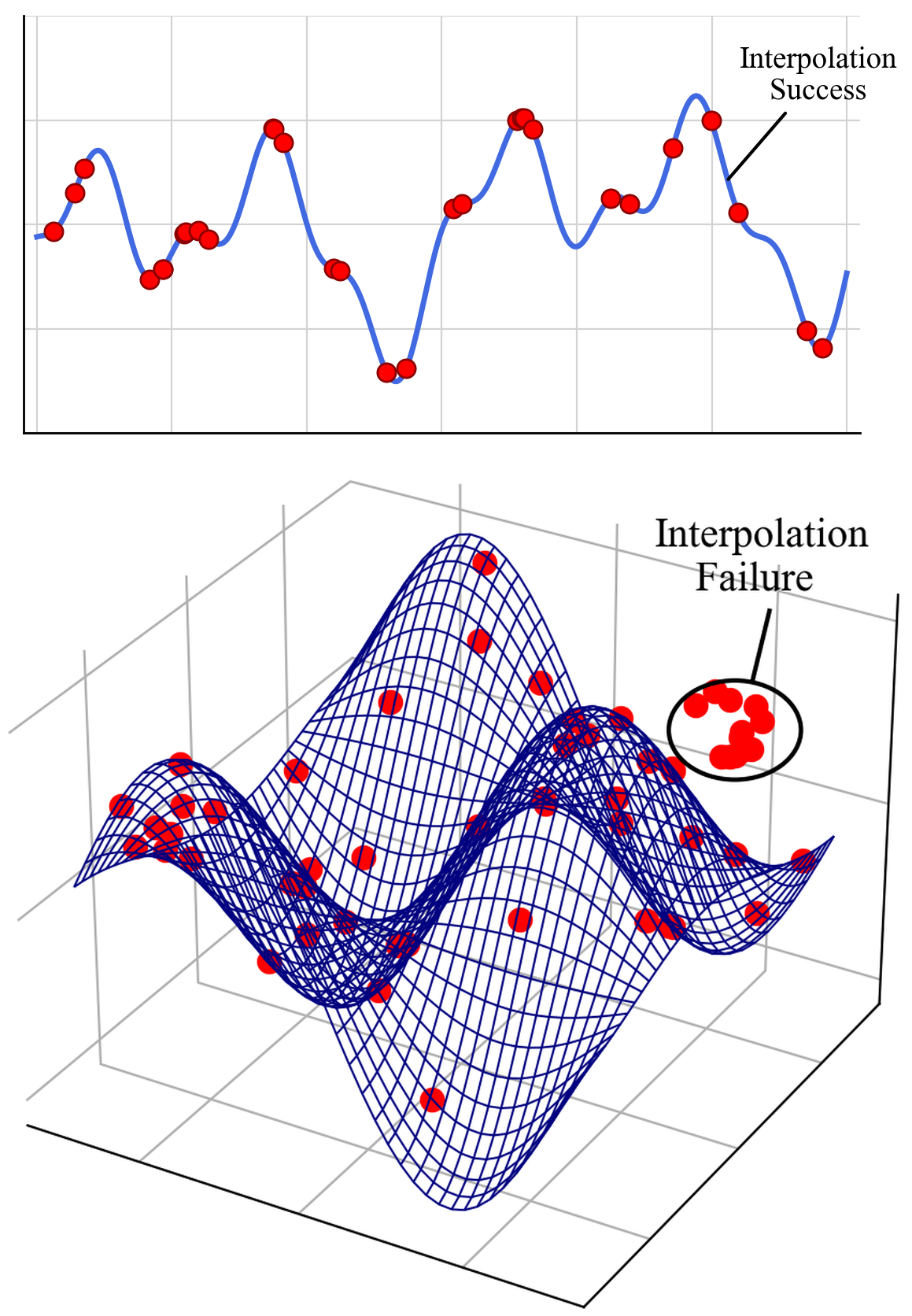}
        \caption{Interpolation by rank-1 NN}
        \label{fig:interpolation}
    \end{subfigure}
    \hfill
    \begin{subfigure}[b]{0.63\textwidth}
        \centering
        \includegraphics[width=0.765\linewidth]{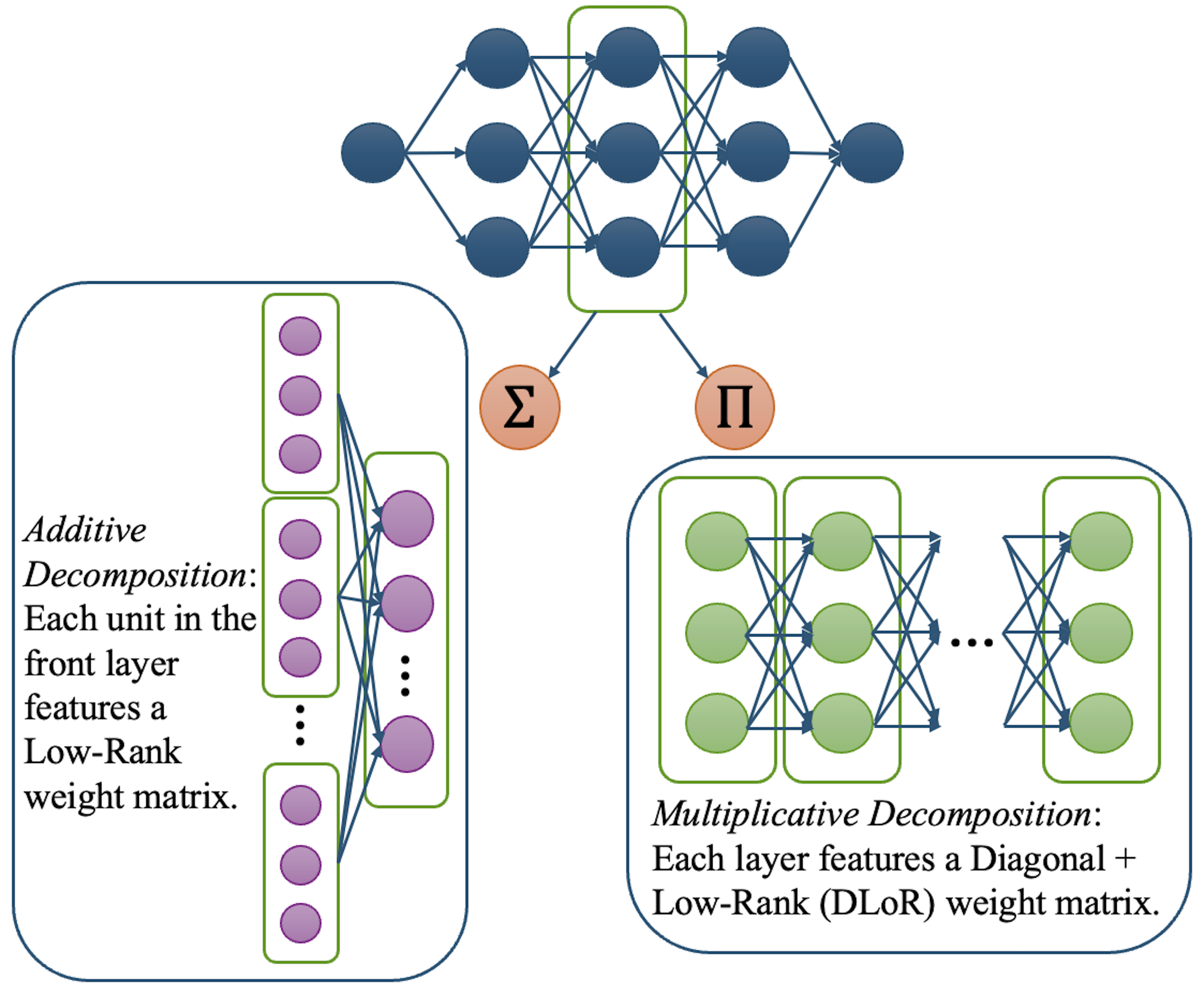}
        \caption{Additive and multiplicative decomposition}
        \label{fig:deep_wide}
    \end{subfigure}
    \caption{\small Overview of our work. (a) Rank-1 neural networks (NN) can perfectly interpolate any set of scalar outputs, but inherently fail for multi-dimensional function approximations. (b) We use rank-deficient structures by either expanding width (with additive decomposition) or extending depth (with multiplicative decomposition). While the former can rely on purely low-rank weight matrices, the latter necessitates DLoR weight matrices.}
    \label{fig:mainfigure}\vspace{-2.4em}
\end{figure}

The era of modern foundation models has been defined by the aggressive scaling of deep neural network architectures to billions, or even trillions of parameters \citep{lecun2015deep, openai2023gpt4}. While highly over-parameterized neural networks are appealing due to their favorable optimization landscapes and global convergence properties \citep{allenzhu2019convergence, baldi2024shallow}, their immense scale imposes prohibitive memory and computational bottlenecks \citep{rajbhandari2020zero, strubell2019energy}. This reality has driven a critical need for parameter and memory-efficient training frameworks \citep{lialin2023scaling}, as well as robust network compression techniques to alleviate the burden of deployment.

Techniques such as Low-Rank Adaptation (LoRA) \citep{hu2022lora, lialin2023relora}, sparse plus low-rank matrix factorization \citep{chandrasekaran2009sparse, ding2023sparse}, and parameter-efficient adapter modules \citep{houlsby2019parameter} have become the preferred standard for efficiency. These low-rank structures are highly favored because they drastically reduce computational overhead while leveraging the phenomenon that pre-trained neural networks typically operate within a low intrinsic dimensionality \citep{aghajanyan2021intrinsic}. By operating on constrained low-rank manifolds, these methods can yield empirical fine-tuning performance that rivals full-rank training \citep{zhang2026sparsityawarelowrankrepresentationefficient}. However, purely low-rank structures restrict parameters to a severely constrained low-dimensional subspace. Specifically, low-rank components are restricted to capturing only the dominant singular value spectrum, entirely ignoring variations outside this subspace. This prompts a question: what exact topological or geometric bottlenecks constrain the expressive power of severely rank-deficient neural networks?

Recent literature has attempted to justify the expressive power of parameter-efficient models \citep{zeng2023expressive, giannou2023expressive, li2025approximation}. However, the theoretical guarantees provided by these works remain limited. Current claims rely on the presence of a pretrained dense base matrix, the specific use of the ReLU or Swish activation function, or non-verifiable singularity conditions regarding the weight matrices. In contrast, we ask a more fundamental question: can a strictly constrained neural network achieve the Universal Approximation Theorem (UAT) \citep{hornik1989multilayer, kidger2020universal}, without these restrictive priors and assumptions?

We first investigate this by formalizing the interpolation limits of purely low-rank architectures under general activation functions. We demonstrate a paradox: while a single purely rank-1 hidden layer exhibits a misleading exact interpolation capability for scalar datasets, this capacity fundamentally collapses when extended to multi-dimensional function approximation. We show that strict rank-1 mappings suffer from a geometric phenomenon that we term \textit{orthogonal blindness}. Because the network evaluates inputs strictly through a 1-dimensional projection, it becomes completely invariant to variations in the data orthogonal to this projection vector, rendering it mathematically incapable of spanning higher-dimensional spaces. 

Motivated by recent empirical frameworks like SLTrain \citep{han2024sltrain}, which demonstrate that combining low-rank factors with randomly supported sparse matrices successfully recovers full-rank performance, we hypothesize that the expressive power of low-rank neural networks can be fully restored by a limited sparse augmentation. {We consider a matrix structure of the form $M = \alpha I + U V^T$, a special instance of a sparse plus low-rank matrix that we term a \textbf{Diagonal plus Low-Rank (DLoR)} component.} It introduces a minimal sparse diagonal component to the low-rank structure $\operatorname{rank}(UV^T)\leq r$, where $r$ is any arbitrary rank constraint. Unlike a dense prior, this $\alpha I$ component requires only one additional parameter to store. Yet, it provides the vital intrinsic residual pathway required to capture the orthogonal spectrum, allowing information to bypass the low-dimensional subspace and breaking the geometric bottleneck of orthogonal blindness. 

Crucially, we establish a \textbf{Structural Correspondence} framework for decomposing any full-rank transformation using DLoR components. We prove that restructuring a weight matrix via an \textit{additive decomposition} naturally translates to expanding the network's width, mapping a high-rank linear operation onto a parallel summation of nonlinear layers. Conversely, we prove that a \textit{multiplicative decomposition} corresponds to extending the network's depth. This structural correspondence provides a framework for post-training or fine-tuning large models, allowing the parameter space to be effectively decoupled and optimized in parallel.

A summary of our theoretical framework, contrasting the interpolation failure of purely rank-1 neural networks with our DLoR architectures, is provided in Figure~\ref{fig:mainfigure}. Specifically, our main contributions are as follows:
\begin{itemize}[leftmargin=0.5cm, itemsep=0pt, topsep=0pt]
    \item \textbf{Orthogonal Blindness of Strict Low-Rank Neural Networks:} We formalize the expressive limits of purely rank-1 neural networks. We prove that while they can interpolate any scalar targets, they fundamentally suffer from orthogonal blindness for general continuous function approximation. Thus, purely low-rank representations are insufficient for function approximation.
    \item \textbf{The Structural Correspondence Framework:} We introduce both additive and multiplicative decomposition methods {using DLoR components.} We prove that rank deficiency can be compensated for by either expanding network width (additive) or extending network depth (multiplicative).
    \item \textbf{Universal Approximation via DLoR Structures}: We prove that introducing {only a minimal $\mathcal{O}(1)$ sparse diagonal to form a DLoR structure} perfectly overcomes orthogonal blindness. We show that Deep {DLoR} Neural Networks with bounded width restore uniform approximation capabilities for continuous functions with any nonaffine continuous activation functions.
\end{itemize}

\vspace{-0.5em}
\section{Related Work}\label{sec:related_work}
\vspace{-0.5em}
\paragraph{Universal Approximation and the Depth-Width Trade-off.}
The Universal Approximation Theorem (UAT) establishes that neural networks are dense in the space of continuous functions on a compact domain, $C(\mathcal{K}, \mathbb{R}^m)$, with respect to the uniform norm. Classic results proved that a single hidden layer utilizing any non-polynomial activation function can achieve uniform $\epsilon$-approximation \citep{hornik1989multilayer, cybenko1989approximation, pinkus1999approximation}. However, these shallow architectures suffer from the curse of dimensionality: to approximate functions in a bounded domain $\mathcal{K} \subset \mathbb{R}^n$, shallow architectures require a prohibitive width scaling of $\mathcal{O}(\epsilon^{-n})$ \citep{yarotsky2017error}. Subsequent theoretical breakthroughs established that depth exponentially increases expressivity and provides fundamentally superior parameter efficiency compared to wide, shallow neural networks \citep{montufar2014number, cohen2016expressive, rolnick2018power}. \citep{telgarsky2016benefits} and \citep{eldan2016power} proved that certain highly oscillatory functions can be expressed with a deep network of $\mathcal{O}(L)$ depth and $\mathcal{O}(1)$ width, but necessitate an intractable $\mathcal{O}(2^L)$ width if restricted to shallow architectures. When approximating the class of highly smooth functions within the Sobolev space $\mathcal{W}^{s, \infty}([0,1]^n)$ (functions with $s$ bounded derivatives), deep ReLU neural networks strictly bound the required parameter complexity to $\mathcal{O}(\epsilon^{-n/s} \log(1/\epsilon))$ with a depth scaling of exactly $\mathcal{O}(\log(1/\epsilon))$ \citep{yarotsky2017error, petersen2018optimal}. In the regime of strictly bounded width, research demonstrated that deep ReLU neural networks of minimal width $n+1$ are sufficient to approximate any scalar-valued function in $C(\mathcal{K}, \mathbb{R})$ \citep{lu2017expressive, hanin2017approximating}. \citep{kidger2020universal} generalized this mapping framework, proving that for \textit{every} non-affine continuous activation function, a strict maximum width of $n+m+2$ is sufficient to approximate any continuous target $f: \mathbb{R}^n \to \mathbb{R}^m$ under a large depth. However, all these foundational proofs inherently assume that the network's internal linear transformations rely on dense, full-rank weight matrices. Our work bridges the gap between classic UAT and modern parameter-constrained architectures by questioning whether these universal approximation guarantees on $C(\mathcal{K}, \mathbb{R}^m)$ hold when the hypothesis space is restricted to a low-rank and sparse manifold. 
\vspace{-0.5em}
\paragraph{Expressivity of Parameter-Efficient Architectures.}
Beyond full-rank dense neural networks, a parallel line of approximation theory investigates structurally constrained architectures. \citep{bolcskei2019optimal} established a foundational framework demonstrating that sparsely connected deep neural networks can achieve information-theoretical optimal approximation rates, matching the fundamental limits of nonlinear approximation for a broad class of functions. Subsequent works \citep{elbrachter2021deep, daubechies2022nonlinear} further proved that strict sparsity enables deep neural networks to overcome the curse of dimensionality without parameter explosion. In the empirical regime of modern foundation models, architectural efficiency is typically enforced via Parameter-Efficient Fine-Tuning (PEFT) methods, such as Low-Rank Adaptation (LoRA) \citep{hu2022lora}. The effectiveness of these methods is largely attributed to the low intrinsic dimensionality of pre-trained representation spaces \citep{aghajanyan2021intrinsic}. However, there is a profound theoretical gap between the mathematical optimality of unstructured sparsity \citep{bolcskei2019optimal} and the geometric limitations of structured low-rank matrices. Recent attempts to formalize PEFT expressivity rely heavily on the presence of a frozen, full-rank base matrix, specific non-singularity constraints, and the ReLU activation function \citep{zeng2023expressive, giannou2023expressive}. 
While \citep{li2025approximation} establishes universal approximation bounds for 50\% low-rank architectures, their guarantees strictly necessitate the Swish activation function and mandate relatively wide bottlenecks. In contrast, our Structural Correspondence framework demonstrates that extreme, pure rank-1 bottlenecks can achieve universal expressivity under any general continuous activation when augmented with a minimal sparse diagonal.
\vspace{-0.5em}
\paragraph{Sparse Plus Low-Rank Decomposition.}
To overcome the severe representation bottlenecks of explicitly constrained low-rank structures, recent frameworks have explored combinations of sparse plus low-rank matrices. The decomposition of complex matrices into low-rank and sparse components has deep mathematical roots in robust statistics and convex optimization, where the rank-sparsity incoherence principle proves that these two structures perfectly complement each other \citep{chandrasekaran2009sparse, yuan2009sparse, zhang2018unified, bertsimas2023sparse}. Recently, this principle has been adapted empirically for deep learning to compress large language models \citep{li2023losparse, ding2023sparse}, optimize attention mechanisms \citep{chen2021scatterbrain}, and facilitate full-rank-equivalent pretraining by capturing the lower-magnitude spectrum of the data \citep{han2024sltrain}. Simultaneously, empirical studies on PEFT have discovered that adding identity residual mappings to LoRA drastically improves optimization stability and fine-tuning performance \citep{shi2024reslora, xia2024chain}. Our theoretical framework provides the fundamental mathematical justification for these diverse empirical findings. We rigorously prove that augmenting a low-rank mapping with a minimal $\mathcal{O}(1)$ sparse diagonal component (acting as an intrinsic residual) is theoretically sufficient to break the geometric bottleneck of orthogonal blindness and completely restore universal approximation via our Structural Correspondence framework.

\vspace{-0.9em}
\section{Theoretical Limits of Pure Low-Rank Architectures}\label{sec:limits}\vspace{-0.5em}

Before analyzing our proposed sparse-augmented architecture DLoR, we must rigorously define the expressive boundaries of strictly low-rank neural networks. We establish a paradox: a network constrained to a pure rank-1 hidden layer possesses a misleading capability to exactly interpolate arbitrary scalar datasets, yet it fundamentally suffers from severe geometric collapse when extended to continuous, multi-dimensional target spaces. Proofs can be found in Appendix~\ref{sec:proof3}.

{Given a dataset of $M$ strictly distinct inputs $\{x_1, \dots, x_M\} \subset \mathbb{R}^d$ and corresponding targets $\{z_1, \dots, z_M\} \subset \mathbb{R}^k$, a standard feedforward neural network with one hidden layer maps each individual input-output pair as:} ${W_2 \sigma(W_1 x_i + b_1) + b_2 = z_i}$.
{where $W_1 \in \mathbb{R}^{N \times d}$ is the hidden weight matrix, $W_2 \in \mathbb{R}^{k \times N}$ is the output projection, $b_1 \in \mathbb{R}^N$ and $b_2 \in \mathbb{R}^k$ are the respective bias vectors, and $\sigma(\cdot)$ is an element-wise non-linear activation function. We investigate the severely constrained base case where both weight matrices are restricted to pure rank-1 decompositions: $W_1 = u_1 v_1^T$ and $W_2 = u_2 v_2^T$. Augmenting the individual samples to full data matrices $X = [x_1, \dots, x_M] \in \mathbb{R}^{d \times M}$ and $Z = [z_1, \dots, z_M] \in \mathbb{R}^{k \times M}$, the global network mapping evaluates to:}
\begin{equation}\label{eq:rank1_mapping}
    {u_2 v_2^T \sigma(u_1 v_1^T X + b_1 \mathbf{1}_M^T) + b_2 \mathbf{1}_M^T = Z}
\end{equation}
{where $\mathbf{1}_M \in \mathbb{R}^M$ is a column vector of ones, explicitly broadcasting the bias vectors across $M$ samples.} Without loss of generality, we set the network width equal to the sample size ($N=M$). We strictly require $N \ge M$ because a narrower width fundamentally precludes exact memorization; the network's output spans at most an $N$-dimensional affine subspace. For arbitrary datasets, $M$ target points in general positions can't be interpolated by this strictly lower-dimensional subspace.

\vspace{-0.5em}
\subsection{The Illusion of Expressivity for Scalar Targets ($k=1$)}\label{sec:scalar_output}
\vspace{-0.5em}

When the targets $z_i$ are scalars ($k=1$), the outer mapping $u_2 v_2^T$ reduces to a row vector $v_2^T \in \mathbb{R}^{1 \times N}$. We first demonstrate that this highly restricted architecture can perfectly memorize the dataset.

\begin{definition}\label{def:mean_periodic}
A continuous complex-valued function $f:\mathbb{R}\rightarrow\mathbb{C}$ is mean-periodic if there exists a nonzero compactly supported signed Borel measure $\mu$ on $\mathbb{R}$ such that the convolution $f*\mu=0$, meaning $\int_{-\infty}^{+\infty}f(x-t)d\mu(t)=0$ for all $x\in\mathbb{R}$. Common activations like ReLU, Sigmoid, and SiLU are not mean-periodic.
\end{definition}
\vspace{-0.5em}
\begin{theorem}[Scalar Interpolation]\label{thm:gen_act_interp}
Given $X \in \mathbb{R}^{d \times M}$ with distinct columns and $Z \in \mathbb{R}^{1 \times M}$, let $\sigma$ be any continuous, non-mean-periodic function. For network width $N=M$, there exist parameters $(u_1, v_1, b_1, v_2)$ such that purely rank-1 network $v_2^T \sigma(u_1 v_1^T X + b_1 {\mathbf{1}_M^T}) = Z$ achieves zero error.
\end{theorem}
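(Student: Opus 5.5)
We reduce the problem to one-dimensional interpolation and then use the classical characterization of mean-periodic functions. First we choose $v_1 \in \mathbb{R}^d$ so that the scalar projections $t_i := v_1^T x_i$, $i=1,\dots,M$, are pairwise distinct. This is possible because the columns of $X$ are distinct: for each pair $i\neq j$, the set $\{v : v^T(x_i-x_j)=0\}$ is a hyperplane, and a finite union of hyperplanes cannot cover $\mathbb{R}^d$. With this choice, $u_1 v_1^T X = u_1 t^T$ where $t=(t_1,\dots,t_M)^T$. The hidden pre-activation matrix then has entries $(u_1)_j t_i + (b_1)_j$. Writing $w_j := (u_1)_j$ and $\theta_j := (b_1)_j$, the requirement becomes $\sum_{j=1}^M (v_2)_j\, \sigma(w_j t_i + \theta_j) = z_i$ for all $i$. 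Equivalently, the $M\times M$ matrix $A_{ij} = \sigma(w_j t_i + \theta_j)$ must satisfy $A v_2 = Z^T$. It therefore suffices to find $(w_j,\theta_j)_{j=1}^M$ with $A$ nonsingular, after which we set $v_2 = A^{-1}Z^T$.

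The key step is to show that the functions $g_i(w,\theta) := \sigma(w t_i + \theta)$, viewed on $\mathbb{R}^2$, are linearly independent. Linear independence of $g_1,\dots,g_M$ implies, by a standard argument, that one can choose points $(w_j,\theta_j)$ making the collocation matrix nonsingular. We build the points greedily: assume that for some $k<M$ the $k\times k$ minor on $g_1,\dots,g_k$ and points $1,\dots,k$ is nonzero. Expanding the $(k+1)\times(k+1)$ determinant along its last column gives a nontrivial linear combination of $g_1,\dots,g_{k+1}$, evaluated at the new point, whose coefficient on $g_{k+1}$ is that nonzero minor. By independence this function of $(w,\theta)$ is not identically zero, so some point makes the determinant nonzero. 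Induction on $k$ finishes this part.

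Linear independence is where non-mean-periodicity enters, and I expect it to be the main obstacle. Suppose $\sum_i c_i \sigma(w t_i + \theta) = 0$ for all $(w,\theta)$, with not all $c_i$ zero. Fixing $w=1$ gives $\sum_i c_i \sigma(\theta + t_i) = 0$ for all $\theta$. This says precisely that $\sigma * \mu = 0$ for the nonzero, finitely supported (hence compactly supported) signed measure $\mu = \sum_i c_i \delta_{-t_i}$. The measure is nonzero because the $t_i$ are distinct, so the point masses cannot cancel. This contradicts the assumption that $\sigma$ is non-mean-periodic.

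In fact only the shift direction is needed, so the first and third steps can be merged. Fixing $w_j = 1$ for all $j$, the argument above shows that the translates $\theta\mapsto\sigma(\theta+t_i)$ are linearly independent, and the determinant induction then produces biases $\theta_1,\dots,\theta_M$ with $A_{ij}=\sigma(t_i+\theta_j)$ invertible. We therefore take $u_1=\mathbf{1}_M$, $b_1=\theta$, and $v_2=A^{-1}Z^T$, which gives zero error. The only delicate points are checking that the finite combination of Dirac masses is a legitimate nonzero compactly supported measure in Definition~\ref{def:mean_periodic} (it is, since the $t_i$ are distinct) and that the determinant induction is stated cleanly.
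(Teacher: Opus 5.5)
Your proposal is correct and follows essentially the same route as the paper: choose $v_1$ so the projections are distinct, set $u_1=\mathbf{1}_M$, get linear independence of the translates $\sigma(\cdot+t_i)$ from non-mean-periodicity via the discrete measure $\sum_i c_i\delta_{-t_i}$, choose the biases by the cofactor-expansion induction so the evaluation matrix is invertible, and solve for $v_2$. Your two-variable detour through $g_i(w,\theta)$ is unnecessary, as you note yourself, and your hyperplane argument for $v_1$ is just a deterministic version of the paper's measure-zero lemma.
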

\vspace{-0.5em}
\begin{remark}\label{rem:constantinescu}
Our exact scalar interpolation result for general activation functions matches \citep{constantinescu2023approximation}. Furthermore, compared to full rank feedforward neural networks used to achieve this exact memorization, our hidden weight matrix $W_1$ does not require full-rank capacity. We demonstrate that a strictly rank-1 weight matrix is sufficient to perfectly interpolate arbitrary scalar targets.
\end{remark}

\vspace{-0.5em}
\subsection{Geometric Collapse: Affine and Orthogonal Blindness ($k \ge 2$)}\label{sec:impossibility}
\vspace{-0.5em}

While Theorem~\ref{thm:gen_act_interp} demonstrates a powerful capacity to fit 1D manifolds, this represents a severe overfitting. When extended to mappings in $\mathbb{R}^k$, the purely low-rank architecture completely collapses.

\begin{theorem}[Affine Collapse]\label{thm:impossibility}
For target dimensions $k \geq 2$, if the dataset $Z \in \mathbb{R}^{k \times M}$ contains $M \ge 3$ points in general positions (not collinear), there exist no parameters $(u_1, v_1, b_1, U_2, V_2, b_2)$ where $u_2 \in \mathbb{R}^{k \times 1}, v_2\in \mathbb{R}^{N \times 1}$ satisfying $u_2 v_2^T \sigma(u_1 v_1^T X + b_1 {\mathbf{1}_M^T}) + b_2 {\mathbf{1}_M^T} = Z$.
\end{theorem}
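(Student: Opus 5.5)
The proof is a direct rank argument on the output matrix. The plan is to show that every column of the left-hand side of the network equation lies on a single line in $\mathbb{R}^k$, which contradicts the non-collinearity of the target columns.

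\textbf{Step 1 (structure of the output).} Set $H = \sigma(u_1 v_1^T X + b_1 \mathbf{1}_M^T) \in \mathbb{R}^{N\times M}$. Whatever $\sigma$, $u_1,v_1,b_1$ are, the outer map gives
\[
u_2 v_2^T H + b_2 \mathbf{1}_M^T = u_2 c^T + b_2 \mathbf{1}_M^T, \qquad c^T := v_2^T H \in \mathbb{R}^{1\times M}.
\]
So the $i$-th column of the output is $b_2 + c_i u_2$. Every output column therefore lies on the affine line $\{b_2 + t u_2 : t\in\mathbb{R}\}$, which degenerates to the single point $b_2$ if $u_2=0$. Notably, no property of $\sigma$ or of $X$ is needed, so the rank-1 constraint on $W_2$ alone forces the collapse.

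\textbf{Step 2 (contradiction).} Suppose the equation equals $Z$. Then all $z_i = b_2 + c_i u_2$, so the points $z_1,\dots,z_M$ are contained in a line. Equivalently, the centered matrix $[z_2-z_1,\dots,z_M-z_1] = u_2[c_2-c_1,\dots,c_M-c_1]$ has rank at most $1$. However, $M\ge 3$ points in general position (not collinear) means some three of them, say $z_1,z_2,z_3$, satisfy $\operatorname{rank}[z_2-z_1, z_3-z_1]=2$. This requires $k\ge 2$, which is exactly the hypothesis. This is a contradiction, so no parameters exist.

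\textbf{Main obstacle.} The only delicate point is interpreting "general position." I would state explicitly that I use only non-collinearity, meaning the affine hull of the columns of $Z$ has dimension at least $2$. I would also remark that the argument extends verbatim to rank-$r$ output matrices, forcing the outputs into an $r$-dimensional affine subspace. This generalization is the affine-blindness phenomenon that the section alludes to.
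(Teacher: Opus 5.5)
Your proposal is correct and takes essentially the same approach as the paper: write each output column as $b_2 + c_i u_2$ with the scalar $c_i = v_2^T h_i$, conclude that all targets lie on a single affine line, and contradict non-collinearity. Your explicit treatment of the case $u_2 = 0$ and your rank formulation via $[z_2 - z_1,\, z_3 - z_1]$ state the contradiction a bit more precisely than the paper does, but the argument is the same.
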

\vspace{-0.5em}
To overcome affine collapse, standard parameter-efficient architectures relax the constraint on the outer layer, utilizing a full-rank {$W_2$. If $W_2$ is permitted to be full-rank, the network mathematically recovers the capacity to exactly interpolate finite multi-dimensional training datasets (achieving zero empirical error, see proof in Appendix~\ref{app:zero_error_outer_layer}). However, exact finite interpolation is a strictly weaker condition than universal approximation. Even with a dense outer layer,} a strictly rank-deficient hidden layer suffers from an intrinsic {topological failure that unconditionally prevents uniform approximation over continuous domains}, which we define as \textit{Orthogonal Blindness}.

\begin{definition}[Orthogonal Blindness]\label{def:ortho_blindness}
Let $\mathcal{N}(v_1^T) = \{x \in \mathbb{R}^d \mid v_1^T x = 0\}$ denote the $(d-1)$-dimensional null space orthogonal to the input projection vector. For any input $x$, we apply the orthogonal decomposition $x = x_\parallel + x_\perp$, where $x_\perp \in \mathcal{N}(v_1^T)$. A purely Rank-1 mapping $f(x) = g(v_1^T x)$ evaluates to: $f(x_\parallel + x_\perp) = g\big(v_1^T(x_\parallel + x_\perp)\big) = g(v_1^T x_\parallel) = f(x_\parallel)$.
\end{definition}
\vspace{-0.5em}
If the underlying data distribution varies along dimensions orthogonal to $v_1$, the strictly constrained network is completely ``blind'' to these variations, failing to achieve function approximation. An illustrative example is provided in Figure~\ref{fig:rank1_blindness}. Thus, to restore universal approximation without sacrificing parameter efficiency, we must introduce a structural pathway that breaks this orthogonal null space.
\vspace{-0.9em}
\section{The Structural Correspondence Framework}\label{sec:duality}
\vspace{-0.5em}
To overcome the affine collapse and orthogonal blindness intrinsic to purely low-rank mappings formalized in Section~\ref{sec:limits}, we propose augmenting the network with a minimal sparse diagonal component. Proofs can be found in Appendix~\ref{sec:proof4}.

\begin{definition}[Diagonal plus Low-Rank (DLoR) Component]\label{def:sparse_low_rank}
A linear transformation $M \in \mathbb{R}^{N \times N}$ is a DLoR component of rank $r$ if it is constrained to the form $M = \alpha I_N + U V^T$ where $\alpha \in \mathbb{R}$ is a scalar forming a sparse diagonal, and $U, V \in \mathbb{R}^{N \times r}$ form a low-rank matrix ($r \ll N$).
\end{definition}
\vspace{-0.5em}
This $\alpha I_N$ component requires only $\mathcal{O}(1)$ parameters to store, yet it serves as a vital residual pathway. When $\alpha \neq 0$,  the DLoR component can retain full rank, whereas purely rank-1 architectures necessarily have a nontrivial orthogonal null space. Similar effect has been observed for ResNet~\citep{lin2018resnet}. In this section, we establish a structural correspondence demonstrating how any dense, full-rank matrix $W \in \mathbb{R}^{N \times N}$ can be perfectly reconstructed using solely these Diagonal plus Low-Rank components by rigorously trading off either network width (via additive decomposition) or network depth (via multiplicative decomposition). 
\vspace{-0.5em}
\subsection{Additive Decomposition: Width Expansion}\label{sec:additive}
\vspace{-0.5em}
A fundamental approach to reconstructing a high-rank matrix is through linear addition. Consider an arbitrary (high-rank) weight matrix $W$ and decompose it into $L$ independent  low-rank components via singular value decomposition (SVD) for some value $L$, i.e., $W = \sum_{l=1}^L M_l$, where each matrix $M_l$ is a DLoR component with $\alpha=0$ (although any $\alpha\in \mathbb{R}$ is also acceptable). Inside a neural network, these components cannot simply be summed before activation; they must be mapped through the non-linear layer. We demonstrate that projecting this linear addition through the activation function structurally translates to expanding the network's width into a parallel ensemble. 

\begin{assumption}
$\rho: \mathbb{R} \to \mathbb{R}$ is a non-affine continuous function, continuously differentiable at {some point} $c \in \mathbb{R}$ with $\rho'(c) \neq 0$.
\label{ass:main}
\end{assumption}
\begin{theorem}[Additive Width Expansion]\label{thm:additive_width}
Let $W=\sum_{l=1}^L M_l$ be an arbitrary decomposition of  a matrix $W\in\mathbb R^{N \times N}$. Under Assumption~\ref{ass:main}, let $h > 0$ be a scaling parameter and $\{\beta_l\}_{l=1}^L$ be a set of non-zero scalars satisfying $\sum_{l=1}^L \beta_l = 0$. For any compact input domain $\mathcal{K} \subset \mathbb{R}^N$, the parallel wide structure computes the exact linear mapping $Wx$ as $h \to 0$: $ \lim_{h \to 0} \sum_{l=1}^L \frac{\beta_l}{h \rho'(c)} \rho(\frac{h}{\beta_l} M_l x + c \mathbf{1}_N) = W x$ for all $x\in\mathcal K$.
\end{theorem}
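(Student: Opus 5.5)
The argument is a first-order Taylor expansion of $\rho$ about $c$, applied coordinatewise. The constraint $\sum_l \beta_l = 0$ is what cancels the constant term $\rho(c)$. Fix $x\in\mathcal K$ and write $y_l := M_l x \in \mathbb{R}^N$.

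For each coordinate $i$ and each $l$, differentiability of $\rho$ at $c$ (Assumption~\ref{ass:main}) gives
\[
\rho\Big(c + \tfrac{h}{\beta_l}(y_l)_i\Big) = \rho(c) + \rho'(c)\,\tfrac{h}{\beta_l}(y_l)_i + R\Big(\tfrac{h}{\beta_l}(y_l)_i\Big),
\]
where the remainder satisfies $R(t)=o(t)$ as $t\to 0$. Multiplying by $\beta_l/(h\rho'(c))$ and summing over $l$ splits each coordinate of the parallel structure into three terms:
\begin{itemize}
\item $\frac{\rho(c)}{h\rho'(c)}\sum_l \beta_l$, which vanishes identically because $\sum_l\beta_l=0$;
\item $\sum_l (y_l)_i = (Wx)_i$, by the decomposition $W=\sum_l M_l$;
\item a remainder term, handled below.
\end{itemize}
This term would otherwise blow up like $1/h$, so the constraint on the $\beta_l$ is essential.

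The remaining step is to show
\[
\sum_l \frac{\beta_l}{h\rho'(c)}\,R\Big(\tfrac{h}{\beta_l}(y_l)_i\Big)\to 0 .
\]
Write $R(t)=t\,\eta(t)$ with $\eta(t)\to 0$ as $t\to0$. Then each summand equals $\frac{(y_l)_i}{\rho'(c)}\,\eta\big(\tfrac{h}{\beta_l}(y_l)_i\big)$. The prefactor $(y_l)_i$ is fixed (or bounded over $\mathcal K$), and the argument of $\eta$ tends to zero since $\beta_l\neq 0$ is fixed. So each of the finitely many summands tends to zero, which gives pointwise convergence to $Wx$.

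This remainder control is the only point needing care, though it is mild. For the statement as written (convergence for each $x\in\mathcal K$), plain differentiability at $c$ suffices. To upgrade to uniform convergence on $\mathcal K$, which later UAT arguments will want, I would set $B:=\max_l \sup_{x\in\mathcal K}\|M_l x\|_\infty<\infty$, finite by compactness and continuity. The arguments of $\eta$ then lie in $[-hB/\min_l|\beta_l|,\; hB/\min_l|\beta_l|]$. The error is therefore at most $\frac{LB}{|\rho'(c)|}\sup_{|t|\le hB/\min|\beta_l|}|\eta(t)|$, which tends to $0$ uniformly in $x$.
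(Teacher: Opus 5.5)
Your proposal is correct and follows the paper's proof. Both arguments use a first-order Taylor expansion of $\rho$ at $c$, cancel the zero-order term via $\sum_l\beta_l=0$, recover $Wx$ from $\sum_l M_l x$, and bound the remainder as $o(h)/h=o(1)$; your explicit $R(t)=t\,\eta(t)$ bound is simply a more careful version of the paper's uniform $o(h)$ claim, and it shows that differentiability at $c$ alone already gives uniform convergence on $\mathcal K$.
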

\vspace{-0.5em}
\textbf{Width Architecture and Elimination of Affine Drift:} The limit expression in Theorem~\ref{thm:additive_width} precisely describes a width expansion. The parallel summation can be compactly written as $B\rho(W_{wide}x + b_{wide})$, where the outer linear projection is $B= [\frac{\beta_1}{h \rho'(c)} I_N, \dots, \frac{\beta_L}{h \rho'(c)}I_N]$, and the inner parameters $W_{wide}$ and $b_{wide}$ are formed by vertically stacking the blocks $\frac{hM_l}{\beta_l}$ and $c\mathbf{1}_N$, respectively. This increases the layer width by a factor of $L$. 
Crucially, when applying this wide approximation to substitute a standard dense layer $y = \rho(Wx + b)$, the zero-sum condition $\sum_{l=1}^L \beta_l = 0$ provides a profound architectural advantage. It forces any constant zero-order Taylor expansion offsets generated by the inner activations to cancel perfectly. Consequently, the original target bias $b$ of the network remains completely unaffected by the ensemble expansion. When fine-tuning this wide structure, the bias parameters do not need to be dynamically re-calibrated, eliminating the problem of affine drift and circumventing the need for supplementary stabilization modules like Batch Normalization.
\vspace{-0.5em}
\subsection{Multiplicative Decomposition: Depth Extension}\label{sec:multiplicative}
\vspace{-0.5em}
While the additive decomposition effectively recovers expressivity, expanding network width is often computationally undesirable for memory-constrained inference. Alternatively, to bound the network width to its original dimension $N$, we establish that a high-rank weight matrix can be factorized multiplicatively. We prove that any full-rank transformation can be decomposed into sequential DLoR layers. To achieve universal expressivity using highly constrained DLoR components, our theoretical framework relies on the property that continuous, non-linear activation functions can smoothly approximate linear mappings. We adapt a lemma from \citep{kidger2020universal} to establish this identity mapping.

\begin{lemma}[\citep{kidger2020universal}, Lemma 4.1]\label{lem:kidger_4_1}
Under Assumption~\ref{ass:main}, define the scalar affine functions: $\Phi_h(x) = hx + c, \Psi_h(y) = \frac{y - \rho(c)}{h \rho'(c)}$. Then, as the scaling parameter $h \to 0$, the composition $(\Psi_h \circ \rho \circ \Phi_h)(x)$ converges uniformly to the exact identity function $x$ on any compact domain.
\end{lemma}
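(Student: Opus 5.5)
We prove pointwise convergence first and then upgrade it to uniform convergence on the compact set $\mathcal{K}$.

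\textbf{Pointwise convergence.} Fix $x\in\mathcal{K}$ and write $y = hx+c$. By Assumption~\ref{ass:main}, $\rho$ is differentiable at $c$ with $\rho'(c)\neq 0$, so
\[
\rho(c+hx) = \rho(c) + \rho'(c)\,hx + r(hx),
\qquad r(t)=o(t)\ \text{as } t\to 0 .
\]
Substituting into $\Psi_h$ gives
\[
(\Psi_h\circ\rho\circ\Phi_h)(x) - x = \frac{r(hx)}{h\,\rho'(c)} .
\]
For $x\neq 0$ this equals $x\cdot \frac{r(hx)}{hx\,\rho'(c)}$. Since $hx\to 0$ as $h\to 0$, the ratio $r(hx)/(hx)$ tends to $0$, so the error vanishes. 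For $x=0$ the error is $r(0)/(h\rho'(c))=0$ identically.

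\textbf{Uniform convergence.} Let $R=\sup_{x\in\mathcal{K}}|x|<\infty$. We control the remainder through the modulus
\[
\omega(\delta)=\sup_{0<|t|\le\delta}\frac{|r(t)|}{|t|}.
\]
Differentiability at $c$ gives exactly $\omega(\delta)\to 0$ as $\delta\to 0$. Then for every $x\in\mathcal{K}$,
\[
\sup_{x\in\mathcal{K}}\Big|(\Psi_h\circ\rho\circ\Phi_h)(x)-x\Big| \le \frac{|x|\,\omega(|h|R)}{|\rho'(c)|} \le \frac{R\,\omega(|h|R)}{|\rho'(c)|}\xrightarrow[h\to 0]{}0 .
\]
This bound is independent of $x$, which gives uniform convergence.

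\textbf{Main obstacle.} The only delicate point is making the $o(t)$ remainder uniform over $x\in\mathcal{K}$. The modulus $\omega$ handles this because it depends only on $|hx|\le |h|R$, not on $x$ itself. Continuity of $\rho'$ near $c$ is not needed; differentiability at the single point $c$ suffices. The lemma also requires no non-affineness of $\rho$; that hypothesis matters only for the later universality results. Finally, $h\to 0$ may be taken along positive values; negative $h$ works identically.
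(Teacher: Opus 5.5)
Your proof is correct and follows essentially the same route as the paper: a first-order expansion of $\rho$ at $c$, cancellation of $\rho(c)$ inside $\Psi_h$, and division of an $o(h)$ remainder by $h\rho'(c)$ to get an $o(1)$ error. Your modulus $\omega(|h|R)$ makes explicit the uniformity over the compact set that the paper simply asserts. It also shows that differentiability at the single point $c$ suffices.
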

\vspace{-0.5em}
\begin{theorem}[Multiplicative Depth Extension]\label{thm:multiplicative_depth}
For any invertible matrix $W \in \mathbb{R}^{N \times N}$ and any rank constraint $1 \leq r \leq N$, let $L = \lceil N/r \rceil$. Suppose that $\rho$, $\Phi_h(z)$ and $\Psi_h(z)$ are defined in Lemma~\ref{lem:kidger_4_1}. There exist a non-zero scalar $\alpha \in \mathbb{R}$ and a sequence of low-rank matrices $U_l, V_l \in \mathbb{R}^{N \times r_l}$ (where $r_l \le r$) defining DLoR components $M_l = \alpha I_N + U_l V_l^T$ for $l = 1, \dots, L$, such that $W = \prod_{l=1}^L M_l$. Moreover, for any compact input domain $\mathcal{K} \subset \mathbb{R}^N$ and target bias $b \in \mathbb{R}^N$, the deep sequential structure computes the exact dense layer as $h \to 0$: $\lim_{h \to 0}$ $\rho \left( M_L \circ \Psi_h \circ \rho \circ \Phi_h \circ M_{L-1} \circ \dots \circ \Psi_h \circ \rho \circ \Phi_h \circ M_1 (x) + b \right) = \rho(W x + b)$ for all $x\in\mathcal K$.
\end{theorem}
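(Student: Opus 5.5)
The plan has two parts: first an exact algebraic factorization of $W$ into DLoR factors, then a limiting argument that reduces to Lemma~\ref{lem:kidger_4_1}.

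\textbf{Algebraic factorization.} Pick any nonzero $\alpha$ (say $\alpha=1$) and set $A = W/\alpha^L$, which is invertible. Each factor has the form $M_l = \alpha(I_N + \tilde U_l V_l^T)$, so it suffices to write $A=\prod_{l=1}^L C_l$ with $\operatorname{rank}(C_l - I_N)\le r$. I would build a chain of invertible matrices $I_N = A_0, A_1,\dots,A_L = A$ with $\operatorname{rank}(A_l - A_{l-1})\le r$, and set $C_l = A_{l-1}^{-1}A_l$. Then
\[
C_l - I_N = A_{l-1}^{-1}(A_l - A_{l-1})
\]
has rank at most $r$, and the product telescopes to $A_L=A$. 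Since $\alpha$ is a common scalar, $\alpha\tilde U_l V_l^T$ supplies the factors $U_l V_l^T$ with $r_l\le r$.

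To build the chain, partition $\{1,\dots,N\}$ into $L=\lceil N/r\rceil$ consecutive blocks of size at most $r$. Let $Q_l$ be the diagonal $0/1$ selector onto the first $l$ blocks, and fix an invertible $R$ to be chosen below. Define
\[
A_l = I_N + (A - I_N) R Q_l R^{-1}.
\]
Then $A_0=I_N$ and $A_L = A$, and $A_l - A_{l-1} = (A-I_N)R(Q_l-Q_{l-1})R^{-1}$ has rank at most $r$. Moreover $A_l R = [\,AR_{\le l}\;|\;R_{>l}\,]$, i.e.\ the first columns of $AR$ followed by the remaining columns of $R$.

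\textbf{Main obstacle: invertibility of every $A_l$.} With $R = I_N$ this reduces to nonvanishing of the leading block minors of $A$, which can fail. This is why I introduce the generic change of basis $R$. For each $l$, $\det(A_l R)$ is a polynomial in the entries of $R$. I would show it is not identically zero (jointly with $\det R\neq 0$) by explicit choice. Take $R_{\le l}$ to be a basis of any $lr$-dimensional subspace $S$. Take $R_{>l}$ to be a basis of a subspace $T$ that is simultaneously a complement of $S$ and of $AS$; such $T$ exists because a generic subspace of the complementary dimension is transversal to both $S$ and $AS$. Then both $R$ and $A_l R$ are invertible. A finite union of proper algebraic sets cannot cover $\mathbb R^{N\times N}$, so some $R$ works for all $l$ simultaneously.

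\textbf{Limit statement.} Let $\Lambda_h = \Psi_h\circ\rho\circ\Phi_h$, applied coordinatewise. By Lemma~\ref{lem:kidger_4_1}, $\Lambda_h\to\mathrm{id}$ uniformly on compact subsets of $\mathbb R^N$. I would prove by induction on $j$ that
\[
F_h^{(j)} = M_j\circ\Lambda_h\circ M_{j-1}\circ\cdots\circ\Lambda_h\circ M_1 \;\longrightarrow\; M_j\cdots M_1
\]
uniformly on $\mathcal K$. The inductive step works as follows.
\begin{itemize}
\item The images $F_h^{(j-1)}(\mathcal K)$ stay in a fixed compact neighbourhood $\mathcal K'$ of $M_{j-1}\cdots M_1\mathcal K$ for small $h$.
\item On $\mathcal K'$, uniform convergence of $\Lambda_h$ to the identity, together with the linearity (hence Lipschitz continuity) of $M_j$, gives uniform convergence of $F_h^{(j)}$.
\end{itemize}
Finally, adding $b$ and applying $\rho$, which is uniformly continuous on compact sets, yields convergence to $\rho(Wx+b)$ for all $x\in\mathcal K$, in fact uniformly.
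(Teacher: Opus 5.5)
Your proposal is correct and follows essentially the paper's route: your chain $A_l = I_N + (A-I_N)RQ_lR^{-1}$ is the paper's partial product $P_l$ rescaled by $\alpha^{-l}$, invertibility rests on the same mixed matrix $[AR_{\le l}\,|\,R_{>l}]$ (the paper's $\mathcal{M}_k(Z)$ up to column scaling), and the limit is the same induction on Lemma~\ref{lem:kidger_4_1}; your common-complement argument in fact supplies the nonemptiness of the good set of $R$, which the paper only asserts. One small fix is needed: $C_l = A_{l-1}^{-1}A_l$ telescopes as $C_1\cdots C_L = A$, whereas the network applies $M_1$ first and needs $M_L\cdots M_1 = W$, so take $C_l = A_lA_{l-1}^{-1}$ instead (still $\operatorname{rank}(C_l - I_N)\le r$) or reverse the labels.
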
\vspace{-0.5em}
This theorem formally proves that depth can flawlessly substitute for parameter rank. A single dense, high-rank linear transformation can be exactly decomposed into a deep sequence of $L$ non-linear layers, strictly preserving a constant width $N$ at every step. Moreover, the sequential composition of the DLoR components $M_l$ with the affine mappings $\Phi_h$ and $\Psi_h$ strictly preserves the DLoR architectural constraints. Because the mappings $\Phi_h$ and $\Psi_h$ consist entirely of uniform scalar multiplications and constant vector additions, applying them to the linear operator simply scales the weight matrix and shifts the layer bias. We also provide an alternative construction utilizing an augmented $2N$-dimensional state space in Appendix~\ref{sec:augmented_space} for clear intuition.

\begin{remark}[Generalization to Rectangular Transformations]\label{rem:rectangular_matrices}
While the theorems in this section are rigorously formulated for square weight matrices $W \in \mathbb{R}^{N \times N}$ to streamline the theoretical exposition, the Structural Correspondence framework naturally extends to arbitrary rectangular matrices $W \in \mathbb{R}^{m \times n}$. For a general affine layer mapping inputs $x \in \mathbb{R}^n$ to targets $y \in \mathbb{R}^m$, we define an augmented state dimension $D = \max(m, n)$. We embed the target weight matrix into a square block matrix $\tilde{W} \in \mathbb{R}^{D \times D}$ by placing $W$ in the upper-left principal block and padding all remaining rows and columns with zeros or arbitrary small orthogonal vectors. The input vector is padded to $\tilde{x} = [x^\top, \mathbf{0}_{D-n}^\top]^\top \in \mathbb{R}^D$ (or the output vector is padded to $\tilde{y} = [y^\top, \mathbf{0}_{D-m}^\top]^\top \in \mathbb{R}^D$).
\end{remark}
\begin{remark}[Hybrid Structural Combinations]\label{rem:hybrid_decomp}
While Theorems~\ref{thm:additive_width} and~\ref{thm:multiplicative_depth} establish the pure limits of width and depth respectively, the Structural Correspondence framework naturally permits hybrid topologies. Because matrix addition and matrix multiplication are mathematically composable, any general high-rank weight matrix $W$ can be decomposed into a complex combination of both summations and products of Sparse Low-Rank components. For instance, $W$ could be approximated by a parallel ensemble of several moderately deep sub-networks, taking the form $W = \sum_i (\prod_j M_{i,j})$. This allows machine learning practitioners to dynamically navigate the depth-width trade-off space, tailoring the architectural reconstruction to perfectly match the specific hardware constraints (e.g., parallel GPU thread limits versus sequential VRAM constraints) of their deployment environment.
\end{remark}
{
Note that Theorem~\ref{thm:additive_width} and~\ref{thm:multiplicative_depth} claim the asymptotic limit result when $h \to 0$ to confine the activation function to a locally linear regime and may lead to numerical instability. However, it is to be noted that this specific derivation serves as an \textit{existence proof} via explicit construction. By artificially forcing a sequence of matrix decompositions, we establish a worst-case lower bound on the representational capacity of DLoR in Theorem~\ref{thm:general_transfer} and~\ref{thm:dlor_sobolev}. In practice, DLoR component may leverage the full non-linear manifold of the network, discovering robust, parameter-efficient representations that avoid these $h\to 0$ constructions entirely while still achieving the proven approximation bounds.
}
\vspace{-0.5em}
\section{Universal Approximation for DLoR}\label{sec:uat}
\vspace{-0.5em}
Having established the structural correspondence of DLoR components, we now prove that embedding these decompositions into a deep or wide network architecture completely overcomes orthogonal blindness and fully restores the Universal Approximation Theorem (UAT). {We study feedforward neural networks where each layer has DLoR components as weight matrix and activation function $\rho$.}

It is critical to note the broad applicability of the activation condition in Theorems~\ref{thm:additive_width} and \ref{thm:multiplicative_depth}. It only requires that there exist \textit{some} expansion point $c \in \mathbb{R}$ where $\rho'(c) \neq 0$. This inherently covers smooth non-linearities like Sigmoid, Tanh, Swish, and GELU. Furthermore, it easily accommodates piece-wise linear activations like ReLU; although ReLU is non-differentiable at zero, we select an expansion point strictly in the active linear regime (e.g., $c > 0$), where the derivative is equal to $1$.

Since our DLoR blocks can uniformly simulate any dense affine transformation $y = \rho(W x + b)$ over a compact domain, and continuous functions map compact sets to compact sets, the intermediate representations at any hidden layer remain bounded, allowing us to enforce uniform bounds on the asymptotic Taylor remainders of our approximations. Proofs can be found in Appendix~\ref{sec:proof5}.
\vspace{-0.5em}
\subsection{UAT via Multiplicative Depth and Additive Width}\label{sec:uat_multiplicative_Additive}
\vspace{-0.5em}
We first prove that universal approximation for DLoR can be achieved by extending depth and preserving the width bound $N = n+m+2$. Conversely, if network depth is restricted, universal approximation can be achieved by expanding the network width via our additive structural correspondence, mapping the high-rank transformation into a parallel ensemble. While both deep and wide DLoR architectures theoretically achieve universal approximation, we establish that the exponentially greater representational capacity of multiplication over addition enables extending depth via multiplicative decomposition to provide a fundamentally superior parameter-to-expressivity trade-off compared to additive width expansion, aligning with classical approximation theory on the exponential benefits of depth. We provide a detailed discussion in Appendix~\ref{sec:app-expressivity}

\begin{theorem}[Deep DLoR UAT]\label{thm:uat_multiplicative}
Under Assumption~\ref{ass:main}, the class of Deep DLoR Neural Networks with activation $\rho$ where every hidden weight matrix is strictly constrained to the form $\alpha I_N + U V^T$ with rank $r \ge 1$ for $UV^T$ and strictly bounded width $N = n+m+2$ is dense in $C(\mathcal{K}, \mathbb{R}^m)$.
\end{theorem}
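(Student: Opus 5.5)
Our plan is to reduce the statement to the bounded-width universal approximation theorem of \citep{kidger2020universal}, which holds for any non-affine continuous activation that is continuously differentiable with nonzero derivative at some point. Under Assumption~\ref{ass:main}, that result gives, for any $f \in C(\mathcal{K},\mathbb{R}^m)$ and $\epsilon>0$, a deep network $g$ of width $N=n+m+2$ with $\|f-g\|_\infty<\epsilon/2$. The layers of $g$ have the form $y\mapsto \rho(W_k y + b_k)$ with dense $W_k \in \mathbb{R}^{N\times N}$, and the input and output layers are padded as in Remark~\ref{rem:rectangular_matrices}. The goal is to replace each dense layer by a stack of DLoR layers of the same width $N$ without increasing the error by more than $\epsilon/2$ in total.

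The first step handles invertibility, since Theorem~\ref{thm:multiplicative_depth} requires $W_k$ to be invertible. Because $g$ is continuous in its weights uniformly on the compact set $\mathcal{K}$ (all intermediate activations range over compact sets, and $\rho$ is uniformly continuous on them), we can perturb each $W_k$ to $W_k+\delta I$. For all but finitely many $\delta$ this matrix is invertible, and for small $\delta$ the output changes by less than $\epsilon/6$. Next, we apply Theorem~\ref{thm:multiplicative_depth} with rank $r$ to each invertible $W_k$. This factors $W_k=\prod_l M_{k,l}$ with $M_{k,l}=\alpha I+U_lV_l^T$, and shows that the deep chain converges to $\rho(W_k y + b_k)$ as $h\to 0$, uniformly on compact sets. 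As noted after that theorem, the affine maps $\Phi_h$ and $\Psi_h$ only rescale by a scalar and shift by a constant. Absorbing them into the adjacent layers therefore gives weights $h(\alpha I + U V^T)/\,(h\rho'(c))$-type scalings, which remain of the form $\alpha' I + U'V'^T$, plus modified biases. Each intermediate layer is thus a genuine DLoR layer $\rho(\alpha' I + U'V'^T)\cdot+b'$ of width $N$.

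The main obstacle is controlling the accumulation of errors across the composition. I will argue by induction on the layer index $k$. Suppose the DLoR surrogate of the first $k-1$ layers is within $\eta_{k-1}$ of the exact network uniformly on $\mathcal{K}$. Its image then lies in a fixed compact neighbourhood $\mathcal{K}_{k-1}'$ of the true intermediate image. On $\mathcal{K}_{k-1}'$, layer $k$ of $g$ is uniformly continuous, and its DLoR surrogate converges uniformly there as $h_k\to 0$; the uniform convergence follows from Lemma~\ref{lem:kidger_4_1} applied to a compact set that contains all intermediate values. This uniform convergence on an enlarged compact set is the delicate point. Lemma~\ref{lem:kidger_4_1} only gives convergence on a fixed compact domain, so the enlargement must be chosen before $h$ is chosen. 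By the triangle inequality we get $\eta_k \le \omega_k(\eta_{k-1}) + \sup_{\mathcal{K}'_{k-1}}|\text{surrogate}_k-\text{layer}_k|$, where $\omega_k$ is the modulus of continuity of layer $k$. We choose the $h_k$ backwards from the last layer so that the total error stays below $\epsilon/3$.

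Finally, the output affine map of $g$ is linear, with no activation, so it is handled by factoring it directly into DLoR matrices, with the identity emulated between the factors through Lemma~\ref{lem:kidger_4_1}. Combining the three error budgets gives $\|f-\tilde g\|_\infty<\epsilon$, where $\tilde g$ is a Deep DLoR network of width $N=n+m+2$. This proves density.
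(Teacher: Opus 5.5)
Your proposal is correct and follows essentially the paper's route: invoke the Kidger--Lyons bounded-width theorem, perturb each weight matrix to be invertible, factor it via Theorem~\ref{thm:multiplicative_depth} with $\Phi_h,\Psi_h$ absorbed into DLoR weights and biases, and let $h\to 0$. Two points the paper's proof only asserts are made rigorous in yours: the error propagation across composed layers (you fix enlarged compact sets before choosing the $h_k$), and the treatment of the padded input and affine output layers.
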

\vspace{-0.5em}
\begin{theorem}[Wide DLoR UAT]\label{thm:uat_additive}
Under Assumption~\ref{ass:main}, a network mapping constructed from parallel DLoR components (forming an expanded layer of width $L \times N$) is dense in $C(\mathcal{K}, \mathbb{R}^m)$.
\end{theorem}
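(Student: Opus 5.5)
We prove Theorem~\ref{thm:uat_additive} by transferring a known universal approximation result for dense networks and replacing every dense weight matrix with a parallel ensemble of DLoR components via Theorem~\ref{thm:additive_width}. The natural base result is the bounded-width deep theorem of \citep{kidger2020universal}. Under Assumption~\ref{ass:main} it gives, for any $f \in C(\mathcal{K}, \mathbb{R}^m)$ and $\epsilon>0$, a dense network $g = A_{D+1}\circ \rho\circ (W_D\,\cdot + b_D)\circ \cdots \circ \rho \circ (W_1 x + b_1)$ of width $N = n+m+2$ with $\sup_{\mathcal{K}}\|f-g\| < \epsilon/2$. Inputs and outputs are padded to dimension $N$ as in Remark~\ref{rem:rectangular_matrices}. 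If one prefers a shallow base, the classical one-hidden-layer UAT can be used instead when $\rho$ is non-polynomial. The deep base, however, covers all non-affine continuous $\rho$ and matches the statement's hypotheses, so we use it.

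\textbf{Step 1 (decomposition).} For each hidden layer $l$, we write $W_l = \sum_{j=1}^{L} M_{l,j}$, with each $M_{l,j}$ a rank-$\le r$ DLoR component (with $\alpha=0$), for example via SVD blocks. If fewer than $L$ nonzero terms are needed, we pad with zero components. We then fix nonzero $\beta_j$ with $\sum_j \beta_j = 0$; this requires $L\ge 2$, and a single term can be split as $W_l = \tfrac12 W_l+\tfrac12 W_l$.

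\textbf{Step 2 (bias-preserving replacement).} We need the affine version $x\mapsto W_l x + b_l$, not just $W_l x$. We apply Theorem~\ref{thm:additive_width} to the linear part and add $b_l$ to the outer projection's bias. The zero-sum condition guarantees that the zero-order terms $\beta_j\rho(c)/(h\rho'(c))$ cancel, so the layer $x\mapsto \rho\bigl(B_h\rho(W_{wide,h}x + c\mathbf{1}) + b_l\bigr)$ has width $L\times N$ and converges to $\rho(W_l x + b_l)$. The outer projection $B_h$ is a block of scaled identities and can be absorbed into the next layer's inner weights: the product $\tfrac{h}{\beta_{j'}}M_{l+1,j'}\cdot \tfrac{\beta_j}{h\rho'(c)} I$ is again low-rank. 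Hence the composite network consists only of DLoR-structured expanded layers.

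\textbf{Step 3 (uniform convergence and error propagation).} This is the main obstacle. Theorem~\ref{thm:additive_width} is stated pointwise, so we must upgrade it to uniform convergence on compact sets. Write $\rho(c+t) = \rho(c) + \rho'(c)t + t\,\omega(t)$ with $\omega(t)\to 0$, which holds by $C^1$ at $c$. Then the error on a compact set $\mathcal{K}'$ is bounded by $\sum_j \tfrac{|\beta_j|}{|\rho'(c)|}\|M_{j}x/\beta_j\|\,\sup_{|t|\le h R}|\omega(t)|$, where $R$ bounds $\|M_j x/\beta_j\|$ on $\mathcal{K}'$. This tends to zero uniformly. We then induct over layers. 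Each layer's image of a compact set is compact, and we enlarge it to a compact $\delta$-neighborhood $\mathcal{K}_l'$. On $\mathcal{K}_l'$ the maps $\rho$ and the affine maps are uniformly continuous. By choosing $h$ layer by layer from the last layer backward (a standard $\epsilon/\delta$ chain), the perturbed intermediate representations stay inside $\mathcal{K}_l'$, and the total deviation from $g$ is below $\epsilon/2$. Combining this with $\|f-g\|<\epsilon/2$ yields density in $C(\mathcal{K},\mathbb{R}^m)$. The final output map $A_{D+1}$ can be handled identically, with no outer $\rho$, or kept linear.
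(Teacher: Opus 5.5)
Your proposal is correct and follows the paper's own route. You start from the bounded-width dense network of Lemma~\ref{lem:kidger_uat}, replace each layer's linear part by the zero-sum parallel ensemble of Theorem~\ref{thm:additive_width}, inject the original bias before the outer $\rho$, and pass uniform convergence through the finite composition; your $t\,\omega(t)$ remainder bound, compact-neighborhood induction, and observation that $L\ge 2$ is needed make rigorous what the paper's proof states in one line. The one slip is the claim that $B_h$ can be absorbed into the next layer's inner weights, which fails because the activation $\rho(\cdot+b_l)$ sits between them; nothing depends on it, since the paper likewise keeps $B_h$ (a block row of scaled identities) as its own projection.
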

\vspace{-0.5em}
Beyond the standard Universal Approximation Theorem for continuous functions, a profound consequence of our Structural Correspondence framework is that \textit{any} existing universal approximation guarantee for a dense network can be systematically transferred to our constrained DLoR network.

\begin{theorem}[General UAT Transfer]\label{thm:general_transfer}
Let $\mathcal{F}$ be a hypothesis class of dense, full-rank neural networks of depth $L$ and maximum width $W$ that achieves a uniform approximation error $\epsilon$ for a target function class $\mathcal{C}$. Then, there exists a structurally constrained DLoR Network that achieves the exact same $\epsilon$-approximation. Specifically, the dense network can be exactly simulated by replacing each dense layer with a multiplicative DLoR block, expanding the overall depth to at most $L \times W$ while strictly preserving the width $W$.
\end{theorem}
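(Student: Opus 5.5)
The plan is to simulate the dense network layer by layer. Fix a dense network $F\in\mathcal F$ with $\sup_{x\in\mathcal K}\|F(x)-f(x)\|\le\epsilon$. Write it as $F = A_{L+1}\circ \rho\circ A_L\circ\cdots\circ\rho\circ A_1$, where $A_j(z)=W_jz+b_j$. Using Remark~\ref{rem:rectangular_matrices}, I will pad every layer to the common width $W$, so each $W_j$ becomes a $W\times W$ matrix.

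\textbf{Step 1: handle singular weights.} Theorem~\ref{thm:multiplicative_depth} needs invertible matrices, and padded or generic $W_j$ may be singular. I will replace $W_j$ by $W_j+\eta I$, with $\eta$ small and outside the finite spectrum of $W_j$. The network is continuous in its parameters and $\mathcal K$ is compact, so the intermediate images stay in a fixed compact set. Therefore the perturbed dense network $F_\eta$ satisfies $\|F_\eta-F\|_\infty\le\delta$ for any prescribed $\delta$. Strictly, this gives error $\epsilon+\delta$. Since $\epsilon$ in the statement is only achieved up to the infimum, I will read ``exact same $\epsilon$'' as ``any $\epsilon'>\epsilon$'', or equivalently I will absorb the perturbation by starting from a dense network with error $\epsilon/2$.

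\textbf{Step 2: replace each layer.} Theorem~\ref{thm:multiplicative_depth} with $r=1$ gives DLoR factors $M_{j,1},\dots,M_{j,W}$ whose product is $W_j+\eta I$, so $L_j=W$ factors per layer. The resulting DLoR block $B_{j,h}$ (factors interleaved with $\Psi_h\circ\rho\circ\Phi_h$) converges uniformly to $\rho(W_jz+b_j)$ on any compact set. The affine maps $\Phi_h,\Psi_h$ only rescale the weights and shift the biases, so each block is a depth-$W$ stack of DLoR layers of width $W$. In total this gives depth at most $L\times W$, plus the output layer. Larger $r$ gives proportionally fewer factors, $\lceil W/r\rceil$ per layer.

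\textbf{Step 3: control the composition (the main obstacle).} Uniform convergence of each block does not immediately give uniform convergence of the composition, because block $j+1$ is evaluated on the output of the approximate block $j$ rather than the exact one. I will argue by induction on $j$. Let $K_j$ be the exact image of $\mathcal K$ after $j$ layers, and let $\tilde K_j$ be its closed unit neighbourhood, which is compact. On $\tilde K_j$ the dense layer $z\mapsto \rho(W_{j+1}z+b_{j+1})$ is uniformly continuous, and $B_{j+1,h}\to$ that layer uniformly there. Adding these two errors shows that the composed error after layer $j+1$ tends to $0$ as $h\to0$, provided the error after layer $j$ is below both $1$ and the relevant modulus of continuity. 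The same argument handles the final affine readout, which is Lipschitz. Hence for $h$ small the whole DLoR network is within $\delta$ of $F_\eta$, so its error from $f$ is at most $\epsilon+2\delta$, and $\delta$ is arbitrary.

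The delicate points are this compact-neighbourhood induction and the invertibility perturbation. I will state both explicitly, noting that the simulation is exact only in the $h\to0$ limit.
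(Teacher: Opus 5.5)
Your proposal is correct and is essentially the paper's proof. You substitute each dense layer by the rank-one multiplicative DLoR block from Theorem~\ref{thm:multiplicative_depth} (depth $W$ per layer), let $h\to 0$ through the composition, and close with the triangle inequality, which, exactly as in the paper, gives error $\epsilon+\delta$ for arbitrary $\delta>0$ rather than literally $\epsilon$.

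Your invertibility perturbation $W_j+\eta I$ and your compact-neighbourhood induction make rigorous three steps that the paper's proof of this theorem leaves implicit. It assumes each $W_k$ is invertible; the perturbation appears only in the proof of Theorem~\ref{thm:uat_multiplicative}. It asserts the composition step as a general topological fact. And it claims an $o(h)$ block error, which Theorem~\ref{thm:multiplicative_depth} does not deliver, whereas the plain uniform convergence you use suffices.
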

\vspace{-0.5em}
\subsection{{Parameter Complexity for Sobolev Spaces}}\label{sec:uat_complexity}
\vspace{-0.5em}
While Theorem~\ref{thm:general_transfer} proves that DLoR networks can theoretically simulate dense networks, it is crucial to analyze their parameter efficiency. In classical approximation theory, deep neural networks are known to overcome the curse of dimensionality for highly smooth functions. Specifically, \citep{yarotsky2017error} proved that approximating a target function in the Sobolev space ${\mathcal{W}^{s, \infty}([0,1]^d)}$ (functions with $s$ bounded derivatives) requires a parameter complexity of ${\mathcal{O}(\epsilon^{-d/s} \log(1/\epsilon))}$. We formally demonstrate that DLoR networks at worst inherit this parameter scaling despite severe architectural constraints.

\begin{theorem}[Parameter Complexity for DLoR]\label{thm:dlor_sobolev}
Let ${f \in \mathcal{W}^{s, \infty}([0,1]^d)}$ be a target function and $\epsilon>0$ be an arbitrary constant. There exists a DLoR network that achieves a uniform error of $\epsilon$ with a total parameter complexity of ${\mathcal{O}(\epsilon^{-d/s} \log(1/\epsilon))}$, matching bounds for standard dense architectures.
\end{theorem}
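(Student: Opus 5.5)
The plan is to transfer Yarotsky's construction through the Structural Correspondence framework, while controlling how much the parameter count grows. Yarotsky \citep{yarotsky2017error} gives a deep ReLU network $\Phi_\epsilon$ with depth $\mathcal{O}(\log(1/\epsilon))$ and $\mathcal{O}(\epsilon^{-d/s}\log(1/\epsilon))$ weights and units that approximates $f$ to accuracy $\epsilon/2$. The key point is that this network is \emph{sparse}: its weight count is of the same order as its number of units. Applying Theorem~\ref{thm:general_transfer} blindly would replace each dense layer of width $W$ by up to $W$ DLoR blocks of size $W\times W$, each costing $\mathcal{O}(W)$ parameters. That gives $\mathcal{O}(L W^2)$ parameters, which is too many. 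So the count has to be organized more carefully.

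\textbf{Step 1: rebuild each layer from small-rank pieces.} I will decompose each weight matrix $A_\ell$ of $\Phi_\epsilon$ into pieces whose total number of low-rank parameters is proportional to $\mathrm{nnz}(A_\ell)$, not $W^2$.

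In the additive route (Theorem~\ref{thm:additive_width}), I write $A_\ell=\sum_{i} e_i a_i^T$ as a sum of rank-one row pieces. Each term $M_i = e_i a_i^T$ is a DLoR component with $\alpha=0$, and storing its factors costs $1+\mathrm{nnz}(a_i)$ parameters when the factors are stored sparsely. The extra $\beta_l$ branches need at least two copies per piece because $\sum\beta_l=0$, but this only multiplies the cost by a constant. So the total cost for the layer is $\mathcal{O}(\mathrm{nnz}(A_\ell)+W)$.

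In the multiplicative route, I note that for sparse $A_\ell$ the factorization in Theorem~\ref{thm:multiplicative_depth} can be carried out with one rank-one correction per nonzero row, with sparse factors $U_l,V_l$. This again gives $\mathcal{O}(\mathrm{nnz}(A_\ell)+W)$ parameters plus $\mathcal{O}(1)$ for each $\alpha$. Non-invertible or rectangular layers are handled by the padding in Remark~\ref{rem:rectangular_matrices}, perturbed to be invertible; the perturbation error is absorbed in Step 2.

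Summing over layers gives a total of $\mathcal{O}(\sum_\ell \mathrm{nnz}(A_\ell) + LW)$. Since $LW$ is bounded by the unit count of $\Phi_\epsilon$, this is $\mathcal{O}(\epsilon^{-d/s}\log(1/\epsilon))$.

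\textbf{Step 2: uniform error control.} ReLU satisfies Assumption~\ref{ass:main} with any $c>0$. On a compact domain, the identity and affine simulations are then \emph{exact} once $h$ is small enough, because the pre-activations stay in the linear regime. So I expect the error to be zero rather than merely to vanish as $h\to0$.

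To be safe for a general $\rho$, I will instead argue by induction over the layers. The set $[0,1]^d$ is compact, so all intermediate activations lie in a compact set. The DLoR simulation of each layer converges uniformly as $h\to 0$ (Lemma~\ref{lem:kidger_4_1} and Theorems~\ref{thm:additive_width}--\ref{thm:multiplicative_depth}). Each layer map is uniformly continuous on a slightly enlarged compact set, so the errors propagate in a controlled way. Choosing $h$ small enough makes the total deviation at most $\epsilon/2$, and combining with the $\epsilon/2$ from Yarotsky's network yields uniform error $\epsilon$.

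\textbf{Main obstacle.} The hard part is Step 1: keeping the parameter count linear in the number of nonzeros instead of quadratic in the width. The statement only implicitly allows sparse storage of $U,V$. If the factors must be stored densely, I would first reshape Yarotsky's network to have width $\mathcal{O}(1)$ and depth $\mathcal{O}(\epsilon^{-d/s}\log(1/\epsilon))$. This is possible by serializing its parallel sub-networks through a constant-width channel, in the spirit of the width-bounded constructions of \citep{kidger2020universal}. With constant width, the $\mathcal{O}(W^2)$ per-layer cost of Theorem~\ref{thm:general_transfer} becomes $\mathcal{O}(1)$, and the bound holds directly.
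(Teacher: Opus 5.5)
Your argument goes through once the constant-width fallback becomes the main line. It departs from the paper's proof exactly where that proof is weakest.

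\textbf{Comparison with the paper.} The paper applies Theorem~\ref{thm:general_transfer} directly to Yarotsky's network. Each layer becomes $W$ DLoR layers of $2W+1$ parameters. The resulting $\mathcal{O}(W^2)$ per layer is said to ``align with the $\mathcal{O}(W^2)$ footprint of a dense $W\times W$ matrix,'' and $P_{\mathrm{DLoR}}=\mathcal{O}(P_{\mathrm{dense}})$ is concluded. That step tacitly assumes $LW^2=\mathcal{O}(P_{\mathrm{dense}})$, i.e.\ that Yarotsky's network is fully connected layer by layer. It is not. It has depth $\mathcal{O}(\log(1/\epsilon))$ and width of order $\epsilon^{-d/s}$, one branch per local Taylor term. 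The naive transfer therefore gives about $\epsilon^{-2d/s}\log(1/\epsilon)$ parameters, which is exactly your objection.

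\textbf{Why your fallback closes the gap.} It works under the paper's own cost model ($2W+1$ per rank-one component, dense $U,V$). Yarotsky's approximant is a sum of $\mathcal{O}(\epsilon^{-d/s})$ independent local terms. Each comes from a product/squaring subnetwork of constant width and depth $\mathcal{O}(\log(1/\epsilon))$. You can compute these terms one after another while carrying $x\in[0,1]^d$ and a running sum in $\mathcal{O}(d)$ ReLU identity channels, shifting by a constant wherever a carried value may be negative. Theorem~\ref{thm:general_transfer} then costs $\mathcal{O}(d^2)=\mathcal{O}(1)$ parameters per layer, biases included, over $\mathcal{O}(\epsilon^{-d/s}\log(1/\epsilon))$ layers.

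Your Step~2 is fine. With ReLU and $c>0$ the simulation is exact for small $h$. The perturbation of padded or singular layers to invertibility needs only continuity in the weights and does not change the count. I would make this serialization the main argument and spell it out in a few lines.

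\textbf{The multiplicative sparse-factor claim is false as stated.} With in-place updates $M_l=\alpha I+e_iv^T$, the state after $l-1$ steps is $P_{l-1}x$. So $v$ must be expressed through $P_{l-1}^{-T}$, which is exactly the paper's $V_k=(P_{k-1}^{-1})^TD_k$ and is dense in general. One sparse correction per row works only if the rows can be ordered so that each reads only coordinates not yet overwritten. That requires a triangular pattern up to permutation, i.e.\ elimination without fill-in, which a general sparse matrix does not admit.

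\textbf{The additive route holds only under a different cost model.} It needs sparse storage of $U,V$. It also needs the stacked bias $c\mathbf{1}_{LN}$ and the block readout $B$ to be counted as $\mathcal{O}(1)$ shared parameters. The expanded layer has $LN\approx W^2$ units per original layer. If you prune the rows where $e_ia_i^T$ vanishes and fold their constant outputs into the next bias, the block collapses to $\rho(\mathrm{diag}(h/\beta_i)A_\ell x+c\mathbf{1})$. That is just the original sparse layer, so the DLoR constraint is met only vacuously.
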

\vspace{-0.5em}
\textbf{Information-Theoretic Limits:} This preservation of parameter efficiency is deeply tied to fundamental information-theoretic bounds. As established in \citep{bolcskei2019optimal}, the optimal parameter complexity for achieving an $\epsilon$-approximation is a fundamental property of the target function class itself, completely independent of the specific neural network topology. Because our DLoR architecture achieves the exact same error bound while maintaining an $\mathcal{O}(1 +r(m+n))$ parameter scaling per component (due to the strict sparse diagonal and low-rank constraints), it perfectly adheres to the theoretical limits of optimal approximation. Our framework demonstrates that we can force the optimization landscape into a highly constrained geometric structure, eliminating the need for large, dense matrices, but with the same order of parameter efficiency.

\vspace{-0.5em}
\section{Numerical Experiments}
\vspace{-0.5em}
\subsection{Validation of UAT}
\label{subsec: validate_uat}
\vspace{-0.5em}
We first provide an illustrative example to show that the deep and wide structures proposed in Theorems \ref{thm:uat_multiplicative} and \ref{thm:uat_additive} exhibit the universal approximation power as promised by the theorems. Following the proof ideas of these theorems, we first train a fully connected multilayer perceptron (MLP) that approximates a given sawtooth function. Then for each layer of the MLP, we construct the substructure of the deep sparse low-rank and the wide sparse low-rank network respectively and verify their approximation error to the dense MLP by max-absolute-error. The detailed experimental setup can be found in Appendix \ref{subsec: validate_uat_app}. 

\begin{figure}[h]
    \centering
    \begin{subfigure}[b]{0.31\textwidth}
        \centering
        \includegraphics[width=0.9\linewidth]{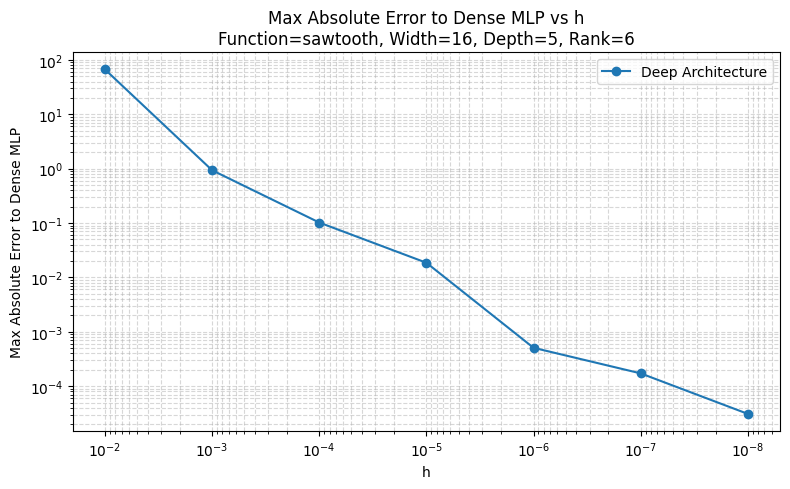}
        \caption{}
        \label{fig:deep_to_dense}
    \end{subfigure}
    \hfill
    \begin{subfigure}[b]{0.31\textwidth}
        \centering
        \includegraphics[width=0.9\linewidth]{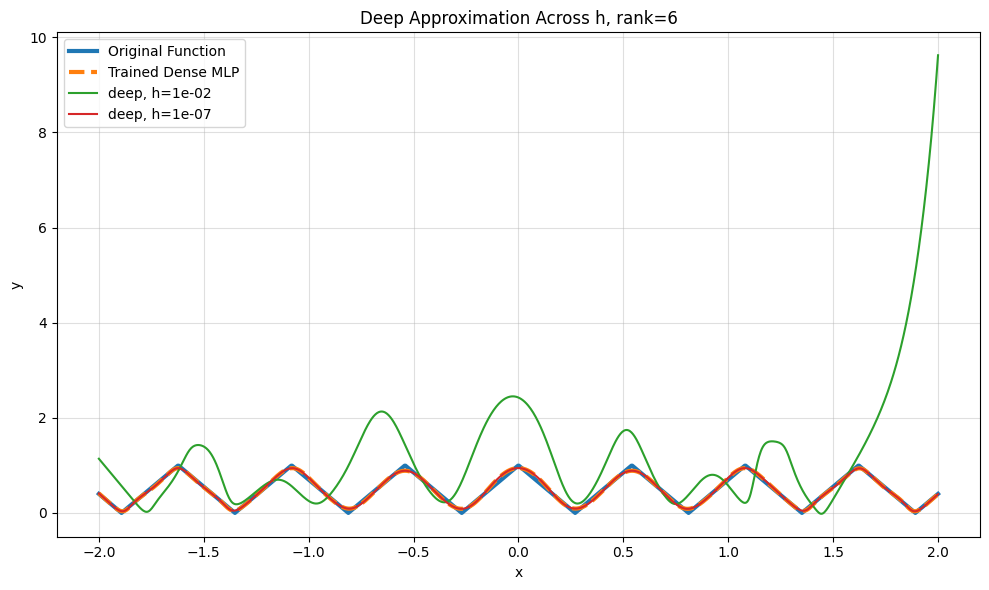}
        \caption{}
        \label{fig:deep_to_function}
    \end{subfigure}
    \hfill
    \begin{subfigure}[b]{0.31\textwidth}
        \centering
        \includegraphics[width=0.9\linewidth]{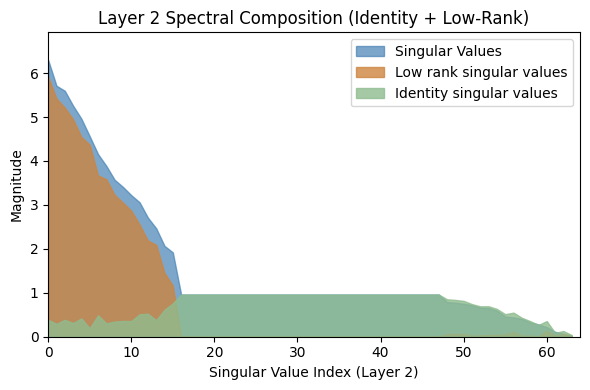}
        \caption{}
        \label{fig:deep_spectral}
    \end{subfigure}
    \hfill
    \caption{\small Deep network approximation and decomposition: (\ref{fig:deep_to_dense}) approximation error to dense MLP; (\ref{fig:deep_to_function}) approximation to sawtooth function as $h \rightarrow 0$; (\ref{fig:deep_spectral}) deep network spectral decomposition}
    \label{fig:deep_construction_spectral}
    \vspace{-1em}
\end{figure}

\begin{figure}[h]
    \centering
    \begin{subfigure}[b]{0.31\textwidth}
        \centering
        \includegraphics[width=0.9\linewidth]{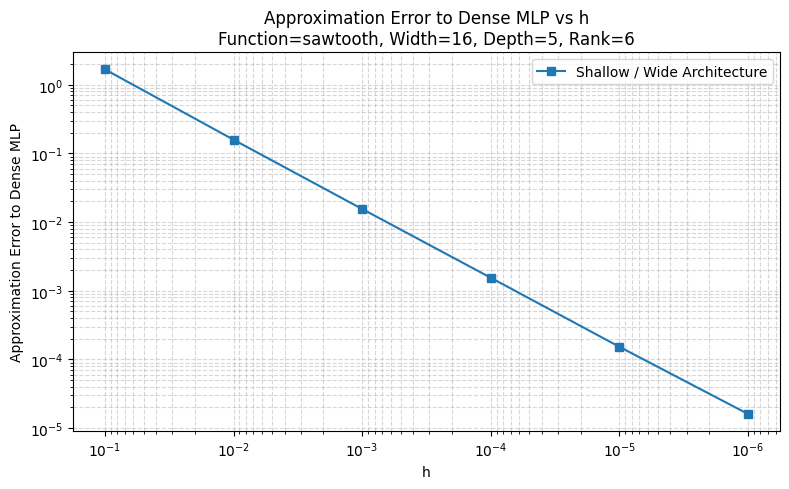}
        \caption{}
        \label{fig:wide_to_dense}
    \end{subfigure}
    \hfill
    \begin{subfigure}[b]{0.31\textwidth}
        \centering
        \includegraphics[width=0.9\linewidth]{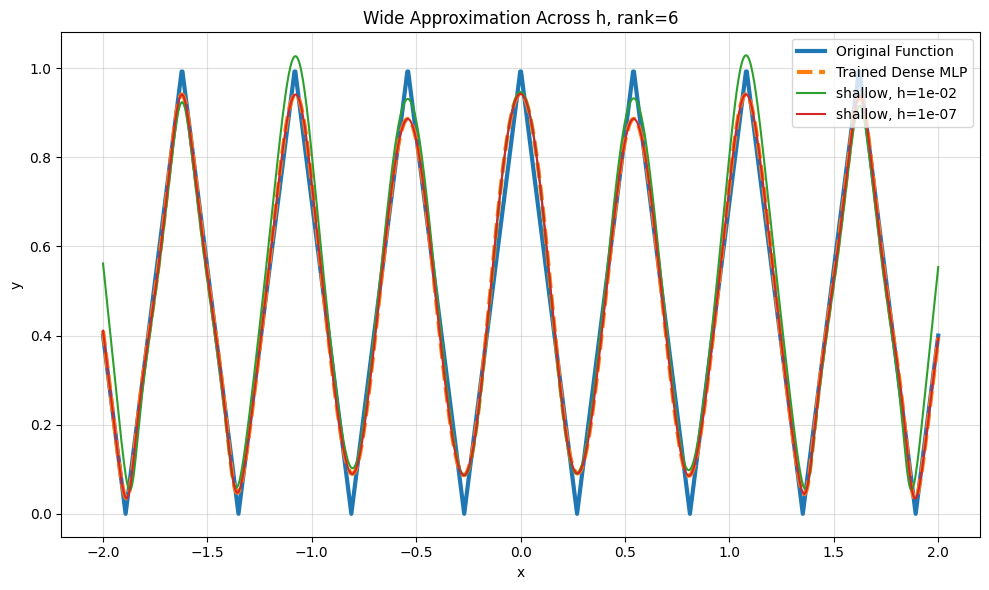}
        \caption{}
        \label{fig:wide_to_function}
    \end{subfigure}
    \hfill
    \begin{subfigure}[b]{0.31\textwidth}
        \centering
        \includegraphics[width=0.9\linewidth]{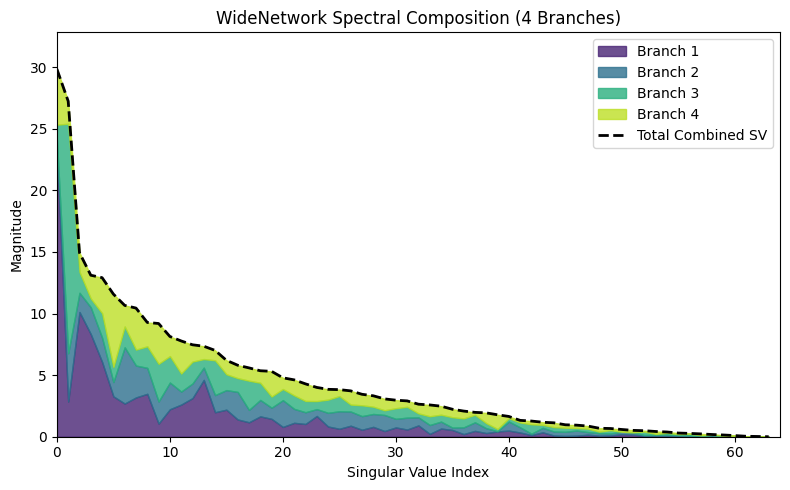}
        \caption{}
        \label{fig:wide_spectrl}
    \end{subfigure}
    \caption{\small Wide metwork approximation and decomposition: (\ref{fig:wide_to_dense}) approximation error to dense MLP; (\ref{fig:wide_to_function}) approximation to sawtooth function as $h \rightarrow 0$; (\ref{fig:wide_spectrl}) wide network spectral decomposition.} 
    \label{fig:wide_construction_spectral}\vspace{-1em}
\end{figure}

Figures \ref{fig:deep_to_dense} and \ref{fig:wide_to_dense} show that for both the deep network and the wide network, the approximation error to the dense MLP goes to $0$ as $h \rightarrow 0$, validating the first part of our proofs for Theorems \ref{thm:uat_multiplicative} and $\ref{thm:uat_additive}$ that these structures can universally approximate dense MLP. Figures \ref{fig:deep_to_function} and \ref{fig:wide_to_function} show that once the trained MLP universally approximates the given test function, the constructed structures will subsequently approximate the test function universally, validating the final part of the proofs. 
\vspace{-0.5em}
\subsection{Training of the Structures}
\vspace{-0.5em}
The numerical experiments in Section~\ref{subsec: validate_uat} serve only to validate the direct constructions used in the existence proofs of Theorems~\ref{thm:uat_multiplicative} and~\ref{thm:uat_additive}. They do not serve as a direct comparison between the multiplicative and additive architectures, since the fixed constructions introduce different numerical instabilities. We complement the construction-based experiments with direct training experiments. 

All architectures are trained under the same optimizer, learning rate, activation function. We focus on two basic performance measures: the final approximation error and the success rate of reaching a prescribed error threshold within a fixed training budget. We use the same sawtooth function defined in Section \ref{subsec: validate_uat}. All models are trained on the same training set and evaluated on the same held-out test set. All experiments are repeated over $10$ random seeds, and we report the median performance together with interquartile ranges. The detailed experimental settings can be found in \ref{subsec:training_deep_wide}.
\vspace{-0.9em}

\paragraph{Experiment 1: Fixed-budget approximation.} This experiment measures finite-budget approximation performance under the same training duration. We train all neural networks for fixed optimization budgets of $50{,}000$ and $5000$ epochs. The $50{,}000$-epoch setting ensures that the approximation error of each structure is compared at convergence. The $5000$-epoch setting benchmarks performance under a practical training budget. Figures \ref{fig:fixed_5000_err_vs_k} and \ref{fig:fixed_50000_error_vs_k.png} compare the test approximation error under fixed optimization budgets of $5000$ and $50000$ epochs. Across both budgets, the deep structured model achieves lower test error than the wide structured model for most values of $k$. The improvement is especially clear in the $5000$-budget experiment, where the deep structure often reaches errors near $10^{-4}$, while the wide structure either remains near the dense baseline or does not improve much from the fixed budget $5000$ epochs run. 

\vspace{-0.9em}
\paragraph{Experiment 2: Time-to-threshold optimization.} 
We conduct this experiment to practically measure the optimization efficiency of all structures in complement to the approximation error under a fixed optimization budget. In this experiment, training is stopped early once the training MSE falls below the threshold $10^{-3}$, we record the number of epochs required to reach this threshold and the success rate across seeds. Figures~\ref{fig:threshold_1e3_epochs} and \ref{fig:threshold_1e3_success.png} further compare the optimization efficiency of the three architectures. The deep structure reaches the training MSE threshold $10^{-3}$ in fewer epochs and with higher success rate across random seeds than the wide structure. In contrast, the dense two-hidden-layer MLP baseline succeeds for a smaller fraction of seeds. Thus, the advantage of the deep structure is not only reflected in final approximation error, but also in the reliability and speed with which it reaches a prescribed target accuracy.

\begin{figure}[t]
    \centering
    \begin{subfigure}[b]{0.24\textwidth}
        \centering
        \includegraphics[width=0.95\linewidth]{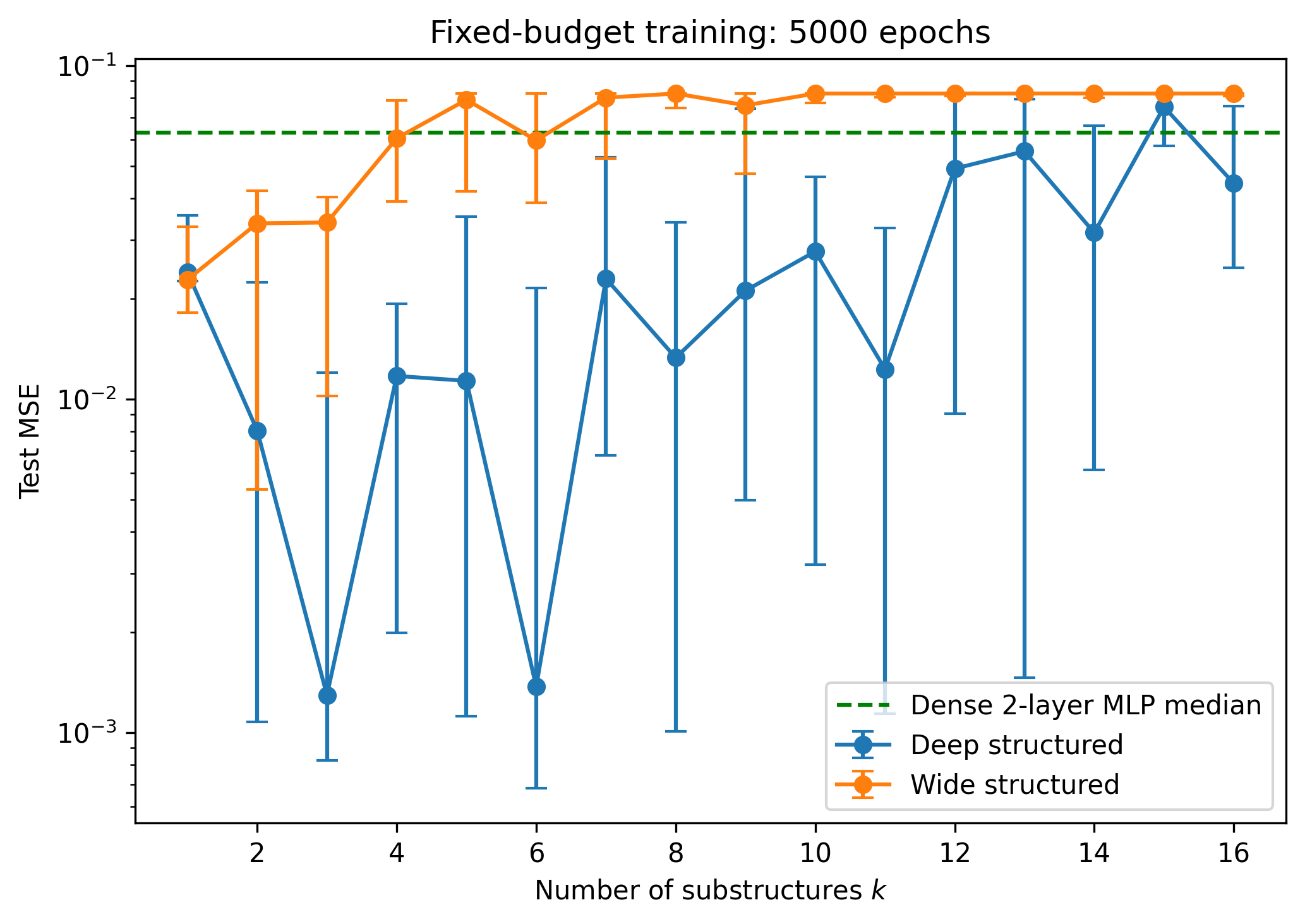}
        \caption{}
        \label{fig:fixed_5000_err_vs_k}
    \end{subfigure}
    \hfill
    \begin{subfigure}[b]{0.24\textwidth}
        \centering
        \includegraphics[width=0.95\linewidth]{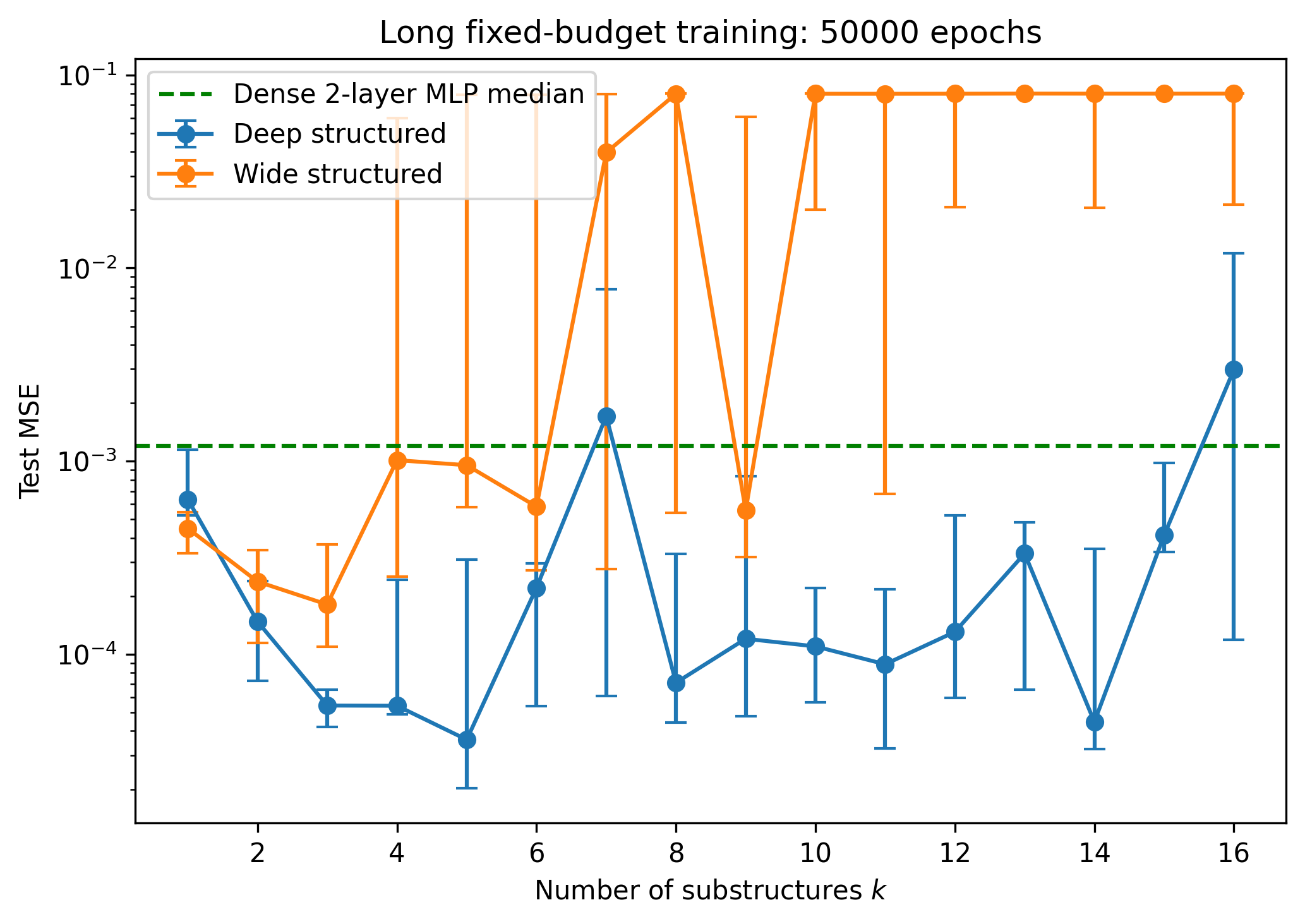}
        \caption{}
        \label{fig:fixed_50000_error_vs_k.png}
    \end{subfigure}
    \centering
    \begin{subfigure}[b]{0.24\textwidth}
        \centering
        \includegraphics[width=0.95\linewidth]{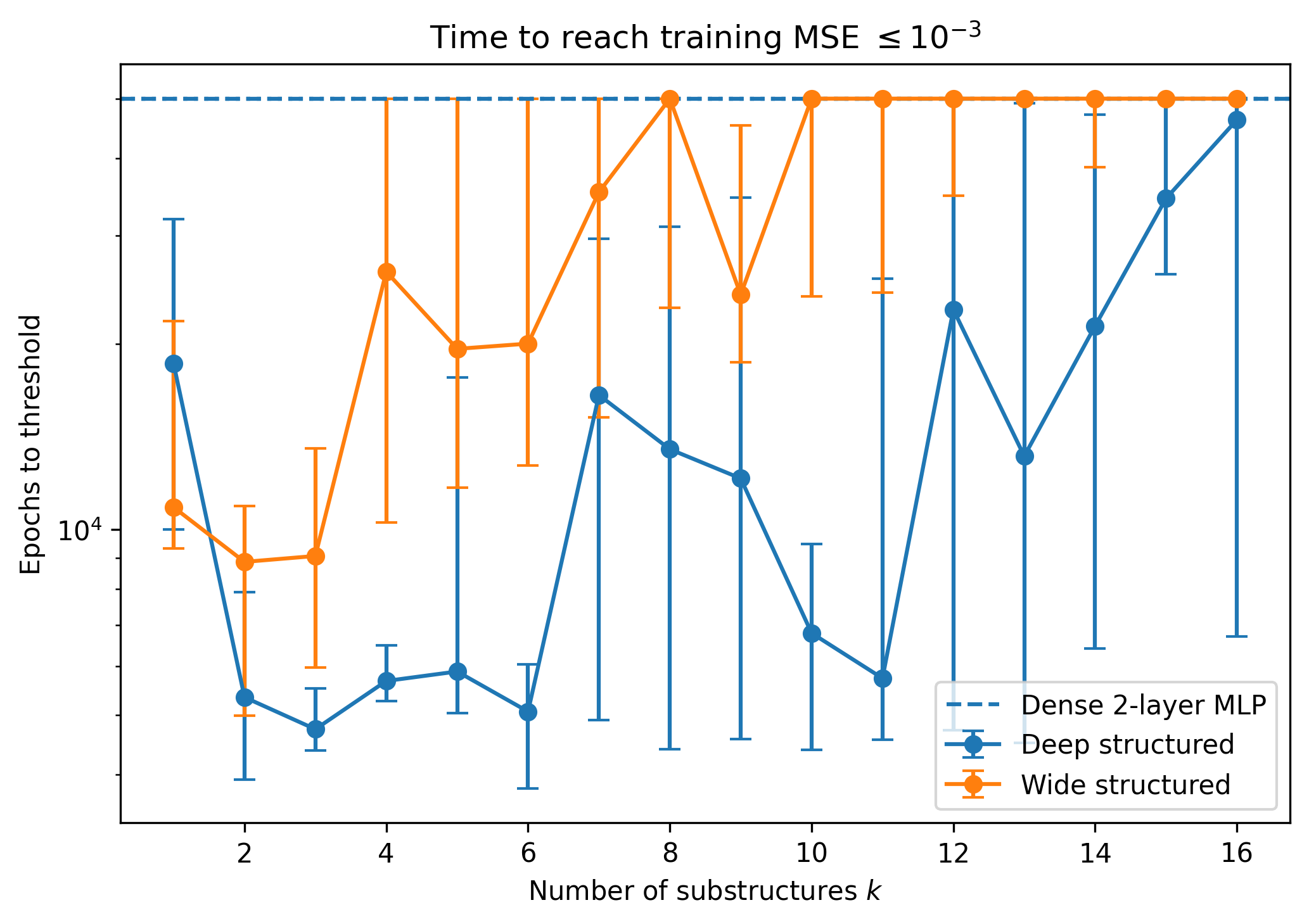}
        \caption{}
        \label{fig:threshold_1e3_epochs}
    \end{subfigure}
    \hfill
    \begin{subfigure}[b]{0.24\textwidth}
        \centering
        \includegraphics[width=0.95\linewidth]{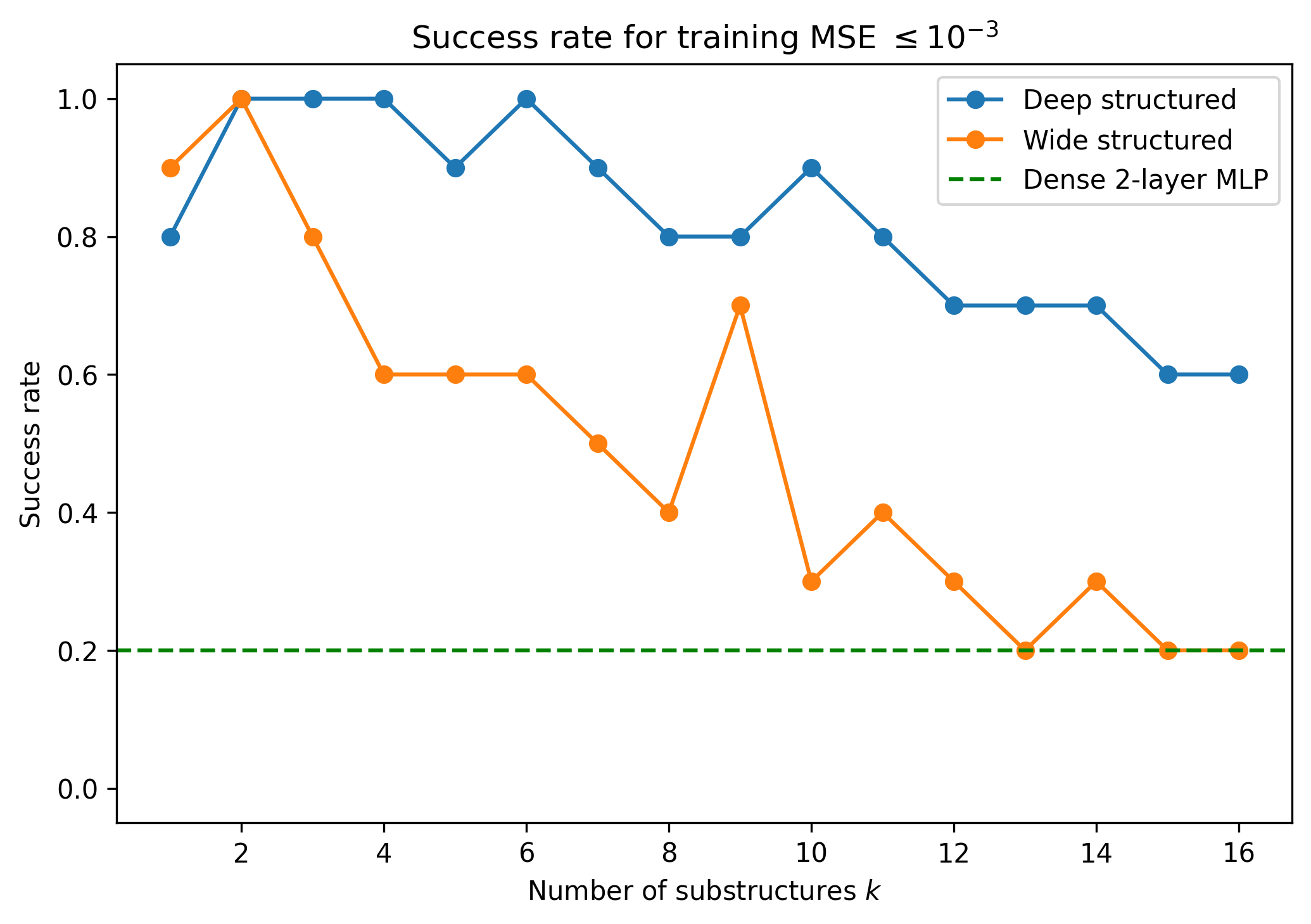}
        \caption{}
        \label{fig:threshold_1e3_success.png}
    \end{subfigure}
    \caption{\small Fixed budget approximation and early stopping experiment: (\ref{fig:fixed_5000_err_vs_k}) $5000$ epochs approximation error; (\ref{fig:fixed_50000_error_vs_k.png}), $50,000$ epochs approximation error; (\ref{fig:threshold_1e3_epochs}) epochs to reach $10^{-3}$ threshold; (\ref{fig:threshold_1e3_success.png}) success rate in achieving $10^{-3}$ within $50,000$ epochs.}
    \label{fig:training_experiment}\vspace{-2em}
\end{figure}
\vspace{-0.9em}
\paragraph{Discussion} We also observe that performance does not improve monotonically with the number of substructures $k$ in Figures \ref{fig:fixed_5000_err_vs_k}, 
\ref{fig:fixed_50000_error_vs_k.png}, \ref{fig:threshold_1e3_epochs}, and \ref{fig:threshold_1e3_success.png}. Increasing $k$ introduces a tradeoff between having more substructures while the rank of each individual substructure decreases according to $r_k=\lceil 16/k\rceil$. Thus, increasing $k$ does not naively increase model capacity. The lower success rates at larger $k$ suggest that this tradeoff can make the optimization problem more constrained and more sensitive to initialization. In the deep structure, this effect is compounded by the sequential composition of the low-rank transformations, which can make training more sensitive as the number of composed nonlinear maps increases. The resulting non-monotonicity likely reflects the interaction between low-rank parametrization and nonconvex optimization.
\vspace{-0.9em}
\paragraph{Experiment 3: Spectral Decomposition} To better understand the difference between how the representational workload is divided in the substructures of the deep and wide network, we visualize the singular-value decompositions of trained models inspired by \cite{han2024sltrain}. We project the singular vectors of the unparametrized weight matrix onto each substructure in the two neural networks. The detailed setup can be found in appendix \ref{subsec:spectral_decom_setup}. For the deep structure, the Figure \ref{fig:deep_spectral} shows that, after training, the learned transformations do not collapse to either purely low-rank approximations or purely identity mappings. Instead, the low-rank component contributes a small number of dominant singular directions, while the identity component preserves a broad background spectrum. The singular-value distribution of the DLoR layer resembles the sparse-plus-low-rank parametrization studied in~\cite{han2024sltrain}, while in our DLoR structure, the sparse component is restricted to only a diagonal scaled-identity term. For the wide structure, Figure~\ref{fig:wide_spectrl} shows a different pattern. Instead of components contributing separately, the combined singular-value spectrum is formed by the accumulation of many overlapping low-rank branch contributions. Each branch contributes a small amount across a broad range of singular directions. This suggests that the wide network distributes the representation across parallel branches in a less spectrally separated manner than the deep network.

\vspace{-0.9em}
\section{Conclusion}\label{sec:conclusion}
\vspace{-0.9em}
In this paper, we investigated the universal approximation limits of rank-deficient neural networks. While a purely rank-1 network exhibits exact interpolation capability for scalar targets, it suffers from \textit{Orthogonal Blindness} when extended to multi-dimensional continuous mappings. To resolve this without sacrificing parameter efficiency, we introduced a unified geometric theory based on sparse diagonal augmentation  ($M = \alpha I + U V^T$). By establishing a \textit{Structural Correspondence} framework of additive width expansion and multiplicative depth extension, we proved that Deep DLoR Neural Networks fully restore the Universal Approximation Theorem for general activation functions. These findings not only uncover the limits and power of low-rank architectures but also provide intuition for the next generation of parameter-efficient pretraining and fine-tuning frameworks.
\begin{ack}
This work was supported by the U. S. Army Research Laboratory and the U. S. Army Research Office under Grant W911NF2010219, Office of Naval Research under Grant N000142412673, and NSF.
\end{ack}

\bibliographystyle{plain}
\bibliography{ref}  

\newpage
\appendix
\section{Background}
\paragraph{Implicit Regularization vs. Explicit Rank Constraints.}
A parallel line of literature has studied the implicit regularization of gradient descent. Foundational work has demonstrated that deep, over-parameterized neural networks generalize remarkably well despite having the capacity to memorize random labels \citep{zhang2016understanding, brutzkus2017sgd}. Theoretical analysis of the optimization landscape reveals that extreme over-parameterization fundamentally smooths the loss surface, eliminating spurious local minima and ensuring global convergence \citep{choromanska2015loss, nguyen2017loss, yun2017global}. Within these over-parameterized spaces, gradient descent naturally biases deep linear neural networks towards low-rank or simple representations, effectively minimizing matrix rank rather than standard norms \citep{gunasekar2017implicit, arora2019implicit, razin2020implicit}. However, there is a fundamental topological distinction between an over-parameterized network that implicitly learns a low-rank solution \textit{after} exploring a high-dimensional landscape, and an architecture that is explicitly constrained to a strict low-rank manifold from initialization. Our paper formalizes why explicit constraints fail: while implicit regularization operates within a full-rank space capable of spanning all dimensions, explicit rank constraints physically sever the network's capacity to process orthogonal variations, leading to catastrophic geometric collapse.

\section{Proofs for Section~\ref{sec:limits}}
\label{sec:proof3}
\paragraph{Notation.} Throughout this paper, we establish the following mathematical conventions. Scalars are denoted by lowercase letters (e.g., $\alpha, c, h$). Vectors are denoted by lowercase letters (e.g., $x, u, v$), and matrices by uppercase letters (e.g., $W, U, V$). The $i$-th element of a vector $v$ is denoted by $v_i$, and the $(i,j)$-th entry of a matrix $W$ is denoted by $W_{i,j}$. The identity matrix of dimension $N \times N$ is denoted by $I_N$, and the $M$-dimensional column vector of all ones is denoted by $\mathbf{1}_M$. The superscript $T$ denotes the transpose of a matrix or vector. We use $\|\cdot\|$ to denote the standard Euclidean norm for vectors and the induced spectral norm for matrices. The Lebesgue measure on $\mathbb{R}^d$ is implicitly assumed when discussing probability distributions. Function composition is denoted by $\circ$, and the convolution of a function $f$ with a measure $\mu$ is denoted by $f * \mu$. We use $Z_{a:b}$ to denote the submatrix of $Z$ formed by columns $a$ through $b$. For asymptotic limits, $f(h) = o(h)$ implies $\lim_{h \to 0} \|f(h)\|/h = 0$ uniformly, and $f(h) = \mathcal{O}(h)$ implies $\|f(h)\| \le C h$ for some constant $C > 0$. For a compact set $K \subset \mathbb{R}^d$, we denote by $C(K, \mathbb{R}^m)$ the space of all continuous functions $f : K \to \mathbb{R}^m$, equipped with the sup-norm $\|f\|_{\infty} := \sup_{x \in K} \|f(x)\|_2$.
\begin{figure}[h]
    \centering
   
    \includegraphics[width=0.9\linewidth]{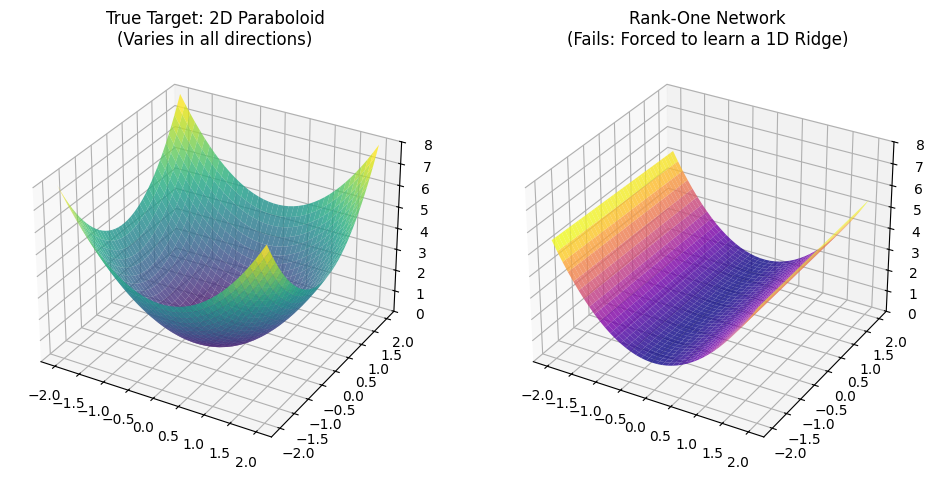}
    \caption{Illustration of approximation failure for rank-one neural networks: the target function $(x,y) \mapsto x^2 + y^2$ varies independently along two directions, while a rank-one network is restricted to a one-dimensional ridge structure. Consequently, it captures variation in only one direction and fails to approximate the full two-dimensional geometry of the function, a limitation extends to higher dimensional function approximation.}
    \label{fig:rank1_blindness}
\end{figure}

\subsection{Example: Heaviside step function}
To build intuition, we first provide a lemma regarding random projections.
\begin{lemma}[Injective Random Projection]\label{lem:random_proj}
Assume $v_1 \in \mathbb{R}^d$ is sampled from a probability distribution absolutely continuous with respect to the Lebesgue measure. Then, with probability 1, the scalar projections $y_i = v_1^T x_i$ are strictly distinct for all $i \in \{1, \dots, M\}$.
\end{lemma}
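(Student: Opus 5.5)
The plan is to show that the set of ``bad'' directions $v_1$ for which two projections coincide has Lebesgue measure zero. Then absolute continuity of the sampling distribution transfers this to probability zero.

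Fix a pair of indices $i \neq j$. The projections $y_i = v_1^T x_i$ and $y_j = v_1^T x_j$ coincide exactly when
\[
v_1^T (x_i - x_j) = 0.
\]
Since the columns of $X$ are distinct, $w_{ij} := x_i - x_j \neq 0$. Therefore the bad set
\[
H_{ij} = \{ v \in \mathbb{R}^d : v^T w_{ij} = 0 \}
\]
is a proper linear subspace of dimension $d-1$, i.e.\ a hyperplane through the origin. To see that a hyperplane has Lebesgue measure zero, I would rotate coordinates so that $w_{ij}$ is aligned with $e_1$. Then $H_{ij} = \{0\} \times \mathbb{R}^{d-1}$, and Fubini gives
\[
\lambda_d(H_{ij}) = \lambda_1(\{0\}) \cdot \lambda_{d-1}(\mathbb{R}^{d-1}) = 0.
\]
This uses rotation invariance of $\lambda_d$.

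Next I take the finite union $H = \bigcup_{i<j} H_{ij}$ over all $\binom{M}{2}$ pairs. By countable subadditivity, $\lambda_d(H) = 0$. The event that some two projections coincide is exactly $\{v_1 \in H\}$. Because the law of $v_1$ is absolutely continuous with respect to $\lambda_d$, $\Pr(v_1 \in H) = 0$. Hence, with probability one, the values $y_1, \dots, y_M$ are pairwise distinct.

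I would add that the argument does not depend on the particular distribution, only on absolute continuity. For example, a standard Gaussian $v_1$ works, and so does a uniform distribution on a ball. In the same spirit, one could note that the set of good $v_1$ is open and dense, which is useful later for deterministic choices.

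There is no serious obstacle here. The only point needing care is justifying that a hyperplane is Lebesgue-null, which the rotation-plus-Fubini step handles. Alternatively, the hyperplane is the zero set of a nonzero polynomial, and such sets are null.
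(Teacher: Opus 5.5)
Your proposal is correct and follows the paper's own proof: the bad set is the finite union of the $\binom{M}{2}$ hyperplanes $\{v : v^T(x_i - x_j) = 0\}$, which is Lebesgue-null, so absolute continuity gives probability zero. Your rotation-plus-Fubini argument that a hyperplane is null, and your explicit use of the distinctness of the $x_i$ to get $x_i - x_j \neq 0$, fill in details the paper leaves implicit.
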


Because the projections $y_i$ are distinct reals, we define a permutation index $s(\cdot)$ that strictly sorts them: $y_{s(1)} < y_{s(2)} < \dots < y_{s(M)}$. We apply the identical permutation to the targets to obtain $z_{s(i)}$. 

If we temporarily define $\sigma(x)$ as the Heaviside step function where $\sigma(x) = 1$ if $x \geq 0$ and $0$ otherwise, we can construct an explicit thermometer encoding.
\begin{theorem}[Thermometer Encoding]\label{thm:thermometer}
Let $N=M$ and $u_1 = \mathbf{1}_M$. For the Heaviside activation, there exist a threshold bias vector $b_1 \in \mathbb{R}^M$ and a readout vector $v_2 \in \mathbb{R}^M$ such that the purely rank-1 network achieves zero interpolation error.
\end{theorem}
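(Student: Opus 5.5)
We start from Lemma~\ref{lem:random_proj}: pick $v_1$ so that the projections $y_i = v_1^T x_i$ are pairwise distinct, and sort them as $y_{s(1)} < \dots < y_{s(M)}$. With $u_1 = \mathbf{1}_M$, the $j$-th hidden pre-activation on sample $i$ is $y_i + (b_1)_j$. The task therefore reduces to a one-dimensional interpolation problem on sorted scalars.

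\textbf{Thresholds.} Choose $b_1$ so that hidden unit $j$ fires exactly on the samples whose projection is at least $y_{s(j)}$. Concretely, set $(b_1)_j = -y_{s(j)}$. Then $\sigma(y_{s(i)} - y_{s(j)}) = 1$ if and only if $i \ge j$, using the convention $\sigma(0)=1$. Any threshold strictly between $y_{s(j-1)}$ and $y_{s(j)}$ also works; the left endpoint is taken as $-\infty$ when $j=1$.

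\textbf{The linear system.} After reordering the columns by $s$, the hidden activation matrix $H = \sigma(u_1 v_1^T X + b_1 \mathbf{1}_M^T)$ is lower triangular with all entries on and below the diagonal equal to $1$. Hence $H_{ji} = 1$ exactly when $j \le i$. This is the thermometer code: sample $s(i)$ activates the first $i$ units. The triangular matrix of ones is invertible, with determinant $1$, so $v_2^T H = Z$ has a unique solution. We obtain it by back-substitution as first differences of the sorted targets: $(v_2)_1 = z_{s(1)}$ and $(v_2)_j = z_{s(j)} - z_{s(j-1)}$ for $j \ge 2$. The output on sample $s(i)$ is then $\sum_{j\le i} (v_2)_j$, which telescopes to $z_{s(i)}$. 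This gives zero error, with no output bias needed.

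\textbf{Main obstacle.} The only delicate points are the sorting step and the boundary convention of the Heaviside function at zero. Lemma~\ref{lem:random_proj} guarantees distinctness, since for each pair $i\neq i'$ the set $\{v : v^T(x_i - x_{i'}) = 0\}$ is a hyperplane of Lebesgue measure zero. To sidestep the boundary convention entirely, we can use the midpoint thresholds $(b_1)_j = -\tfrac{1}{2}(y_{s(j-1)} + y_{s(j)})$, so that no pre-activation ever equals $0$. Checking the triangular pattern and the telescoping sum is then routine.
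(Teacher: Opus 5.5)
Your proposal is correct and follows the paper's proof essentially step for step: distinct sorted projections, thresholds placed between consecutive projections (the paper uses exactly your midpoint choice, with $t_1 = y_{s(1)}-\delta$ for the first unit) to produce the thermometer code, and first-difference readout weights that telescope to $z_{s(i)}$. Two cosmetic slips only: since $H_{ji}=1$ iff $j\le i$, the matrix is upper (not lower) triangular, and your midpoint formula is undefined at $j=1$, where you need a separate choice such as $(b_1)_1 = -y_{s(1)}+\delta$.
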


\subsubsection{Proof of Lemma~\ref{lem:random_proj}}
\begin{proof}
Consider the set of all pairwise difference vectors between the input samples, defined as
\begin{equation}
    \Delta_{ij} = x_i - x_j
\end{equation}
for all $i \neq j$. If the scalar projections of two distinct points are equal, such that $y_i = y_j$, this strictly implies that their difference is orthogonal to the projection vector, yielding
\begin{equation}
    v_1^T \Delta_{ij} = 0
\end{equation}
For any fixed, non-zero difference vector $\Delta_{ij}$, this orthogonality condition defines a $(d-1)$-dimensional hyperplane in $\mathbb{R}^d$ passing through the origin. Because our dataset contains a finite number of samples $M$, there are exactly $\binom{M}{2}$ such hyperplanes. The finite union of these hyperplanes has a Lebesgue measure of strictly zero in $\mathbb{R}^d$. Consequently, because $v_1$ is sampled from a probability distribution that is absolutely continuous with respect to the Lebesgue measure, the probability of drawing a vector $v_1$ that falls perfectly inside this zero-measure union is exactly zero. Therefore, with probability 1, a randomly chosen $v_1$ satisfies $v_1^T \Delta_{ij} \neq 0$ for all pairs, guaranteeing strictly distinct projections.
\end{proof}

\subsubsection{Proof of Theorem~\ref{thm:thermometer}}
\begin{proof}
We construct the threshold bias vector $b_1 = [-t_1, -t_2, \dots, -t_M]^T$ by strategically placing the thresholds $t_j$ between the sorted projection values using the midpoint rule. For the first threshold, we define a small margin $\delta > 0$ and set
\begin{equation}
    t_1 = y_{s(1)} - \delta
\end{equation}
For all subsequent thresholds where $j > 1$, we place them exactly halfway between adjacent sorted projections:
\begin{equation}
    t_j = \frac{1}{2}\left(y_{s(j-1)} + y_{s(j)}\right)
\end{equation}
The $j$-th component of the hidden layer for a given input $x_{s(i)}$ then evaluates to $h_{s(i), j} = \sigma(y_{s(i)} - t_j)$. Because the projection sequence is strictly sorted, if the index satisfies $j \leq i$, it follows that $y_{s(i)} \ge y_{s(j)}$. By the definition of our midpoints, we obtain that $y_{s(j)} > t_j$, which guarantees that the argument to the activation function is strictly positive, yielding $\sigma(y_{s(i)} - t_j) = 1$. 

Conversely, if the index satisfies $j > i$, this implies $j-1 \ge i$, meaning $y_{s(j-1)} \ge y_{s(i)}$. Our midpoint definition dictates that $t_j > y_{s(j-1)}$, resulting in a strictly negative argument and yielding $\sigma(y_{s(i)} - t_j) = 0$. Consequently, the hidden layer representation structurally collapses into a perfect thermometer encoding:
\begin{equation}
    h_{s(i)} = (\underbrace{1, \dots, 1}_{i}, \underbrace{0, \dots, 0}_{M-i})^T
\end{equation}
To finalize the exact interpolation, we construct the output weights $v_2 = [w_1, \dots, w_M]^T$ using backward differences. We anchor the first weight to the first target by setting $w_1 = z_{s(1)}$, and compute the remaining weights as
\begin{equation}
    w_j = z_{s(j)} - z_{s(j-1)}
\end{equation}
for all $j \ge 2$. Summing these weights up to index $i$ perfectly telescopes to the desired target $z_{s(i)}$, mathematically satisfying $v_2^T h_{s(i)} = z_{s(i)}$ and driving the interpolation error to zero.
\end{proof}

\subsection{Proof of Theorem~\ref{thm:gen_act_interp}}

This interpolation capacity is not an artifact of the step function; it generalizes to a broad class of continuous functions, provided that the activation does not exhibit global periodicity that annihilates compact measures.
\begin{proof}
Fix an input projection vector $v_1$ that yields strictly distinct scalar projections $y_i = v_1^T x_i$. Let the hidden weights be $u_1 = \mathbf{1}_M$ and the biases be $b_1 = [b_{11}, \dots, b_{1M}]^T$. We can explicitly define the $M \times M$ evaluation matrix $H$ by its entries:
\begin{equation}
    H_{i,j} = \sigma(y_j + b_{1i})
\end{equation}
By defining the data offsets as $\delta_j = y_j - y_1$ and the evaluation points as $t_i = y_1 + b_{1i}$, we can rewrite the matrix entries as $H_{i,j} = \sigma(t_i + \delta_j)$. This naturally motivates the definition of translated activation functions $g_j(t) = \sigma(t + \delta_j)$, allowing us to cleanly express the matrix entries as $H_{i,j} = g_j(t_i)$.

We first define the evaluation matrix $H(m)$ for some given scalar functions $g_1, \dots, g_m$, and evaluation points $t_1, \dots, t_m$ by its entries: $H(m)_{i,j} = g_j(t_i)$. Now we establish that if these functions $g_1, \dots, g_m$ are linearly independent on the real line, we can recursively select the evaluation points $t_i$ to ensure that the determinant of $H(m)$ is strictly non-zero by induction on $m$. Then by taking $m=M$ and setting $H=H(M)$, we prove the existence of the invertible evaluation matrix $H$. \textbf{Base case}: when $m=1$, for a single dimension, linear independence ensures that $g_1$ is not the zero function, allowing us to select a point $t_1$ where $g_1(t_1) \neq 0$. For the induction hypothesis, assuming this property holds for all $i = m-1$ case. At the inductive step, given linearly independent functions $g_1, \dots, g_{m}$, by the induction hypothesis, there exists $t_1, \dots, t_{m-1}$ such that $H(m-1)$ is strictly invertible. To extend $H(m-1)$ to $H_m$, we evaluate the determinant of $H(m)$ by expansion along the final row as a function of an arbitrary evaluation point $x$:
\begin{equation}
    f(x) = \sum_{i=1}^M C_i g_i(x)
\end{equation}
The last coefficient of this expansion is exactly the determinant of $H(m-1)$, $C_M = \det(H(m-1))$, which is non-zero by our inductive hypothesis. Because the functions are linearly independent, this linear combination is strictly non-zero, ensuring there exists a valid evaluation point $t_m = x$ that renders the full matrix $H(m)$ invertible.

To formalize the required linear independence of the translated functions, we assume by contradiction that they are linearly dependent. This implies the existence of real constants $c_j$, not all zero, such that their linear combination vanishes for all real $t$:
\begin{equation}
    \sum_{j=1}^M c_j \sigma(t + \delta_j) = 0
\end{equation}
This dependency perfectly describes the convolution of the activation function $\sigma$ with a discrete measure $\mu$. We explicitly construct this measure as a linear combination of Dirac measures centered at the negative offsets, defined as
\begin{equation}
    \mu = \sum_{j=1}^M c_j \delta_{-\delta_j}
\end{equation}
Because the original projections $y_j$ are strictly distinct, the corresponding offsets $\delta_j$ are also distinct. Combined with the non-zero constants $c_j$, this makes $\mu$ a strictly non-zero, compactly supported measure. This convolutional relationship forces $\sigma$ to be a mean-periodic function, which directly contradicts our initial premise. Thus, the translated functions are linearly independent, the matrix $H$ is strictly invertible, and we can compute the exact linear readout weights as $v_2^T = Z H^{-1}$.
\end{proof}

\subsection{Proof of Theorem~\ref{thm:impossibility}}
\begin{proof}
Assume for the sake of contradiction that a valid solution exists. Evaluating the network mapping for any single sample $x_i$ yields the output equation:
\begin{equation}
    z_i = U_2 (V_2^T h_i) + b_2
\end{equation}
By isolating the scalar inner product $\alpha_i = V_2^T h_i \in \mathbb{R}$, we can rewrite the multi-dimensional output mapping as:
\begin{equation}
    z_i = \alpha_i U_2 + b_2
\end{equation}
This algebraic form imposes a severe geometric constraint. It mathematically requires every target vector $z_i$ in the dataset to be expressible as a simple scalar scaling of a single fixed vector $U_2$, translated by a fixed bias $b_2$. Geometrically, this restricts all target points to lie strictly on a single one-dimensional affine line within the higher-dimensional space $\mathbb{R}^k$. By the explicit geometric premise that the $M$ target vectors reside in general position, no three points are perfectly collinear. Consequently, for almost all random target sets, the pure rank-1 network is structurally incapable of spanning the required output space, and the system fundamentally collapses.
\end{proof}
\subsection{Proof of Zero Interpolation Error for Full Outer Layer}
\label{app:zero_error_outer_layer}

In this subsection, we provide the formal theorem and proof supporting the claim that a network with a full-capacity outer layer can achieve exactly zero interpolation error on scalar targets, even when the hidden layer is restricted to a strictly rank-1 structure.
\begin{theorem}[Zero Error Memorization for Multi-Dimensional Targets]
\label{thm:zero_error_outer_multi}
Let $X \in \mathbb{R}^{d \times M}$ be a dataset matrix comprising $M$ distinct input column vectors, and let $Z \in \mathbb{R}^{k \times M}$ be the corresponding arbitrary $k$-dimensional target matrix. Consider the two-layer neural network parameterized in matrix form as:
\begin{equation}
    f(X) = U_2 V_2^T \sigma(u_1 v_1^T X + b_1 \mathbf{1}_M^T) + b_2 \mathbf{1}_M^T
\end{equation}
where $u_1 \in \mathbb{R}^M$ and $v_1 \in \mathbb{R}^d$ form the rank-1 hidden weight matrix $W_1 = u_1 v_1^T$, $b_1 \in \mathbb{R}^M$ is the hidden bias, $U_2 V_2^T \in \mathbb{R}^{k \times M}$ represents the factorized outer layer weights, and $b_2 \in \mathbb{R}^k$ is the outer bias. $\mathbf{1}_M \in \mathbb{R}^M$ is the all-ones vector.

If the outer layer $U_2 V_2^T$ is fully parameterized (i.e., capable of expressing any $k \times M$ matrix) and $\sigma$ is a non-mean-periodic activation function (e.g., the Heaviside step function), there exist parameters $\{u_1, v_1, b_1, U_2, V_2, b_2\}$ such that the network achieves exactly zero interpolation error:
\begin{equation}
    U_2 V_2^T \sigma(u_1 v_1^T X + b_1 \mathbf{1}_M^T) + b_2 \mathbf{1}_M^T = Z
\end{equation}
\end{theorem}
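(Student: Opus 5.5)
The plan is to reduce Theorem~\ref{thm:zero_error_outer_multi} to the invertibility argument already used in the proof of Theorem~\ref{thm:gen_act_interp}, and then solve for the outer layer by linear algebra.

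\textbf{Step 1: distinct projections.} By Lemma~\ref{lem:random_proj}, choose $v_1 \in \mathbb{R}^d$ so that the scalar projections $y_i = v_1^T x_i$ are pairwise distinct. Any draw from an absolutely continuous distribution works almost surely.

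\textbf{Step 2: invertible hidden matrix.} Set $u_1 = \mathbf{1}_M$, so the hidden pre-activation for sample $j$ in unit $i$ is $y_j + b_{1i}$. Let $H = \sigma(u_1 v_1^T X + b_1 \mathbf{1}_M^T) \in \mathbb{R}^{M\times M}$, with entries $H_{i,j} = \sigma(y_j + b_{1i})$. Define offsets $\delta_j = y_j - y_1$ and points $t_i = y_1 + b_{1i}$, so that $H_{i,j} = g_j(t_i)$ with $g_j(t) = \sigma(t+\delta_j)$.

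\begin{itemize}
\item Because $\sigma$ is non-mean-periodic and the $\delta_j$ are distinct, the translates $g_1,\dots,g_M$ are linearly independent. Otherwise $\sigma * \mu = 0$ for the nonzero compactly supported discrete measure $\mu = \sum_j c_j \delta_{-\delta_j}$, exactly as in the proof of Theorem~\ref{thm:gen_act_interp}.
\item The inductive cofactor-expansion argument from that proof then selects $t_1,\dots,t_M$, hence $b_{1i} = t_i - y_1$, making $H$ invertible.
\end{itemize}

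For the Heaviside example, I can bypass this step: the thermometer matrix of Theorem~\ref{thm:thermometer} is, after the permutation $s$, lower triangular with unit diagonal, so it is already invertible.

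\textbf{Step 3: solve for the outer layer.} Take $b_2 = 0$. Set the outer weight to $W_2 = Z H^{-1} \in \mathbb{R}^{k\times M}$, which the full parameterization of $U_2 V_2^T$ permits; for instance $U_2 = Z H^{-1}$ and $V_2 = I_M$. Then
\[
W_2 H + b_2 \mathbf{1}_M^T = Z,
\]
so the interpolation error is exactly zero. Each of the $k$ output rows is handled simultaneously, since $H$ does not depend on $Z$. This is why multi-dimensional targets pose no difficulty here, in contrast with Theorem~\ref{thm:impossibility}: the affine collapse there came entirely from the rank-1 restriction on the outer layer.

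\textbf{Main obstacle.} The only nontrivial step is the invertibility of $H$ in Step~2. The inductive step needs care: the determinant $f(x)=\sum_i C_i g_i(x)$, expanded along the new row, must be a nonzero function. This follows because its leading cofactor is $\det H(m-1)\neq 0$ and the $g_i$ are linearly independent. I would state this explicitly and otherwise cite the proof of Theorem~\ref{thm:gen_act_interp} verbatim.
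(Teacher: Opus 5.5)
Your proof is correct. It shares the paper's overall skeleton: distinct projections, $u_1=\mathbf{1}_M$, an invertible $H$, then $W_2=(Z-b_2\mathbf{1}_M^T)H^{-1}$. It handles the one nontrivial step differently, and more soundly.

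The paper fixes the biases explicitly through the interlacing $t_1<s_1<t_2<\cdots<t_M<s_M$ with $b_{1,j}=-s_j$. It then asserts that non-mean-periodicity makes the translates $\sigma(\cdot-s_j)$ linearly independent on the prescribed points $\{t_i\}$, so $H$ is nonsingular. That inference does not follow: independence of translates as functions on $\mathbb{R}$ does not imply independence of their restrictions to a fixed finite set. Worse, with this ordering $t_i<s_M$ for every $i$. So for any $\sigma$ vanishing on $(-\infty,0)$, including the Heaviside example named in the statement and ReLU, the last row of $H$ is identically zero.

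You instead choose the biases adaptively by the cofactor-expansion induction from the proof of Theorem~\ref{thm:gen_act_interp}. That is exactly what is needed to turn linear independence of the $g_j$ into an invertible evaluation matrix. It yields only existence of suitable biases rather than an explicit formula, but it is valid for the whole non-mean-periodic class.

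Your Heaviside shortcut via the thermometer matrix places each threshold below the corresponding sorted projection. This orientation makes the matrix unit lower triangular. The paper's interlacing uses the reverse orientation and gives a strictly triangular, singular matrix.

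The remaining differences are cosmetic: you set $b_2=0$, and you give the concrete factorization $U_2=ZH^{-1}$, $V_2=I_M$, which presumes inner dimension $M$.
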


\begin{proof}
There exists $v_1 \in \mathbb{R}^d$ such that the scalars $t_i := v_1^\top x_i$ are pairwise distinct. Without loss of generality, assume $t_1 < t_2 < \cdots < t_M$.

Set $u_1 = \mathbf{1}_M \in \mathbb{R}^M$, and choose $b_1 \in \mathbb{R}^M$ as $b_{1,j} = -s_j$, where $s_1 < s_2 < \cdots < s_M$ satisfy the interlacing condition
\[
t_1 < s_1 < t_2 < s_2 < \cdots < t_M < s_M.
\]
Then the pre-activation matrix
\[
S := u_1 v_1^\top X + b_1 \mathbf{1}_M^\top \in \mathbb{R}^{M\times M}
\]
satisfies $S_{j,i} = t_i - s_j$. Define $H := \sigma(S)$, i.e., $H_{j,i} = \sigma(t_i - s_j)$.

Since $\sigma$ is non-mean-periodic, its translates $\{\sigma(\cdot - s_j)\}_{j=1}^M$ are linearly independent on the distinct points $\{t_i\}_{i=1}^M$. Hence $H$ is nonsingular.

The network output is
\[
f(X) = U_2 V_2^\top H + b_2 \mathbf{1}_M^\top.
\]
Let $W_2 := U_2 V_2^\top$. We require
\[
W_2 H + b_2 \mathbf{1}_M^\top = Z.
\]
Since $H$ is invertible, for any $b_2$ we can set
\[
W_2 = (Z - b_2 \mathbf{1}_M^\top) H^{-1}.
\]

By assumption, the factorization $U_2 V_2^\top$ can represent any matrix in $\mathbb{R}^{k\times M}$ (equivalently, its factorization rank is at least $\min(k,M)$). Thus there exist $U_2, V_2$ such that $U_2 V_2^\top = W_2$.

With the above construction, we obtain $f(X) = Z$, completing the proof.
\end{proof}
\section{Proofs for Section~\ref{sec:duality}}
\label{sec:proof4}
\subsection{Proof of Lemma~\ref{lem:kidger_4_1}}
\paragraph{Remark.}
Lemma~\ref{lem:kidger_4_1} is a result in~\citep{kidger2020universal}. We include their constructive proof for completeness and to improve readability, using arguments adapted to our setting.
\begin{proof}
By the Taylor expansion of the activation function $\rho$ around the expansion point $c$, we have:
\begin{equation}\label{eq:taylor_rho_identity}
    \rho(hx + c) = \rho(c) + h \rho'(c) x + \mathcal{R}(hx)
\end{equation}
where the Taylor remainder strictly satisfies $\|\mathcal{R}(hx)\| = o(h)$ uniformly for all $x$ within any compact set. Substituting the expansion into the outer affine function $\Psi_h$, we obtain:
\begin{equation}\label{eq:psi_expansion}
    \Psi_h(\rho(\Phi_h(x))) = \frac{\rho(c) + h \rho'(c) x + o(h) - \rho(c)}{h \rho'(c)} 
\end{equation}
The constant $\rho(c)$ terms in the numerator perfectly cancel. Dividing the remaining terms by the denominator $h \rho'(c)$ simplifies the expression to:
\begin{equation}
    x + \frac{o(h)}{h \rho'(c)} = x + o(1)
\end{equation}
Because $\rho'(c)$ is a non-zero constant, as the scaling parameter $h \to 0$, the $o(1)$ residual term uniformly vanishes. Consequently, the composition uniformly converges to exactly $x$.
\end{proof}
\subsection{Proof of Theorem~\ref{thm:additive_width}}

\begin{proof}
By the Taylor expansion of the activation function $\rho$ around the expansion point $c$, for any bounded input $x \in \mathcal{K}$, we have:
\begin{equation}\label{eq:taylor_rho}
    \rho\left(\frac{h}{\beta_l} M_l x + c \mathbf{1}_N\right) = \rho(c)\mathbf{1}_N + \frac{h}{\beta_l} \rho'(c) M_l x + \mathcal{E}_l(h)
\end{equation}
Because the derivative $\rho'$ is continuous and the domain $\mathcal{K}$ is compact, the remainder term is uniformly bounded, satisfying $\|\mathcal{E}_l(h)\| = o(h)$. We substitute this expansion into the left side of our additive ensemble equation:
\begin{equation}
    \sum_{l=1}^L \frac{\beta_l}{h \rho'(c)} \left[ \rho(c)\mathbf{1}_N + \frac{h}{\beta_l} \rho'(c) M_l x + \mathcal{E}_l(h) \right] 
\end{equation}
Distributing the summation yields three distinct terms:
\begin{equation}\label{eq:additive_expansion}
    \frac{\rho(c)}{h \rho'(c)}\mathbf{1}_N \left( \sum_{l=1}^L \beta_l \right) + \sum_{l=1}^L M_l x + \sum_{l=1}^L \frac{\beta_l}{h \rho'(c)} \mathcal{E}_l(h)
\end{equation}
Crucially, because we enforce the structural constraint $\sum_{l=1}^L \beta_l = 0$, the zero-order bias term perfectly annihilates. Since we defined the exact additive decomposition as $W = \sum_{l=1}^L M_l$, the first-order term simplifies exactly to $W x$. The entire expression reduces to:
\begin{equation}
    W x + \sum_{l=1}^L \frac{\beta_l}{h \rho'(c)} \mathcal{E}_l(h)
\end{equation}
Because $\|\mathcal{E}_l(h)\| = o(h)$, the scaled remainder term strictly evaluates to $\frac{1}{h} o(h) = o(1)$. As the scaling parameter $h \to 0$, the residual error uniformly vanishes, and the ensemble converges exactly to the dense affine transformation $Wx$.
\end{proof}

\subsection{Proof of Theorem~\ref{thm:multiplicative_depth}}

\begin{proof}
Let $Z \in \mathbb{R}^{N \times N}$ be an arbitrary invertible change-of-basis matrix. We define the block matrix $\mathcal{M}_k(Z)$ formed by taking the first $k \cdot r$ columns from $WZ$ and the remaining columns from $Z$. Because $W$ is an isomorphism, the set of matrices $Z$ for which the column subspaces of $WZ_{1:kr}$ and $Z_{kr+1:N}$ act as complementary subspaces forms a Zariski dense open set in $\mathbb{R}^{N \times N}$. Since the finite intersection of dense open sets remains dense, it is guaranteed that there exists an invertible matrix $Z$ such that the determinant $\det(\mathcal{M}_k(Z)) \neq 0$ for all $k \in \{1, \dots, L-1\}$. We fix one such matrix $Z$.

Choosing an arbitrary scalar $\alpha \neq 0$, we define the global error matrix:
\begin{equation}
    E = W - \alpha^L I_N
\end{equation}
We decompose this error matrix $E$ into $L$ rank-bounded components using our chosen basis $Z$. Let $C_j \in \mathbb{R}^{N \times r_j}$ represent the $j$-th block of columns of the product $E Z$, and let $D_j^T \in \mathbb{R}^{r_j \times N}$ represent the corresponding $j$-th block of rows of the inverse basis $Z^{-1}$. By block matrix multiplication, we can express the error as:
\begin{equation}
    E = \sum_{j=1}^L C_j D_j^T
\end{equation}
Since the inner dimension is exactly $r_j \le r$, each term strictly has a rank of at most $r$. We now define a sequence of partial product matrices for $k = 0, \dots, L$:
\begin{equation}\label{eq:partial_product}
    P_k = \alpha^{k-L} \left( \alpha^L I_N + \sum_{j=1}^k C_j D_j^T \right)
\end{equation}
Notice that the boundary conditions perfectly align, yielding $P_0 = I_N$ and $P_L = W$. Consider the product $P_k Z$. Because $D_j^T$ extracts the $j$-th block of columns from any matrix multiplied by $Z$, the columns of $P_k Z$ perfectly match the columns of $\mathcal{M}_k(Z)$ up to scalar multiples of $\alpha$. Factoring out these scalars yields the proportional determinant relationship $\det(P_k Z) \propto \det(\mathcal{M}_k(Z))$. Since we explicitly chose $Z$ such that $\det(\mathcal{M}_k(Z)) \neq 0$, it strictly follows that $\det(P_k) \neq 0$, making every intermediate matrix $P_k$ globally invertible.

We explicitly construct our sparse low-rank components by setting $U_k = \alpha^{k-L} C_k$ and $V_k = (P_{k-1}^{-1})^T D_k$. We verify this decomposition by defining the iterative product $Q_k = \prod_{j=1}^k (\alpha I_N + U_j V_j^T)$ and demonstrating that $Q_k = P_k$ by mathematical induction. For the base case $k=0$, $Q_0 = P_0$. Assuming $Q_{k-1} = P_{k-1}$, we evaluate the next step:
\begin{equation}\label{eq:Q_k_step}
    Q_k = (\alpha I_N + U_k V_k^T) P_{k-1} = \alpha P_{k-1} + U_k \left(V_k^T P_{k-1}\right)
\end{equation}
By substituting our definition of $V_k$, the term $V_k^T P_{k-1}$ simplifies exactly to $D_k^T$. This reduces the equation to:
\begin{equation}
    Q_k = \alpha P_{k-1} + \alpha^{k-L} C_k D_k^T
\end{equation}
Expanding the $\alpha P_{k-1}$ expression and adding the low-rank term incorporates the $k$-th block of the decomposition, yielding exactly $P_k$. By mathematical induction, the final product evaluates to $Q_L = P_L = W$.

{Having established the algebraic decomposition $W = \prod_{l=1}^L M_l$, we now rigorously prove the asymptotic convergence of the sequential mapping. Let $\mathcal{A}_h(z) = (\Psi_h \circ \rho \circ \Phi_h)(z)$ represent the intermediate identity approximation block. By substituting the definitions of $\Phi_h$ and $\Psi_h$ and expanding the activation function via Taylor's Theorem, for any bounded input $z$, this block applies the identity mapping plus a vanishing uniform error: $\mathcal{A}_h(z) = z + \frac{\mathcal{R}(hz)}{h\rho'(c)}$, where the remainder strictly satisfies $\|\mathcal{R}(hz)\| = o(h)$.}

{Let the ideal linear state sequence be recursively defined as $Z^{(0)} = x$ and $Z^{(l)} = M_l Z^{(l-1)}$, noting that $Z^{(L-1)} = (\prod_{l=1}^{L-1} M_l)x$. We track the actual forward-propagated state of the deep network, defined recursively as $\tilde{Z}^{(0)} = x$ and $\tilde{Z}^{(l)} = \mathcal{A}_h(M_l \tilde{Z}^{(l-1)})$. We proceed by induction to show $\|\tilde{Z}^{(l)} - Z^{(l)}\| \to 0$ uniformly as $h \to 0$.}

{\textbf{Base Case ($l=1$):} The first block evaluates to $\tilde{Z}^{(1)} = \mathcal{A}_h(M_1 x) = M_1 x + \frac{\mathcal{R}(h M_1 x)}{h\rho'(c)}$. Because $x$ resides in a compact set $\mathcal{K}$, the linear projection $M_1 x$ is strictly bounded. The scaled remainder evaluates to $\frac{o(h)}{h} = o(1)$. Thus, $\tilde{Z}^{(1)} = Z^{(1)} + o(1)$.}

{\textbf{Inductive Step:} Assume $\tilde{Z}^{(l-1)} = Z^{(l-1)} + \mathcal{E}_{l-1}(h)$ where $\mathcal{E}_{l-1}(h) = o(1)$. The next block evaluates to $\tilde{Z}^{(l)} = \mathcal{A}_h(M_l Z^{(l-1)} + M_l \mathcal{E}_{l-1}(h))$. Applying the Taylor expansion to this specific pre-activation vector yields:}
\begin{equation}
    {\tilde{Z}^{(l)} = M_l Z^{(l-1)} + M_l \mathcal{E}_{l-1}(h) + \frac{\mathcal{R}\big(h M_l (Z^{(l-1)} + \mathcal{E}_{l-1}(h))\big)}{h\rho'(c)}}
\end{equation}
{Because $M_l$ is a fixed, bounded linear operator, the transformed prior error remains $M_l \mathcal{E}_{l-1}(h) = o(1)$. Furthermore, because the argument inside the remainder scales linearly with $h$ over a bounded state, the Taylor remainder strictly evaluates to $o(h)$, making the fraction $o(1)$. Therefore, the total accumulated error is the sum of two $o(1)$ terms, strictly remaining $o(1)$ uniformly across the domain.}

{\textbf{Final Readout ($l=L$):} By induction, the network reaches the penultimate state $\tilde{Z}^{(L-1)} = Z^{(L-1)} + o(1)$. The final layer applies the target bias $b$ and the activation function $\rho$ without the inverse mapping $\Psi_h$, evaluating to:}
\begin{equation}
    {H^{(L)} = \rho\left(M_L \tilde{Z}^{(L-1)} + b\right) = \rho\left(M_L Z^{(L-1)} + b + M_L o(1)\right)}
\end{equation}
{Since $M_L Z^{(L-1)} = Wx$, the inner argument converges uniformly to $Wx + b$ as $h \to 0$. Because $\rho$ is continuous on the compact image of this affine transformation, it is uniformly continuous, mathematically permitting us to pass the limit inside the function:}
\begin{equation}
    {\lim_{h \to 0} H^{(L)} = \rho\left( \lim_{h \to 0} [Wx + b + o(1)] \right) = \rho(Wx + b)}
\end{equation}
{This completes the proof that the sequential composition of $h$-parameterized DLoR components uniformly recovers the exact dense layer mapping, preserving the original width $N$.}
\end{proof}
\section{The Expressivity Trade-off by Depth and Width}
\label{sec:app-expressivity}
While both wide and deep sparse architectures theoretically achieve universal approximation under our Structural Correspondence framework, they exhibit fundamentally different scaling laws regarding parameter efficiency. We establish that extending network depth via multiplicative decomposition provides a fundamentally superior parameter-to-expressivity trade-off compared to additive width expansion, directly aligning with classical approximation theory on the benefits of deep architectures.

In the additive topology, the target transformation is reconstructed via a parallel summation, $W = \sum_{l=1}^L M_l$. Because the components are accumulated linearly, each additional sparse low-rank module contributes independently to the overall function approximation. Consequently, the expressive capacity of the wide network—whether measured by the maximum algebraic rank or the capacity to span orthogonal variations—scales strictly linearly with the number of parallel components, bounded asymptotically by $\mathcal{O}(L)$.

Conversely, the multiplicative topology reconstructs the transformation via sequential products, $W = \prod_{l=1}^L M_l$. In this deep architecture, each subsequent layer non-linearly transforms the output space of the previous layer, allowing the network to recursively fold the input domain. Because the operations compose multiplicatively, the interaction between sequential layers allows the representational capacity to scale exponentially with depth, bounded by $\mathcal{O}(c^L)$ for some structural constant $c > 1$. 

Therefore, while width expansion requires a massive, linear injection of parameters to capture complex, high-rank data distributions, depth extension achieves the identical topological expressivity with an exponentially smaller parameter footprint. This theoretical result formally bridges modern parameter-efficient fine-tuning (PEFT) architectures with the foundational theorems of deep learning \citep{telgarsky2016benefits, eldan2016power}, proving that the exponential benefits of depth hold true even when the hypothesis space is explicitly restricted to a strictly sparse and low-rank manifold.
\section{Proofs for Section~\ref{sec:uat}}
We benchmark our capacity against the bounded-width UAT established for full-rank dense neural networks.
\begin{lemma}[\citep{kidger2020universal}]\label{lem:kidger_uat}
Let $\rho: \mathbb{R} \to \mathbb{R}$ be any non-affine continuous function, continuously differentiable at at least one point with a non-zero derivative. Let $\mathcal{K} \subset \mathbb{R}^n$ be compact. The class of full-rank neural networks of arbitrary depth and strictly bounded width $N = n+m+2$, utilizing activation $\rho$, is dense in $C(\mathcal{K}, \mathbb{R}^m)$ with respect to the uniform norm.
\end{lemma}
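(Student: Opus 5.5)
Since this is the bounded-width theorem of \citep{kidger2020universal}, I would reprove it with their \emph{register} construction, using Lemma~\ref{lem:kidger_4_1} as the basic building block. Lemma~\ref{lem:kidger_4_1} says that an affine map, then $\rho$, then an affine map approximates the identity on any compact set. The key observation is that these affine maps can be folded into the adjacent layer's weights and biases. Consequently, any channel can be carried unchanged through a layer up to an arbitrarily small uniform error.

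I would split the $N=n+m+2$ channels as follows: $n$ \emph{input registers} that keep a copy of $x$, $m$ \emph{output registers} that accumulate the approximation, and $2$ \emph{computation channels}.

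\textbf{Case 1: $\rho$ non-polynomial.} The classical one-hidden-layer theorem \citep{pinkus1999approximation} gives coordinates $g_j(x)=\sum_{k=1}^K a_{jk}\rho(w_k^T x+\theta_k)$ with $\|g-f\|_\infty<\epsilon/2$. I would compute the $K$ neurons sequentially, one per layer.
\begin{itemize}
\item At layer $k$, one computation channel evaluates $\rho(w_k^Tx+\theta_k)$ from the input registers.
\item At the next layer, the affine map of that layer adds $a_{jk}$ times this value into output register $j$. Each output register is then kept approximately unchanged by the approximate-identity trick.
\item The input registers are passed forward by the same trick throughout.
\end{itemize}
Only one computation channel is needed here. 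At the end, a final affine readout picks out the $m$ output registers.

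\textbf{Case 2: $\rho$ a non-affine polynomial.} The classical theorem does not apply, so I would instead approximate $f$ by a polynomial via Stone--Weierstrass. The needed primitive is approximate multiplication. Because $\rho$ is a polynomial of degree at least $2$, finite differences of $\rho$ in a scaling parameter recover $t^2$. Then $xy=\tfrac14\big((x+y)^2-(x-y)^2\big)$ gives products.

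Each monomial is built up one factor at a time, using the two computation channels: one holds the running partial product and one is scratch space for the squaring. Once a monomial is complete, it is added with its coefficient into the output registers, exactly as in Case 1. This case is why the width is $n+m+2$ rather than $n+m+1$.

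\textbf{Error propagation.} This is the same as in the proof of Theorem~\ref{thm:multiplicative_depth}. For a fixed target architecture the number of layers is finite, all intermediate states stay in a compact set, and each layer is uniformly continuous there. Induction on the layer index then shows that the per-layer $o(1)$ errors sum to $o(1)$. Choosing $h$ small enough gives total error below $\epsilon$.

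I expect the main obstacle to be Case 2. The difficulty is getting a multiplication gadget that uses only two extra channels. At the same time, the scaling parameters must be kept uniform on compacts while the partial products may grow. My first attempt would be to rescale all registers into a small ball before each multiplication, so that every Taylor remainder is controlled by a single compact set.
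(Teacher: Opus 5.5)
The paper gives no proof of this lemma; it is quoted from \citep{kidger2020universal}. Your register architecture follows the same strategy as that source: $n$ input registers and $m$ output registers; the shallow theorem with one computation channel when $\rho$ is non-polynomial; Stone--Weierstrass plus polarization with two channels when $\rho$ is a polynomial. Case~1 is fine as sketched, and it even gives width $n+m+1$.

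Case~2, however, is not yet a proof, and its obstacle is not the one you name. For a fixed polynomial on a fixed compact set, all ideal partial products lie in one bounded set, so rescaling into a small ball buys nothing. The real issue is that extracting $t^2$ or $p\,x_i$ from values of $\rho$ divides by $h^2$. The bookkeeping you borrow from Theorem~\ref{thm:multiplicative_depth} does not transfer: there, each block adds an $o(1)$ error that is afterwards only multiplied by fixed bounded matrices. So the closing step ``choose $h$ small enough'' is not justified if one scale $h$ is used throughout.

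To see this, take $\alpha$ with $\rho''(\alpha)\neq 0$. In general $\rho'(\alpha)\neq 0$ as well; for $\rho(t)=t^3$ no point has $\rho'=0\neq\rho''$. A two-channel product step that fits the width goes as follows. In one layer, write $\alpha+h(p+x_i)$ and $\alpha+h(p-x_i)$ into the two computation channels, overwriting $p$, which is no longer needed. In the next layer, form
\[
\frac{\rho(\alpha+h(p+x_i))-\rho(\alpha+h(p-x_i))-2h\rho'(\alpha)\,x_i}{2h^2\rho''(\alpha)}=p\,x_i+\mathcal{O}(h),
\]
reading $x_i$ from its input register. That register value has just passed through one more approximate identity, so its carry error $\delta$ is multiplied by $\rho'(\alpha)/(h\rho''(\alpha))$. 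If the identities also run at scale $h$, then for $\rho(t)=t^3$ we get $\delta=\Theta(h)$, and the product is off by a non-vanishing $\Theta(1)$ term.

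Your layout, with one channel holding $p$ and the other as scratch, has the same problem. One $\rho$-value must be carried across a layer while the second is computed, and its carry error is again multiplied by a negative power of $h$.

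The missing ingredient is a separation of scales: run every carrying identity at a scale $h'$ small compared with $h^2$, e.g.\ $h'=h^3$. Then each product step has error $o(1)$, errors in $p$ and $x_i$ propagate through the locally Lipschitz map $(p,x_i)\mapsto p\,x_i$, and your finite induction closes. If some $\alpha$ has $\rho'(\alpha)=0\neq\rho''(\alpha)$, as for $t^2$, the correction term disappears and a single scale suffices.
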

\label{sec:proof5}
\subsection{Proof of Theorem~\ref{thm:uat_multiplicative}}
\begin{proof}
By the bounded-width universal approximation theorem, for any target function $f$ and error bound $\epsilon > 0$, there exists a dense full-rank network $\mathcal{F}$ of width $N$ that $\frac{\epsilon}{2}$-approximates $f$. Let $\{W_k, b_k\}$ represent the parameters of $\mathcal{F}$. The multiplicative decomposition requires $W_k$ to be invertible. Because the general linear group $GL_N(\mathbb{R})$ is dense in $\mathbb{R}^{N \times N}$, we can apply an arbitrarily small perturbation to any singular $W_k$ such that it becomes strictly invertible. Due to the uniform continuity of $\rho$ on compact domains, this perturbation induces at most an $\frac{\epsilon}{2}$ uniform error. We hereafter assume all $W_k$ are invertible.

For any dense layer evaluating $x \mapsto \rho(Wx + b)$, we factor $W = \prod_{l=1}^L M_l$ where each component is strictly $M_l = \alpha I_N + U_l V_l^T$. Let the ideal linear state sequence be $Z^{(0)} = x$, and $Z^{(l)} = M_l Z^{(l-1)}$. We construct a sub-network of $L$ sparse low-rank layers parameterized by $h > 0$. For the initial layer, we set the weights to $W^{(1)} = h M_1$ and the bias to $B^{(1)} = c \mathbf{1}_N$. The output computes to:
\begin{equation}\label{eq:H1_proof}
    H^{(1)} = \rho(c \mathbf{1}_N + h Z^{(1)}) = \rho(c)\mathbf{1}_N + h \rho'(c) Z^{(1)} + \mathcal{E}^{(1)}(h)
\end{equation}
Because the intermediate state $Z^{(1)}$ is bounded on the compact set $\mathcal{K}$, the Taylor remainder uniformly satisfies $\|\mathcal{E}^{(1)}(h)\| = o(h)$. 

For all intermediate layers where $1 < l < L$, we maintain the linear state scaling by assigning the weights $W^{(l)} = \frac{1}{\rho'(c)} M_l$ and the compensating bias $B^{(l)} = c \mathbf{1}_N - \frac{\rho(c)}{\rho'(c)} M_l \mathbf{1}_N$. Assuming inductively that the previous output holds the form $H^{(l-1)} = \rho(c)\mathbf{1}_N + h \rho'(c) Z^{(l-1)} + \mathcal{E}^{(l-1)}(h)$ with a uniformly bounded $o(h)$ error, the pre-activation evaluates to:
\begin{equation}
    A^{(l)} = W^{(l)} H^{(l-1)} + B^{(l)} = c \mathbf{1}_N + h Z^{(l)} + \frac{1}{\rho'(c)} M_l \mathcal{E}^{(l-1)}(h)
\end{equation}
Applying the activation $\rho$ yields an output $H^{(l)} = \rho(c)\mathbf{1}_N + h \rho'(c) Z^{(l)} + \mathcal{E}^{(l)}(h)$. Because $M_l$ is a bounded linear operator and the composition of uniformly continuous functions mathematically preserves asymptotic bounds, the new remainder remains uniformly $\|\mathcal{E}^{(l)}(h)\| = o(h)$.

For the final layer $L$, we reverse the scaling of $h$ and inject the target bias $b$. We assign $W^{(L)} = \frac{1}{h \rho'(c)} M_L$ and $B^{(L)} = b - \frac{\rho(c)}{h \rho'(c)} M_L \mathbf{1}_N$. The final pre-activation computes to:
\begin{equation}
    A^{(L)} = W^{(L)} H^{(L-1)} + B^{(L)} = Z^{(L)} + b + \frac{1}{h \rho'(c)} M_L \mathcal{E}^{(L-1)}(h)
\end{equation}
Since the complete sequence yields $Z^{(L)} = Wx$, and the previous error term is bounded by $o(h)$, the scaled error term evaluates strictly to $\frac{o(h)}{h} = o(1)$. Thus, as $h \to 0$, $A^{(L)}$ uniformly converges to $Wx + b$. By the continuity of $\rho$, the final output uniformly converges to $\rho(Wx + b)$, effectively approximating the target $f$ within $\epsilon$.
\end{proof}

\subsection{Proof of Theorem~\ref{thm:uat_additive}}
\begin{proof}
Given a dense target layer evaluating $y = \rho(Wx + b)$, we apply the additive width decomposition $W = \sum_{l=1}^L M_l$. As rigorously established in the proof of Theorem~\ref{thm:additive_width}, the parallel ensemble computes a pre-activation response parameterized by $h$:
\begin{equation}
    \tilde{A}(x, h) = \sum_{l=1}^L \frac{\beta_l}{h \rho'(c)} \rho\left(\frac{h}{\beta_l} M_l x + c \mathbf{1}_N\right)
\end{equation}
We demonstrated that this structure evaluates to $\tilde{A}(x, h) = Wx + \mathcal{E}(h)$, where the remainder term $\mathcal{E}(h) \to 0$ uniformly over the compact domain $\mathcal{K}$ as $h \to 0$.

We then inject this parallel ensemble output into a subsequent exact readout layer, or directly into the next activation sequence, augmented with the original dense bias $b$. This evaluates to:
\begin{equation}
    \tilde{y} = \rho(\tilde{A}(x, h) + b) = \rho(Wx + b + \mathcal{E}(h))
\end{equation}
Because the chosen activation function $\rho$ is uniformly continuous on the bounded image of $\mathcal{K}$, for any arbitrary precision bound $\delta > 0$, there exists a sufficiently small scaling parameter $h > 0$ such that the magnitude of the remainder $\|\mathcal{E}(h)\|$ is strictly small enough to guarantee $\|\tilde{y} - y\|_\infty < \delta$. Sequentially substituting all full-rank affine layers of the target network $\mathcal{F}$ with this parallel additive block fully completes the universal approximation.
\end{proof}
\subsection{Proof of Theorem~\ref{thm:general_transfer}}

\begin{proof}
Let $f_{\text{dense}}(x) \in \mathcal{F}$ be the target dense neural network, which we mathematically express as the sequential composition of $L$ affine layers mapped through non-linear activations:
\begin{equation}
    f_{\text{dense}}(x) = (\mathcal{L}_L \circ \dots \circ \mathcal{L}_1)(x)
\end{equation}
where each intermediate layer evaluates the transformation $\mathcal{L}_k(z) = \rho(W_k z + b_k)$. By our initial assumption, this dense network successfully approximates some target function $f^* \in \mathcal{C}$ on a compact domain $\mathcal{K}$ such that the uniform error is strictly bounded:
\begin{equation}
    \|f_{\text{dense}}(x) - f^*(x)\|_\infty \le \epsilon
\end{equation}

For every target weight matrix $W_k \in \mathbb{R}^{W \times W}$ in the network, we apply the multiplicative decomposition established in Theorem~\ref{thm:multiplicative_depth}. We factor each dense matrix into a sequential product of exactly $W$ sparse low-rank components, yielding $W_k = \prod_{j=1}^W M_{k,j}$, where each individual component $M_{k,j} = \alpha I_W + u_{k,j} v_{k,j}^T$ perfectly satisfies the stringent DLoR architectural constraint. Using these components, we construct an $h$-parameterized DLoR block, denoted $\tilde{\mathcal{L}}_k(z, h)$, which has a localized depth of $W$ layers and serves to exactly approximate the $k$-th dense layer. As rigorously proven in the derivation of the multiplicative duality, the careful scaling of the expansion point guarantees that the approximation error is uniformly bounded by an asymptotic Taylor remainder:
\begin{equation}
    \|\tilde{\mathcal{L}}_k(z, h) - \mathcal{L}_k(z)\|_\infty = o(h)
\end{equation}
This ensures that as the scaling parameter $h$ approaches zero, the intermediate DLoR block uniformly converges to the exact dense layer mapping, $\lim_{h \to 0} \tilde{\mathcal{L}}_k(z, h) = \mathcal{L}_k(z)$, over any bounded input space.

We construct the global DLoR network by sequentially composing these parameterized blocks, resulting in an expanded architecture of total depth $L \times W$:
\begin{equation}
    \tilde{f}_{\text{DLoR}}(x, h) = (\tilde{\mathcal{L}}_L \circ \dots \circ \tilde{\mathcal{L}}_1)(x, h)
\end{equation}
Because continuous functions inherently map compact sets to compact sets, the intermediate feature representations at every layer remain within strictly bounded domains. It is a fundamental topological property that the finite composition of uniformly continuous functions on compact sets preserves uniform convergence. Consequently, the total end-to-end network output uniformly converges to the target dense network:
\begin{equation}
    \lim_{h \to 0} \|\tilde{f}_{\text{DLoR}}(x, h) - f_{\text{dense}}(x)\|_\infty = 0
\end{equation}

For any arbitrarily small tolerance $\delta > 0$, there exists a valid, sufficiently small scaling parameter $h^* > 0$ such that the discrepancy between the dense network and our constructed DLoR network is strictly bounded by $\delta$. By applying the triangle inequality, we evaluate the total approximation error against the true target function $f^*(x)$:
\begin{equation}
    \|\tilde{f}_{\text{DLoR}}(x, h^*) - f^*(x)\|_\infty \le \|\tilde{f}_{\text{DLoR}}(x, h^*) - f_{\text{dense}}(x)\|_\infty + \|f_{\text{dense}}(x) - f^*(x)\|_\infty \le \delta + \epsilon
\end{equation}
By selecting an $h^*$ small enough such that the structural residual $\delta$ is negligible, the DLoR network successfully inherits the exact universal approximation capabilities and error boundaries of the original full-rank architecture, completing the proof.
\end{proof}

\subsection{Proof of Theorem~\ref{thm:dlor_sobolev}}

\begin{proof}
According to the foundational approximation bounds established by \citep{yarotsky2017error}, there exists a dense ReLU network containing $P_{\text{dense}}$ active parameters that achieves an exact $\epsilon$-approximation for any target function $f$ residing within the Sobolev space $\mathcal{W}^{s, \infty}([0,1]^d)$. This optimal parameter complexity is rigorously bounded by:
\begin{equation}
    P_{\text{dense}} = \mathcal{O}\left(\epsilon^{-d/s} \log(1/\epsilon)\right)
\end{equation}
Let the maximum width of this optimal dense network be denoted by $W$, and let its total depth be $L$. By directly applying the General UAT Transfer from Theorem~\ref{thm:general_transfer}, we can structurally transform this dense network into an exact $\epsilon$-approximating DLoR network without sacrificing overall expressivity.

We now explicitly calculate the required parameter count of the resulting DLoR network to ensure the classical optimal scaling laws are formally preserved. Each individual DLoR component operates under the strict structural constraint $M = \alpha I_W + u v^T$. The sparse diagonal component $\alpha I_W$ is a simple scalar multiple of the identity matrix, requiring exactly $1$ parameter to store the value $\alpha$. The rank-1 update matrix, formed by the outer product $u v^T$, requires $W$ parameters for the column vector $u$ and $W$ parameters for the row vector $v^T$, totaling $2W$ parameters. Thus, a single DLoR layer utilizes exactly:
\begin{equation}
    P_{\text{layer}} = 2W + 1 = \mathcal{O}(W)
\end{equation}
parameters. Because our multiplicative depth protocol decomposes a single dense affine weight matrix $W_{\text{dense}} \in \mathbb{R}^{W \times W}$ into a sequential cascade of exactly $W$ distinct DLoR layers, the total parameter count required to fully simulate one dense layer evaluates to:
\begin{equation}
    W \times (2W + 1) = 2W^2 + W = \mathcal{O}(W^2)
\end{equation}
Crucially, this required expansion in depth perfectly aligns with the original $\mathcal{O}(W^2)$ parameter footprint required to store a dense $W \times W$ weight matrix. 

Integrating this substitution step across all $L$ layers of the dense network, the total parameter count of the Deep DLoR network, $P_{\text{DLoR}}$, scales directly and proportionately to the original dense architecture. Therefore, the overall parameter complexity is strictly bounded by the exact same asymptotic class:
\begin{equation}
    P_{\text{DLoR}} = \mathcal{O}(P_{\text{dense}}) = \mathcal{O}\left(\epsilon^{-d/s} \log(1/\epsilon)\right)
\end{equation}
This proves that enforcing strict sparsity and extreme low-rank constraints throughout the network topology does not degrade the fundamental information-theoretic capabilities of deep learning.
\end{proof}

\section{Alternative Construction via Augmented State Space (Width $2N$)}
\label{sec:augmented_space}
While we previously established a purely multiplicative decomposition that strictly preserves the original network width $N$, we present here an alternative, highly intuitive additive construction. By temporarily relaxing the width constraint to $2N$, we can explicitly accumulate rank-1 updates in an augmented state space before applying the non-linear activation. 

We aim to approximate a target full-rank neural network layer, defined by the mapping $x \mapsto \rho(Wx + b)$ where $W \in \mathbb{R}^{N \times N}$ and $b \in \mathbb{R}^N$. We replace this single dense layer with a block of layers in a Deep Sparse Low Rank architecture. In this alternative construction, the weight matrix of every intermediate layer $l$ strictly satisfies our structural constraint, taking the form of an identity matrix (the sparse diagonal with $\alpha=1$) plus a rank-1 update.

We decompose the target matrix $W$ into a sum of $k$ independent rank-1 matrices, where the number of components $k = \text{rank}(W) \le N$:
\begin{equation}
    W = \sum_{i=1}^k u_i v_i^T
\end{equation}
where $u_i, v_i \in \mathbb{R}^N$ are column vectors. To accumulate these rank-1 projections without corrupting the original input features, we construct a network operating in an augmented, concatenated state space of width $2N$. The ideal, error-free state vector at any step $l$ is denoted as $Z^{(l)} = \begin{bmatrix} X^{(l)} \\ Y^{(l)} \end{bmatrix} \in \mathbb{R}^{2N}$. The top block $X^{(l)} \in \mathbb{R}^N$ perfectly preserves the original input $x$, while the bottom block $Y^{(l)} \in \mathbb{R}^N$ acts as a memory buffer to sequentially accumulate the rank-1 transformations. We initialize this state with the zero-padded input, $Z^{(0)} = \begin{bmatrix} x \\ 0 \end{bmatrix}$.

Our strategy applies a sequence of ideal linear transformations, denoted by the matrices $M_i$. Each $M_i$ represents the application of the $i$-th rank-1 update to the augmented state. We define $M_i = I_{2N} + \tilde{u}_i \tilde{v}_i^T$, where the augmented vectors are zero-padded to match the $2N$ dimensional space:
\begin{equation}
    \tilde{u}_i = \begin{bmatrix} 0 \\ u_i \end{bmatrix} \in \mathbb{R}^{2N}, \quad \tilde{v}_i = \begin{bmatrix} v_i \\ 0 \end{bmatrix} \in \mathbb{R}^{2N}
\end{equation}
Multiplying the state vector by the operator $M_i$ perfectly preserves the identity mapping for the input block $X$ while cleanly accumulating the rank-1 projection into the buffer block $Y$:
\begin{equation}
    M_i \begin{bmatrix} X \\ Y \end{bmatrix} = \begin{bmatrix} I_N & 0 \\ 0 & I_N \end{bmatrix}\begin{bmatrix} X \\ Y \end{bmatrix} + \begin{bmatrix} 0 \\ u_i \end{bmatrix} \begin{bmatrix} v_i \\ 0 \end{bmatrix}^T \begin{bmatrix} X \\ Y \end{bmatrix} = \begin{bmatrix} X \\ Y + u_i v_i^T X \end{bmatrix}
\end{equation}

We construct a sub-network of exactly $k+1$ layers to implement these ideal $M_i$ transformations while navigating the non-linear activation function $\rho$. Throughout this construction, we denote the $l$-th layer's weight matrix and bias vector as $W^{(l)}$ and $B^{(l)}$. The pre-activation linear state is defined as $A^{(l)} = W^{(l)} H^{(l-1)} + B^{(l)}$, and the post-activation output is $H^{(l)} = \rho(A^{(l)})$. We define an expansion point $c$ where $\rho'(c) \neq 0$, and track the cumulative asymptotic error $E^{(l)}(h)$ between the actual network state $H^{(l)}$ and the ideal linear state $Z^{(l)}$. For brevity in our bias formulations, let $s_l = \tilde{v}_l^T \mathbf{1}_{2N}$ represent the sum of the elements in the augmented vector $\tilde{v}_l$.

To ensure rigorous uniform approximation, let $K \subset \mathbb{R}^N$ be the compact set of inputs $x$. The initial padded states reside in a compact set $\mathcal{K}^{(0)} \subset \mathbb{R}^{2N}$. Because each ideal operator $M_i$ is a continuous linear transformation, the ideal forward-propagated states $Z^{(l)} = M_l \dots M_1 Z^{(0)}$ also reside in strictly bounded compact sets $\mathcal{K}^{(l)}$. By Taylor's Theorem, for any compact domain, the element-wise activation function can be expanded as:
\begin{equation}
    \rho(c \mathbf{1}_{2N} + \Delta) = \rho(c)\mathbf{1}_{2N} + \rho'(c)\Delta + \mathcal{R}(\Delta)
\end{equation}
where the remainder satisfies $\|\mathcal{R}(\Delta)\| = o(\|\Delta\|)$ uniformly for any bounded perturbation $\Delta$. We proceed to construct the network layer by layer, proving inductively that the error is uniformly bounded by $o(h)$.

\textbf{Layer 1 ($l=1$):} We encode the initial state into the localized linear activation regime around the expansion point $c$. We define the constructed weights and biases as $W^{(1)} = h I_{2N} + h \tilde{u}_1 \tilde{v}_1^T = h M_1$ and $B^{(1)} = c \mathbf{1}_{2N}$. The actual pre-activation vector is $A^{(1)} = c \mathbf{1}_{2N} + h M_1 Z^{(0)}$. Applying the activation function yields the output:
\begin{equation}
    H^{(1)} = \rho(c \mathbf{1}_{2N} + h Z^{(1)}) = \rho(c)\mathbf{1}_{2N} + h \rho'(c) Z^{(1)} + E^{(1)}(h)
\end{equation}
where the ideal state updates to $Z^{(1)} = M_1 Z^{(0)}$. The error term is exactly the Taylor remainder $E^{(1)}(h) = \mathcal{R}(h Z^{(1)})$. Since $Z^{(1)} \in \mathcal{K}^{(1)}$ is bounded, the perturbation magnitude is strictly $\mathcal{O}(h)$, bounding the remainder by $\|E^{(1)}(h)\| = o(h)$ uniformly.

\textbf{Intermediate Layers ($1 < l \le k$):} Assume inductively that the previous output satisfies $H^{(l-1)} = \rho(c)\mathbf{1}_{2N} + h \rho'(c) Z^{(l-1)} + E^{(l-1)}(h)$, with a uniformly bounded error $\|E^{(l-1)}(h)\| = o(h)$. We define the parameters for layer $l$ to unscale the previous output, apply the new projection $M_l$, and rescale back into the linear regime:
\begin{align}
    W^{(l)} &= \frac{1}{\rho'(c)} I_{2N} + \frac{1}{\rho'(c)} \tilde{u}_l \tilde{v}_l^T = \frac{1}{\rho'(c)} M_l \\
    B^{(l)} &= \left( c - \frac{\rho(c)}{\rho'(c)} \right) \mathbf{1}_{2N} - \frac{\rho(c) s_l}{\rho'(c)} \tilde{u}_l
\end{align}
Notice that the action of the linear operator $M_l$ on the vector of ones is exactly $M_l \mathbf{1}_{2N} = \mathbf{1}_{2N} + s_l \tilde{u}_l$. Therefore, we can rewrite the constructed bias compactly as $B^{(l)} = c \mathbf{1}_{2N} - \frac{\rho(c)}{\rho'(c)} M_l \mathbf{1}_{2N}$. Computing the pre-activation vector $A^{(l)}$ yields:
\begin{equation}
    A^{(l)} = \frac{\rho(c)}{\rho'(c)} M_l \mathbf{1}_{2N} + h M_l Z^{(l-1)} + \frac{1}{\rho'(c)} M_l E^{(l-1)}(h) + c \mathbf{1}_{2N} - \frac{\rho(c)}{\rho'(c)} M_l \mathbf{1}_{2N}
\end{equation}
The massive constant terms involving $\rho(c)$ perfectly cancel. Defining the new ideal state as $Z^{(l)} = M_l Z^{(l-1)}$, we obtain the simplified pre-activation:
\begin{equation}
    A^{(l)} = c \mathbf{1}_{2N} + \underbrace{h Z^{(l)} + \frac{1}{\rho'(c)} M_l E^{(l-1)}(h)}_{\Delta^{(l)}(h)}
\end{equation}
Passing this through the activation function gives $H^{(l)} = \rho(c)\mathbf{1}_{2N} + \rho'(c)\Delta^{(l)}(h) + \mathcal{R}(\Delta^{(l)}(h))$. Substituting the perturbation back into the linear term isolates the total error:
\begin{equation}
    H^{(l)} = \rho(c)\mathbf{1}_{2N} + h \rho'(c) Z^{(l)} + \underbrace{M_l E^{(l-1)}(h) + \mathcal{R}(\Delta^{(l)}(h))}_{E^{(l)}(h)}
\end{equation}
By the inductive hypothesis, the prior error is $o(h)$. Because $M_l$ is a bounded linear operator, the transformed prior error remains $o(h)$. The perturbation satisfies $\|\Delta^{(l)}(h)\| \le h \|Z^{(l)}\| + o(h) = \mathcal{O}(h)$, guaranteeing the Taylor remainder is $\mathcal{R}(\Delta^{(l)}(h)) = o(h)$. The sum of these terms is strictly $o(h)$, proving that the error remains uniformly bounded across all iterations. At step $k$, the ideal state successfully accumulates the full weight matrix: $Z^{(k)} = M_k \dots M_1 Z^{(0)} = \begin{bmatrix} x \\ Wx \end{bmatrix}$.

\textbf{Final Layer ($l=k+1$):} We construct the final weights and biases to apply the inverse scaling $1/h$ and inject the zero-padded target bias $\tilde{b} = \begin{bmatrix} 0 \\ b \end{bmatrix}$. We assign $W^{(k+1)} = \frac{1}{h \rho'(c)} I_{2N}$ and $B^{(k+1)} = \tilde{b} - \frac{\rho(c)}{h \rho'(c)} \mathbf{1}_{2N}$. The final pre-activation computes to:
\begin{equation}
    A^{(k+1)} = Z^{(k)} + \tilde{b} + \frac{E^{(k)}(h)}{h \rho'(c)}
\end{equation}
Since the tracked error is bounded by $o(h)$, the scaled error term strictly vanishes as $h \to 0$, giving $\frac{o(h)}{h} = o(1)$. Noting that $Z^{(k)} + \tilde{b} = \begin{bmatrix} x \\ Wx + b \end{bmatrix}$, the final network output vector evaluates to:
\begin{equation}
    H^{(k+1)} = \rho\left( \begin{bmatrix} x \\ Wx + b \end{bmatrix} + o(1) \right)
\end{equation}
Because the continuous function $\rho$ acts on a compact domain (the affine transformation of the bounded input set $K$), it is uniformly continuous on this bounded image. We are mathematically permitted to pass the limit inside the function, guaranteeing that the constructed network's output converges uniformly to the exact target vector $\begin{bmatrix} \rho(x) \\ \rho(Wx + b) \end{bmatrix}$ as $h \to 0$.
\vspace{0.5em}

{To sequentially chain multiple dense layers within this augmented architecture, we must address the state formulation between macroscopic layers. After evaluating layer $m$, the network state is $H^{(1, m)} = [\rho(x)^\top, \rho(W_m x + b_m)^\top]^\top$. To prepare this state as the input for the subsequent layer $m+1$, we must move the newly computed features to the top block and clear the accumulator in the bottom block. This is achieved by simulating a Reset and Swap matrix $S = \begin{bmatrix} 0 & I_N \\ 0 & 0 \end{bmatrix}$. Because $S$ is inherently singular, we construct an invertible perturbed proxy $S_\epsilon = S + \epsilon I_{2N}$ for some arbitrarily small $\epsilon > 0$. By applying our Multiplicative Factorization to $S_\epsilon$, we can insert a sequence of DLoR layers between each macroscopic block that effectively maps $[\rho(x)^\top, y^\top]^\top \mapsto [y^\top, \mathbf{0}^\top]^\top + \mathcal{O}(\epsilon)$. This ensures the augmented state space remains correct across arbitrary depths.}
\begin{theorem}[Universal Approximation via Augmented State Space]
Let $\rho: \mathbb{R} \to \mathbb{R}$ be a nonaffine continuous function continuously differentiable at at least one point with a nonzero derivative. Let $K \subset \mathbb{R}^n$ be compact. The class of Deep Sparse Plus Low Rank Neural Networks of arbitrary depth and bounded width $2(n+m+2)$ is dense in $C(K, \mathbb{R}^m)$ with respect to the uniform norm.
\end{theorem}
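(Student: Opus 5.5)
We reduce the statement to the bounded-width dense UAT of Lemma~\ref{lem:kidger_uat} and simulate each dense layer inside the augmented $2N$-dimensional state space constructed above, with $N=n+m+2$. Fix $f\in C(K,\mathbb{R}^m)$ and $\epsilon>0$. By Lemma~\ref{lem:kidger_uat} there is a dense network $\mathcal{F}=\mathcal{L}_D\circ\cdots\circ\mathcal{L}_1$ of width $N$ with $\|\mathcal{F}-f\|_\infty<\epsilon/3$. Each layer has the form $\mathcal{L}_j(z)=\rho(W_jz+b_j)$, except that the input embedding and the final affine readout are handled by zero-padding, as in Remark~\ref{rem:rectangular_matrices}. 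We then build a DLoR network of width $2N=2(n+m+2)$ with three ingredients: an encoding layer, a block per dense layer, and a reset-and-swap gadget between consecutive blocks.

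\textbf{Step 1 (encoding).} The padded input $[x^\top,0^\top]^\top$ is supplied directly. Any required initial affine lifting uses the Layer-1 construction above, with weight $hM_1$ and bias $c\mathbf{1}_{2N}$.

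\textbf{Step 2 (one dense layer).} For each $\mathcal{L}_j$ we write $W_j=\sum_{i=1}^{k}u_iv_i^T$ and apply the $k+1$ layer construction just proved. Every weight there is $\tfrac{1}{\rho'(c)}(I_{2N}+\tilde u_i\tilde v_i^T)$, or a scalar multiple of $M_1$, or of $I_{2N}$, so it is a DLoR component. That construction gives uniform convergence, as $h\to0$, to $[\rho(X)^\top,\rho(W_jX+b_j)^\top]^\top$ on the compact set of incoming states.

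\textbf{Step 3 (reset and swap).} The top block $\rho(X)$ is garbage, so we move the bottom block to the top and zero the bottom. We use the invertible proxy $S_\eta=S+\eta I_{2N}$ and factor it by Theorem~\ref{thm:multiplicative_depth} into DLoR factors. Its width-preserving simulation, with no outer activation on the last factor, is obtained by ending with the identity-approximating $\Psi_h$ scaling. This realizes $S_\eta$ up to $o(1)$, so the state becomes $[\rho(W_jX+b_j)^\top,0^\top]^\top+O(\eta)+o(1)$. The last dense layer is affine with no activation. For it we use the same accumulator and, instead of applying $\rho$, read out the bottom block through the final affine output map (scaled by $1/(h\rho'(c))$, with bias correction) restricted to the $m$ relevant coordinates.

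\textbf{Step 4 (error propagation).} This is the main obstacle. Errors from the $\eta$-perturbation and the $h$-remainders feed into later blocks, and each later block is only uniformly continuous on a compact neighborhood of the ideal intermediate states. First, fix compact $\delta$-neighborhoods of all ideal intermediate state sets; these exist because continuous images of compacta are compact. Second, by backward induction over the $D$ macroscopic blocks, choose tolerances: the tail map from block $j$ onward is uniformly continuous on the $j$-th neighborhood, so some tolerance $\tau_j$ at its input guarantees total output error below $\epsilon/3$. Third, choose $\eta$ and then $h$ (possibly separately per block) so that each block lands within $\tau_j$ and stays inside the neighborhoods. The perturbation $\eta$ adds at most $\epsilon/3$ and the $h$-approximation at most $\epsilon/3$; the triangle inequality with $\|\mathcal{F}-f\|<\epsilon/3$ then gives total error below $\epsilon$.

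The delicate point is that the Taylor remainders in Step~2 require inputs from a fixed compact set that is independent of $h$. This is exactly what the neighborhood trick supplies, since the constants in the $o(h)$ bounds then depend only on these fixed compacta.
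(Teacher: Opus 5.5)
Your proposal is correct and follows the paper's route: invoke the Kidger--Lyons width-$(n+m+2)$ theorem, replace each dense layer by the $(k+1)$-layer identity-plus-rank-one accumulator block in width $2N$, and chain the blocks with a multiplicatively factorized reset-and-swap proxy $S+\eta I_{2N}$. Your backward-induction tolerance argument on fixed compact neighborhoods makes precise what the paper compresses into the single claim that the substituted network converges uniformly as $h\to 0$, and so does your explicit handling of the $\eta$-perturbation and of the final affine readout.
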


\begin{proof}
By the bounded-width universal approximation theorem, any continuous function $f \in C(K, \mathbb{R}^m)$ can be approximated to arbitrary uniform precision $\epsilon > 0$ by a standard dense neural network of width $N = n+m+2$. Each hidden layer applies a transformation $x \mapsto \rho(W_l x + b_l)$. From our constructive proof, any such dense layer can be uniformly approximated on a compact set by a block of at most $N+1$ layers in a Deep Sparse Low Rank architecture using the augmented state space, requiring a width of $2N$. Because the target dense network has finite depth, replacing each of its layers with this $h$-parameterized Deep Sparse Low Rank block yields a new network whose output uniformly converges to the output of the target network as $h \to 0$, effectively approximating the target function $f$ to arbitrary precision.
\end{proof}

\section{Implications for Parameter-Efficient Training}\label{sec:implications}

The theoretical framework developed in this paper bridges the gap between the fundamental limits of approximation theory and the empirical practices of modern parameter-efficient fine-tuning (PEFT). By formalizing the expressive boundaries of low-rank manifolds, our results yield several direct implications for the design and optimization of large-scale neural architectures.

\paragraph{Overcoming the Pretraining Barrier.}
Standard PEFT methods, such as LoRA \citep{hu2022lora}, achieve empirical success by relying on a frozen, dense, full-rank base matrix $W_0$. However, as observed in recent literature, these architectures struggle to learn complex target distributions from scratch or adapt to catastrophic domain shifts \citep{lialin2023relora}. Our formalization of \textit{Orthogonal Blindness} (Definition~\ref{def:ortho_blindness}) mathematically explains this failure: purely low-rank structures fundamentally cannot capture the orthogonal ``tail'' of the singular value spectrum. Theorem~\ref{thm:uat_multiplicative} proves that explicitly augmenting low-rank updates with a sparse diagonal component ($\alpha I_N$) is not merely an empirical heuristic \citep{han2024sltrain}, but a strict mathematical prerequisite for restoring universal approximation capabilities without a dense prior. This provides a rigorous architectural blueprint for parameter-efficient pretraining.

\paragraph{Elimination of Affine Drift via Width Expansion.}
One of the most significant practical challenges in distributed or ensemble fine-tuning is the accumulation of affine drift—where parallel low-rank updates systematically shift the activation distribution, necessitating continuous bias retraining or the injection of batch normalization layers. Our Additive Width Expansion (Theorem~\ref{thm:additive_width}) introduces a rigorous solution. By constraining the expansion coefficients such that $\sum_{l=1}^L \beta_l = 0$, the zero-order Taylor expansion term perfectly annihilates. Consequently, the wide sparse plus low-rank ensemble reconstructs the target mapping without introducing any supplementary scaling biases. In practice, this guarantees that when a dense matrix is decomposed and fine-tuned across parallel low-rank components, the original bias parameters remain strictly stable, fundamentally eliminating affine drift.

\paragraph{Connections to Mixture of Experts (MoE).}
The topological structure of our Additive Width Expansion (Theorem~\ref{thm:additive_width}) shares a profound conceptual link with Mixture of Experts (MoE) architectures \citep{shazeer2017outrageously}. In a standard MoE, a dense feed-forward layer is replaced by parallel expert neural networks, heavily relying on dynamic, input-dependent routing to manage parameter scale. Our additive framework proves that a similar parallel topology can be utilized as an exact, static reconstruction of a dense transformation. In this context, each sparse low-rank component $M_l$ acts as a specialized ``expert'' capturing a specific subspace of the full-rank spectrum. By proving that universal approximation can be achieved through this un-gated, parallel summation of low-rank experts, our framework provides a rigorous mathematical foundation for why decoupled, wide ensemble architectures are theoretically sufficient for complex sequence modeling, even without dynamic routing.

\paragraph{Decoupled Parameter Optimization.}
The Structural Duality framework establishes that a dense, high-rank target matrix can be exactly reconstructed by trading off either width (additive) or depth (multiplicative). This equivalence is highly advantageous for post-training large models. By mapping a dense, pre-trained network onto an equivalent sequence or ensemble of sparse low-rank sub-networks, the massive $\mathcal{O}(N^2)$ parameter space is effectively decoupled into independent $\mathcal{O}(rN)$ manifolds. This allows for the fine-tuning of each low-rank component independently, enabling massive parallelization across distributed hardware while preserving the rigorous theoretical guarantee that the full-rank expressive capacity remains intact.

\section{Experimental Settings}
\label{sec:app-exp}
\subsection{Illustration of UAT}
\label{subsec: validate_uat_app}
\paragraph{Synthetic Data}We choose the sawtooth function $f(x) = |((x\cdot \lambda) \mod 2)-1|$ to illustrate our results, where $x \in \mathbb{R}$, and $\lambda$ is a given frequency constant set to $\lambda=3.7$. The synthetic dataset consists of $400$ points in the interval $[-2, 2]$ generated by the sawtooth function and is split evenly into the training and test dataset.
\paragraph{Baseline MLP Training.} Then we implement a five-layer MLP, where the first four layers are hidden layers consist of weight matrices of dimensions $16 \times1$ for the first layer, and $16 \times 16$ for the next three layers, biases vectors of the corresponding dimension composed with the softplus activation function at each layer. The final output layer is of dimension $1\times 16$, and no activation function is applied as the goal is function approximation. We train the MLP using the training dataset by supervised learning with mean-squared-error loss (MSE) using the Adam optimizer with learning rate $0.01$ and $2000$ epochs. 

\paragraph{Construct Deep/Wide Neural Networks.} To meaningfully validate the universal approximation ability of the deep and wide neural networks, we need to guarantee the numerical stability in construction. For both the deep and wide network, we fix rank $r=6$ to induce a moderate number of substructures to avoid overly accumulating numerical imprecision, and $c=0.5$ where the softplus function and its first derivative are evaluated stably. For the deep network, we use $h=10^{-2}$ to $10^{-8}$ and $\alpha=0.8$ to avoid degenerative $\alpha^L$. For the wide network, we use $h=10^{-1}$ to $10^{-6}$, $\beta_l = 1$ for $l=1,\dots L-1$, and $\beta_L = -L+1$ to enforce the affine elimination. 

\subsection{Training of the Deep/Wide Neural Networks}
\label{subsec:training_deep_wide}
As a baseline, we train a fully connected MLP with two hidden layers and width $16$. The first hidden layer has weight matrix of size $16\times 1$, the second hidden layer has weight matrix of size $16\times 16$, and both layers include bias vectors and are followed by the Softplus activation function. 
The output layer is an affine map with weight matrix of size $1\times 16$ and no activation function.

\paragraph{Architectures.} For both the deep and wide structured neural networks, the input projection layer and output layer are parametrized in the same manner as in the baseline MLP. 
The difference lies in the parametrization of the intermediate hidden transformations. 
For the deep structure with $k$ substructures, the hidden state is updated sequentially by
\[
    x_{l+1}
    =
    \rho\left( (\alpha I + U_l V_l^\top)x_l + b_l \right),
    \qquad l=1,\dots,k,
\]
where $U_l,V_l\in \mathbb{R}^{16\times r_k}$ are trainable low-rank factors, $b_l\in\mathbb{R}^{16}$ is a trainable bias vector, $\alpha$ is a trainable scalar, and $r_k=\lceil 16/k\rceil$. 

For the wide structure with $k$ substructures, the $k$ low-rank transformations are applied in parallel to the same hidden representation and then aggregated:
\[
    x_{\mathrm{wide}}
    =
    \rho\left(
    \sum_{l=1}^{k}
    \alpha_l
    \rho\left( U_l V_l^\top x + b_l \right)
    \right),
\]
where each $U_l,V_l\in\mathbb{R}^{16\times r_k}$, $b_l\in\mathbb{R}^{16}$, and $\alpha_l$ are trainable parameters. 
The aggregated hidden representation is then passed to the same affine output layer as in the other models.

\paragraph{Training and evaluation metric.} We train all neural networks using the Adam optimizer with initial learning rate $0.005$. We use a ReduceLROnPlateau learning-rate scheduler with patience $200$, decay factor $0.5$, and minimum learning rate $10^{-5}$. For the deep and wide structured neural networks, we vary the number of substructures as $k=1,\dots,16$. 

\paragraph{Parameter-matched baseline}
We also compare the deep and wide networks to a parameter-matched $2$-layer dense MLP baselines. We train the deep and wide networks for $k=1, 2, 4, 8, 16$, and for each $k$ we train a $2$-layer dense MLP with the width selected such that the MLP has the same order of parameters as the deep and wide network. We report the median of the test MSE with interquantiles over $5$ random seeds in Figure \ref{fig:parameter_matched_test_error_sawtooth}. The number of parameters in each structure for each $k$ is given in table \ref{tab:param_count}.

\begin{table}[]
    \centering
        \caption{Parameter Counts}
    \begin{tabular}{c|c|c|c}
        $k$ & Dense  &  Deep & Wide\\
        \hline
        $1$ & 573 & 578 & 594\\
        $2$ & 622 & 594 & 611\\
        $4$ & 622 & 626 & 645\\
        $8$ & 726 & 690 & 713\\
        $16$ & 838 & 818 & 849
    \end{tabular}
    \label{tab:param_count}
\end{table}

\begin{figure}[t]
    \centering
    
    \includegraphics[width=0.765\linewidth]{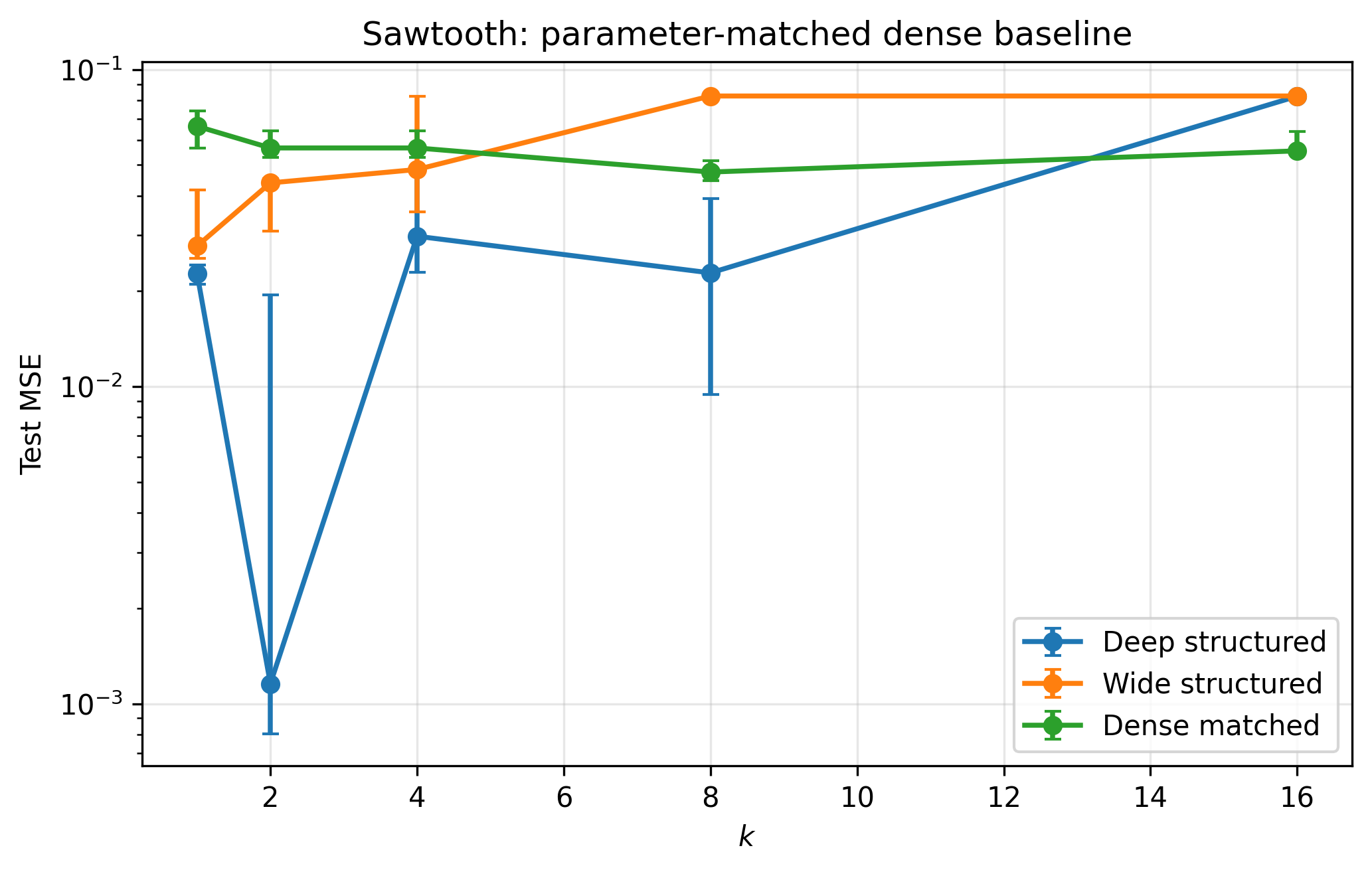}
    \caption{Test Error for Deep and Wide Networks with Parameter-Matched Dense MLP}   \label{fig:parameter_matched_test_error_sawtooth}\vspace{-1em}
\end{figure}

\subsection{Spectral Decomposition}
\label{subsec:spectral_decom_setup}
We use a four-layer MLP where each hidden layer is of dimension $64 \times64$ as the baseline model for constructing the deep and wide neural networks.
\paragraph{Notation and Setup.} For the deep network, define the effective pre-activation weight matrix $W = \alpha I + UV^\top$, where $UV^\top$ is the learned low-rank component of rank $r$ and $\alpha I$ is a learned scaled identity matrix. We perform a Singular Value Decomposition (SVD) on $W$ such that $W = P\Sigma Q^\top$. To isolate the contributions of the individual structural components to the overall network capacity, we project both $UV^\top$ and $\alpha I$ onto the shared left and right singular vectors of $W$. We define and plot the following vectors: (i) total singular values $\text{diag}(\Sigma)$; (ii) low-rank contributions $\text{diag}(P^\top UV^\top Q)$; (iii) identity contributions $\text{diag}(P^\top (\alpha I) Q)$. For the fixed rank $r=4$, the spectral decomposition of the second layer in the deep network is visualized in Figure \ref{fig:deep_spectral}.

For the wide network, the total output of a wide layer is $\sum_{i=1}^k \sigma(x W_i + b_i)$. While the activation function $\sigma$ prevents the exact algebraic collapse of these branches into a single matrix, we analyze the pre-activation linear span to assess structural redundancy. We define $W_{total} = \sum_{i=1}^k W_i$, where $W_i = \alpha_i V_i U_i^\top$. We compute the SVD of the combined matrix, $W_{total} = P \Sigma Q^\top$. To determine how much each individual branch contributes to the global pre-activation feature space, we project each $W_i$ onto the global singular vectors. For each branch $i \in \{1, \dots, k\}$, we compute the branch-wise contribution $|\text{diag}(P^\top W_i Q)|$. For the fixed rank $r=4$, the spectral decomposition of the wide network layer is visualized in Figure \ref{fig:wide_spectrl}.


\end{document}